\colorlet{shadecolor}{pink} 
\theoremstyle{plain}
\newtheorem{theorem}{Theorem}[]
\newtheorem{lemma}[]{Lemma}
\newcommand{\vertiii}[1]{{\left\vert\kern-0.25ex\left\vert\kern-0.25ex\left\vert #1 
\right\vert\kern-0.25ex\right\vert\kern-0.25ex\right\vert}}
\newtheorem{definition}{Definition}[]
\newtheorem{remark}{Remark}[]
\DeclareMathOperator\supp{supp}
\def\x{{\mathbf x}}
\def\w{{\boldsymbol{\omega}}}
\def\e{{\boldsymbol{\nu}}}
\def\a{{\mathbf a}}
\def\e{{\mathbf e}}
\def\y{{\mathbf y}}
\def\v{{\mathbf v}}
\def\A{{\mathbf A}}
\def\c{{\mathbf c}}
\def\I{{\mathbf I}}
\def\P{{\mathbb{P}}}
\def\E{{\mathbb E}}
\def\O{{\mathcal O}}
\def\R{{\mathbb{R}}}
\def\B{{\mathbb{B}}}
\def\C{{\mathbb C}}
\def\S{{\mathcal S}}
\def\D{{\mathcal D}}
\def\X{{\mathcal X}}
\def\F{{\mathcal F}}
\def\N{{\mathcal N}}
\def\d{\textnormal{d}}
\newcommand{\ts}{\textsuperscript}
\begin{document}
\title{\bf Generalization Bounds for \\ Sparse Random  Feature Expansions}
\date{}

\author[1]{Abolfazl Hashemi}
\author[2]{Hayden Schaeffer} 
\author[1]{Robert Shi}
\author[1]{\\Ufuk Topcu}
\author[3]{Giang Tran}
\author[1]{Rachel Ward\thanks{Authors are listed in  alphabetical order. This work was supported in part by AFOSR MURI FA9550-19-1-0005, AFOSR MURI FA9550-21-1-0084, NSERC Discovery Grant RGPIN 2018-06135, NSF DMS-1752116, and NSF DMS-1952735. The code is available on our github page \url{https://github.com/GiangTTran/SparseRandomFeatures}
}}
\affil[1]{The University of Texas at  Austin}
\affil[2]{Carnegie Mellon University}
\affil[3]{University of Waterloo}
\maketitle

\begin{abstract}
Random feature methods have been successful in various machine learning tasks, are easy to compute, and come with theoretical accuracy bounds.  They serve as an alternative approach to standard neural networks since they can represent similar function spaces without a costly training phase. However, for accuracy,  random feature methods require more measurements than trainable parameters, limiting their use for data-scarce applications or problems in scientific machine learning. This paper introduces the sparse random feature expansion to obtain parsimonious random feature models.  Specifically, we leverage ideas from compressive sensing to generate random feature expansions with theoretical guarantees even in the data-scarce setting. In particular, we provide generalization bounds for functions in a certain class (that is dense in a reproducing kernel Hilbert space) depending on the number of samples and the distribution of features. The generalization bounds improve with additional structural conditions, such as coordinate sparsity, compact clusters of the spectrum, or rapid spectral decay. In particular, by introducing sparse features, i.e. features with random sparse weights, we provide improved bounds for low order functions. We show that the sparse random feature expansions outperforms shallow networks in several scientific machine learning tasks.
\end{abstract}

\section{Introduction}\label{sec:intro}
The \textit{sparsity-of-effects} or \textit{Pareto principle} states that most real-world systems are dominated by a small number of low-complexity interactions. This idea is at the heart of compressive sensing and sparse optimization, which computes a sparse representation for a given dataset using a large set of features. The feature spaces are often constructed using a random matrix, e.g., each element is independent and identically distributed from the normal distribution, or constructed using a bounded orthonormal system, e.g., Fourier or orthonormal polynomials. While completely random matrices are useful for compression, their lack of structure can limit applications to problems that require physical or meaningful constraints. On the other hand, while bounded orthonormal systems provide meaningful structure to the feature space, they often require knowledge of the sampling measure and the target functions themselves, e.g., that the target function is well-represented by polynomials.

In the high-dimensional setting, neural networks can achieve high test accuracy when there are reasonable models for the local interactions between variables. For example, a convolutional neural network imposes local spatial dependencies between pixels or nodes. In addition, neural networks can construct data-driven feature spaces that far exceed the limitations of pre-specified bases such as polynomials. However, standard neural networks often rely on back-propagation or greedy algorithms to train the weights, which is a computationally intensive procedure. Furthermore, the trained models do not provide interpretable results, i.e., they remain black-boxes. Randomized networks are a class of neural networks that randomize and fix the weights within the architecture \cite{block1962perceptron, rahimi2007random, rahimi2008uniform, maass2004computational, moosmann2006randomized}. When only the final layer is trained, the training problem becomes linear and can have a much lower cost than the non-convex optimization-based approaches. This method has motivated new algorithms and theory, for example, see \cite{rahimi2007random, rahimi2008uniform, rahimi2008weighted, sutherland2015error, yang2012nystrom, sriperumbudur2015optimal, chitta2012efficient, li2019towards}. Recently, generalization bounds for over-parameterized random features ridge regression were provided in \cite{mei2020generalization},  when the Tikhonov regularization parameter tends to zero. The analysis is asymptotic and is restricted to the ReLU activation function, with data and features drawn on the sphere.

In this work, we introduce a new framework for approximating high-dimensional functions in the case where measurements are \emph{expensive} and \emph{scarce}. We propose the  \textit{sparse random feature expansion} (SRFE), which enhances the compressive sensing approach by allowing for more flexible functional relationships between inputs, as well as a more complex feature space. The choice of basis is inspired by the random Fourier feature (RFF) method \cite{rahimi2007random, rahimi2008uniform}, which uses a basis comprised of simple (often trigonometric) functions with randomized parameters. In the RFF method, the model is learned using ridge regression, which leads to dense (or full) representations.  By using sparsity, our approach could be viewed as a way to leverage structure in the data-scarce setting while retaining the accuracy and representation capabilities of the randomized feature methods. In addition, the use of sparsity allows for  reasonable generalization bounds even in the very overcomplete setting, which is proving to be a powerful modern tool related to over-parameterized neural networks \cite{jacot2018neural, du2018gradient, li2018learning, arora2019fine}. 

In terms of the approximation error, the randomized methods can achieve similar results to those associated with shallow networks. In \cite{jones1992simple, barron1993universal}, it was shown that if the Fourier transform of the target function $f$, denoted by $\hat{f}$, has finite integral $\int_{\mathbb{R}^d} |\w| |\hat{f}(\w)| d\w$ then there is a two-layer neural network with $N$ terms that can approximate $f$ up to an $L^2$ error of $\O(N^{-\frac{1}{2}})$.   These results (and their generalizations) often require specific (greedy) algorithms to achieve. In addition, neural networks often only achieve good performance in the data-rich and over-parameterized regimes. 
On the other hand,  the RFF method achieves uniform errors on the order of $\O(N^{-\frac{1}{2}})$ for functions in a certain class (associated with the choice of the basis functions) without the need for a particular algorithm or construction \cite{rahimi2008uniform}. Generalization error bounds for random feature ridge regression from \cite{sriperumbudur2015optimal, rudi2017generalization, ullah2018, szabo2019kernel} also achieve the rate   $\O(N^{-\frac{1}{2}})$, provided the number of data samples grows with $N$ and satisfies certain statistical assumptions.  Our generalization bounds for random feature expansions obtained by $\ell_1$-minimization match this rate in the general setting without needing a rich training set.  Specifically, we show that if the underlying function is a low-order function, admitting a decomposition into a small number of functions each of which depends on only a few variables, then \emph{sparse} random feature expansions can achieve generalization bound $\O(N^{-\frac{1}{2}})$ with constants that depend on a polynomial (and not an exponential) of the dimension, in this sense, overcoming the curse of dimensionality.

One of the most popular techniques in the area of uncertainty quantification is the Polynomial Chaos Expansion (PCE). PCE models are built up from univariate orthonormal polynomial regression; in particular, each basis term is the product of univariate orthonormal polynomials and is characterized by the multi-index of polynomial degrees in each direction. The standard PCE approach solves for the coefficients of the polynomials using the ordinary least squares method. The sparse PCE has recently gained traction, where the coefficient vector is determined through sparse regression.  Many sparse regression methods used in PCE were originally developed for compressive sensing \cite{donoho2006compressed, candes2006stable, rauhut2012sparse, doostan2011non}. The success of sparse PCE is due in part to the method's ability to incorporate higher degree terms without overfitting. However, the polynomial basis must be orthogonalized with respect to the sampling measure.  Moreover, good performance is limited to functions which are well-represented by moderate degree polynomials. This serves as another motivation for the use of randomized features, which may increase the richness of the approximation.

\subsection{Contribution} 
\label{sec:contribution}

We propose a sparse feature model (the SRFE) which improves on compressive sensing and PCE approaches by utilizing random features from the RFF model. Also, the SRFE outperforms a standard shallow neural network in the limited data regime.  We incorporate sparsity in the proposed model in two ways. The first is in our approximation of the target function by using a small number of terms from a large feature space to represent the dominate behavior (this is the sparse expansion component). The second level of sparsity can be considered as side information on the variables and is incorporated by sampling random low order interactions between variables (the sparse features).  Building upon these ideas, as part of our theoretical contributions, we derive sample and feature complexity bounds such that the error between the SRFE and the target function is controlled by the richness of the random features, the compressibility of the representation, and the noise on the samples (formalized in \Cref{sec:functions}). This also shows the tractability of sparse expansions in the context of randomized feature models.

The SRFE offers additional freedom through redundancy of the basis and does not restrict the model class to low order interactions in the form of polynomials.  While our main results are stated for trigonometric features, extensions and applications with ReLU and other standard activation functions can be derived in the same way. In addition, our method and analysis could be extended to include different sampling strategies such as those used in the recovery of dynamical systems
\cite{schaeffer2018extracting, schaeffer2020extracting}.

In order to provide generalization bounds, we first characterize the approximation power of the best fit approximator; then, we bound the error between the best fit and the sparse random feature expansion. The best fit results are extensions of \cite{rahimi2007random, rahimi2008uniform}, but we provide the proof for completeness. The generalization bounds and the sparse approximation results are both novel. While we utilize standard coherence-based results for sparse recovery, we prove new bounds for the coherence and the sample complexity based on the randomized features (for both dense and sparse features). It is important to note that the bounds are meaningful even when the sparsity increases, which deviates from the standard compressing sensing results. In \cite{yen2014sparse}, a sparse random feature algorithm is proposed which iteratively adds random features by using a combination of LASSO and hard thresholding. In our work, we provide sample complexity, sparsity guarantees, and generalization bounds which did not appear in previous works. In addition, we introduce sparse feature weights within our model, which can help with the curse-of-dimensionality for approximating low order functions. 

The works of \cite{rosset2007,du2020few,rakotomamonjy2013learning} consider the problem of multi-task learning to learn prediction functions,  for $T$ known tasks that lead to the lowest regularized empirical risk.  This differs from our algorithmic and theoretical contributions, which focus on the setting of approximating high dimensional low order functions with unknown interactions.
In \cite{shi2007detecting,guo2012bayesian,lim2015learning}, the aim is to learn pairwise interactions with linear regression or logistic regression using sparsity-promoting approaches such as LASSO and group-LASSO. As a comparison, we provide generalization bounds which did not appear in previous works.
It is also worth noting that our method extends to any algorithm that uses coherence-based sparsity guarantees, for example, greedy methods such as orthogonal matching pursuit, and the alternative formulation of the basis pursuit or LASSO problem in \cite{figueiredo2007gradient}.

A related direction is that of sparse learning-based additive models for kernel regression \cite{ravikumar2009sparse,kandasamy2016additive,chen2017group,liu2020sparse}. In \cite{kandasamy2016additive}, the authors propose the shrunk additive least square approximation (SALSA) method to utilize the interactions among the variables/features, which in some sense, is related to our aim in this paper to leverage the low-order interaction. However, our approach differs from SALSA since we consider sparse feature selection. Furthermore, \cite{kandasamy2016additive} establish bounds on the expected generalization error while we provide high-probability generalization bounds for our proposed method. Recently, \cite{liu2020sparse} considered SALSA with an $\ell_1$ penalty and established high-probability generalization bounds; however, it is limited to kernel regression with exact kernels. In contrast with \cite{liu2020sparse}, we provide explicit sparsity guarantees along with the generalization bounds. Moreover, while \cite{kandasamy2016additive,liu2020sparse} focus on exact kernels, we leverage random features \cite{rahimi2007random,rahimi2008uniform,rahimi2008weighted} for efficient function approximation.

\section{Approximation via Sparse Random Feature Expansion (SRFE)}\label{sec:alg}

\textbf{Notation.} Throughout this paper, we use bold letters and bold capital letters to denote column vectors and matrices, respectively (e.g., $\x$ and $\A$). Let $[N] = \{1,\dots,N\}$ for any positive integer $N$ and $\|\c\|$ denote the Euclidean norm of a vector $\c$.  Throughout the paper, $f$ denotes functions of $d$ variables while $g$ denotes functions of $q\ll d$ variables. Furthermore, $\B^d(M)$ denotes the Euclidean ball in $\mathbb{R}^d$ of radius $M$.  
A vector $\c \in \C^N$ is said to be $s$-sparse if the number of nonzero components of $\c$ is at most $s$.
For a vector $\c\in \C^N$, let $\kappa_{s,p}(\c)$ denote the error of best $s$-term approximation to $\c$ in the $\ell_p$ sense, 
$\kappa_{s,p}(\c) := \min\{ \| \c - {\bf z} \|_{\ell_p}: {\bf z} \text{ is $s$-sparse} \}$ \cite{foucartmathematical}. Note in particular that $\kappa_{s,p}(\c) = 0$ if $\c$ is $s$-sparse, and 
$\kappa_{s,p}(\c) \leq \| \c \|_{\ell_p}$ always. 

We are interested in identifying an unknown function $f: \R^d \rightarrow \C$, belonging to a certain class (defined in \Cref{sec:functions}), from a set of samples. We assume that the  $m$ sampling points $\x_k$'s are drawn with a probability measure $\mu(\x)$ with the corresponding output values 
\begin{equation}
y_k = f(\x_k)+e_k, \quad |e_k|\leq E, \quad \forall k \in [m],
\end{equation}
where $e_k$ is the noise.

A fundamental approach in approximation theory relies on the assumption that $f$ has an approximate linear representation with respect to a suitable collection of $N$ functions $\phi_j(\x)$, $j\in[N]$:
\begin{equation}\label{eq:linear_basis_approximation}
f(\x) \approx \sum_{j=1}^N c_j\phi_j(\x).
\end{equation}
Important examples of such families of functions include real and complex trigonometric polynomials as well as Legendre polynomials \cite{rauhut2012sparse,rauhut2016interpolation,adcock2017compressed, adcock2018infinite,chkifa2018polynomial}.

Let $\A \in \mathbb{C}^{m \times N}$ be the random feature matrix with entries
$a_{k,j} = \phi_j(\x_k)$, then approximating $f$ in \Cref{eq:linear_basis_approximation} is equivalent to
\begin{equation}
\text{find } \quad \c \in \C^N \quad \text{such that }\quad {\bf y} \approx \A {\bf c},
\end{equation}
where $\c=[c_1, \ldots, c_N]^T$ and $\y=[y_1, \ldots, y_m]^T$.
In many applications, it is often the case that $f$ is well-approximated by a small subset of the $N$ functions, which implies that $\c$ is sparse. 
By exploiting the sparsity, the number of samples $m$ required to obtain an accurate approximation of $f$ may be significantly reduced. One effective approach to learn a sparse vector $\c$ is to solve the basis pursuit (BP) problem:
\begin{equation}\label{eq:BP}
\c^\sharp = \arg\min_{\c}\quad \|\c\|_1 \qquad \text{s.t. }\quad \|\A\c-{\bf y}\|\leq \eta\sqrt{m},
\end{equation}
where $\eta$ is a parameter typically related to the measurement noise. The conditions for stable recovery of any sparse vector $\c^\star$ satisfying ${\bf y} \approx \A {\bf c}^\star$ is extensively studied in compressed sensing and statistics \cite{candes2006near,cai2009recovery,foucartmathematical}.

In order to construct a sufficiently rich family of functions, we use a randomized approach. Specifically, consider a collection of functions $\phi(\x ;\w) = \phi( \langle \x, \w \rangle)$ parameterized by a weight vector $\w$ drawn randomly from a probability distribution $\rho(\w)$. Some popular choices for $\phi$ are 
\begin{enumerate}
\item \textit{Random Fourier features:} $\phi(\x;\w) = \exp(i \langle \x,\w \rangle)$.
\item \textit{Random trigonometric features:}\\ $\phi(\x;\w)= \cos( \langle \x,\w \rangle)$ and  $\phi(\x;\w)= \sin( \langle \x,\w \rangle)$.
\item \textit{Random ReLU features:} $\phi(\x;\w) = \max(\langle \x,\w \rangle, 0)$.
\end{enumerate}
Based on \cite{rahimi2007random, rahimi2008uniform}, we call such $\phi(\cdot\, ;\w)$ the random features. 
Altogether, we propose the \textit{Sparse Random Feature Expansion (SRFE)} to approximate $f$, which is summarized in \Cref{alg:B}.

\begin{algorithm}[H]
\caption{Sparse Random Feature Expansion (SRFE)}
\label{alg:B}
\begin{algorithmic}[1]
\STATE {\bfseries Input:} parametric basis function $\phi( \hspace{.5mm} ;\w) = \phi( \langle \x, \w \rangle)$, stability parameter $\eta$.
\STATE Draw $m$ data points $\x_k \sim \D_x$ and observe outputs $y_k = f(\x_k)+e_k$ with $|e_k|\leq E$.
\STATE Draw $N$ random weights $\w_j\sim \D_\omega$ (independent of the $\x_k$'s).
\STATE Construct the random feature matrix $\A \in \C^{m \times N}$ such that $a_{kj} = \phi(\x_k; \w_j)$.
\STATE Solve \vspace{-3mm}
\begin{equation}\nonumber
\c^\sharp = \arg\min_{\c}\quad \|\c\|_1 \qquad \text{s.t. }\quad \|\A\c-{\bf y}\|\leq \eta\sqrt{m}.
\end{equation}
\STATE (Optional) Pruning: Set $\S^\sharp$ to be the support set of the $s$ largest (in magnitude) coefficients of $\c^\sharp$ and redefine $\c^\sharp$ to be zero outside of $\S^\sharp$.
\STATE {\bfseries Output:} Form the approximation 
\vspace{-3mm}
\begin{equation}\nonumber
f^\sharp(\x) = \sum_{j=1}^N \c^\sharp_j\, \phi(\x;\w_j).
\end{equation}
\vspace{-3mm}
\end{algorithmic}
\end{algorithm}

\section{Low Order Functions}\label{sec:functions}
Often, high dimensional functions that arise from important physical systems are of low order, meaning the function is dominated by a few terms, each depending on only a subset of the input variables, say $q$ out of the $d$ variables where $q \ll d$ \cite{kuo2010decompositions,devore2011approximation}.
Low order functions also appear in other applications as a way to reduce modeling complexity. For example, in dimension reduction and surrogate modeling, sensitivity analysis is employed to determine the most influential input variables and thus to reduce the approximation onto a subset of the input space \cite{saltelli2008global}. The notion of low order functions are also connected to low-dimensional structures \cite{potts2019approximation, potts2019learning} and active subspaces \cite{fornasier2012learning, constantine2014active,constantine2017near}. Low order additive functions and sparsely connected networks are also well-motivated in computational neuroscience for simple brain architectures \cite{harris2019additive}.

Next, we formalize the notion of low order functions by extending the definition from \cite{kuo2010decompositions}.
\begin{definition}[\textbf{Order-$q$ Functions}]
\label{def:function_class3} Fix $d,q,K\in\mathbb{N}$ with $q\leq d$. A function $f: \mathbb{R}^d \rightarrow \mathbb{C}$ is an order-$q$ function of at most $K$ terms if there exist $K$ functions $g_1, \dots, g_K: \mathbb{R}^q \rightarrow \mathbb{C}$ such that 
\begin{align}\label{eq:order-q-def}
f(x_1,\dots, x_d) &= \frac{1}{K}\sum_{j=1}^K g_j(x_{j_1}, \dots, x_{j_q})= \frac{1}{K}\sum_{j=1}^K g_j(\x|_{\S_j}),
\end{align}
where $\S_j = \{j_1,\dots,j_q\}$ is a subset of the index set $[d]$, $\S_j\cap \S_{j'} =\varnothing$ for $j\not= j'$, and $\x|_{\S_j}$ is the restriction of $\x$ onto $\S_j$. 
\end{definition}

Note that in general, such a  decomposition is not unique. Furthermore, we are interested in the smallest $q$ to refer to the order of a function; trivially, any order-$q$ function $f: \mathbb{R}^d \rightarrow \mathbb{C}$ is also order-$d$. 

With this side information, we can further reduce the number of samples needed (see Theorem \ref{thm:low_order}). We modify \Cref{alg:B} to incorporate the potential coordinate sparsity into the weights $\w$. Since we do not know the set of active variables, we draw a number of sparse random feature weights on every subset $\mathcal{S} \subset [d]$ of size $|\mathcal{S}| = q$.  That is, for each such $\mathcal{S}$, we draw the on-support feature components randomly from the given distribution, and we set the remaining components to be zero. In particular, we have the following definition for our random features.

\begin{definition}[\textbf{$q$-Sparse Feature Weights}]\label{def:sparse_features}
Let $d,q,n\in\mathbb{N}$ with $q\leq d$ and a multivariate probability density $\zeta: \R^q \rightarrow \R$. A collection of $N = n{\genfrac(){0pt}{2}{d}{q}}$ weight vectors $\w_1, \dots, \w_N$ is said to be a \emph{complete set of $q$-sparse feature weights (drawn from density $\zeta$)} if they are generated as follows: For each subset $\mathcal{S}\subset  [d]$ of size $|\mathcal{S}| = q $, draw $n$  random vectors $z_1, \dots, z_n \in \mathbb{R}^q$ from $\zeta$, independent of each other and of all previous draws. Then, use $z_1, \dots, z_n$ to form $q$-sparse feature weights $\w_{1}, \dots, \w_{n} \in \mathbb{R}^d$ by setting  $\supp(\w_{k}) = \mathcal{S}$ and $\w_{k} \big|_{{\mathcal{S}}} = z_{k}$. 
\end{definition}

This leads to the Sparse Random Feature Expansion with Sparse Features (\textit{SRFE-S}) by modifying Step (3) of \Cref{alg:B}  to ``Draw a complete set of $N$ $q$-sparse feature weights $\w_j\in \R^d$ sampled from density $\zeta: \mathbb{R}^q \rightarrow \mathbb{R}$''. We summarize SRFE-S in \Cref{alg:B2}.

\begin{algorithm}[h]
\caption{Sparse Random Feature Expansion with Sparse Feature Weights (SRFE-S)}
\label{alg:B2}
\begin{algorithmic}[1]
\STATE {\bfseries Input:} parametric basis function $\phi(\x;\w) = \phi( \langle \x, \w \rangle)$, feature sparsity level $q$, probability density $\zeta: \mathbb{R}^q \rightarrow \mathbb{R}$, stability parameter $\eta$.
\STATE Draw $m$ data points $\x_k \sim \D_x$ and observe outputs $y_k = f(\x_k)+e_k$ with $|e_k|\leq E$.
\STATE Draw a complete set of $N$ $q$-sparse feature weights $\w_j\in \R^d$ sampled from density $\zeta: \mathbb{R}^q \rightarrow \mathbb{R}$ as defined in \Cref{def:sparse_features} (and independent of the $\x_k$'s).
\STATE Construct a random feature matrix $\A \in \C^{m \times N}$ such that $a_{kj} = \phi(\x_k;\w_j)$.
\STATE Solve \vspace{-3mm}
\begin{equation}\nonumber
\c^\sharp = \arg\min_{\c}\quad \|\c\|_1 \qquad \text{s.t. }\quad \|\A\c-{\bf y}\|\leq \eta\sqrt{m}.
\end{equation}
\STATE (Optional) Pruning: Set $\S^\sharp$ to be the support set of the $s$ largest (in magnitude) coefficients of $\c^\sharp$ and redefine $\c^\sharp$ to be zero outside of $\S^\sharp$.
\STATE {\bfseries Output:} Form the approximation 
\vspace{-3mm}
\begin{equation}\nonumber
f^\sharp(\x) = \sum_{j=1}^N \c^\sharp_j\,\phi(\x;\w_j).
\end{equation}
\end{algorithmic}
\end{algorithm}

\begin{remark}
 Drawing a complete set of $q$-sparse feature weights can be slow and cumbersome.   In the case where ${\bf \zeta}(x_1, \dots, x_q) = \prod\limits_{j=1}^q \zeta(x_j)$ is a tensor product of univariate densities, a significantly more practical method for drawing sparse features is as follows: we randomly generate a size $q$ subset of $[d]$ and then define the on-support values using $\zeta$. Alternatively, one can draw sparse feature weights by the following procedure: for every $k\in [N]$, the $j$-th entry of $\w_k\in\R^d$, $\w_{k,j},$ is set to 0 with probability $\left(1-\dfrac{q}{d}\right)$ and is drawn from $\zeta$, $\w_{k,j} \sim \zeta,$ with probability $\dfrac{q}{d}$. We further note that any side-information on the feasibility of the low order support subsets can be incorporated in the procedure outlined in \Cref{alg:B2} to further reduce the required number of sparse features.
\end{remark}

\section{Theoretical Analysis}\label{sec:thm}
In this section, we provide theoretical performance guarantees on the approximation given by \Cref{alg:B} and \Cref{alg:B2}.  In particular, we derive an explicit bound on the required number of data samples for a stable approximation within a target region. Given the connections to Fourier analysis and its desired  characteristics, we mainly focus on the case where $\phi(\x;\w) = \exp( i \langle \x,\w \rangle)$. Nonetheless, these results extend to other distributions and basis functions.

Before stating the main results, we recall some useful definitions. The first definition is a complex-valued extension of the class introduced in \cite{rahimi2008uniform}.  
\begin{definition}[\textbf{Bounded $\rho$-norm Functions}]\label{def:function_class1}
Fix a probability density function $\rho: \mathbb{R}^d \rightarrow \mathbb{R}$ and a function $\phi:  \mathbb{R}^d\times \mathbb{R}^d \rightarrow \mathbb{C}$.  A function $f: \mathbb{R}^d \rightarrow \mathbb{C}$ has finite $\rho$-norm with respect to $\phi(  \x; \w )$ if it belongs to the class
\begin{multline}
\mathcal{F}({\phi,\rho}) := \Bigg\{f(\x) = \int_{\w \in \mathbb{R}^d} \alpha(\w) \phi(  \x; \w )\;d\w \; : \; \|f\|_\rho := \sup_\w \left|\frac{ \alpha(\w)}{\rho(\w)}\right|<\infty\Bigg\} .
\end{multline}
\end{definition}

Note that in the above definition, if $\phi(  \x; \w ) = \exp( i \langle \x,\w \rangle)$, $\alpha: \mathbb{R}^d \rightarrow \mathbb{C}$ is the inverse Fourier transform of $f$.

\subsection{Generalization Error}
We state our main results here.  Recall that $\mu(\x)$ denotes the probability measure for sampling $\x$.

\begin{theorem}[\textbf{Generalization Bound for Bounded $\rho$-norm Functions}]\label{thm:generalization} Let $f \in \F(\phi,\rho)$, where $\phi(\x; \w)= \phi(\langle \x, \w \rangle) = \exp(i\langle \x, \w\rangle )$ and $\rho(\w)$ is the density corresponding to a  spherical Gaussian with variance $\sigma^2$, $ \mathcal N(\mathbf{0}, \sigma^2 \I_d).$ For a fixed $\gamma$, consider a set of data samples $\x_1,\dots, \x_m\sim \mathcal{N}(\mathbf{0}, \gamma^2 \I_d)$ and frequencies $\w_1,\dots, \w_N\sim \mathcal N(\mathbf{0}, \sigma^2 \I_d)$. The measurement noise $e_k$ is either bounded by $E=2\nu$ or to be drawn i.i.d. from $\N(0,\nu^2)$. Let $\A\in \C^{m\times N}$ denote the associated  random feature matrix where $a_{k, j}=\phi(\x_k;\w_j)$. Let $f^\sharp$ be defined from \Cref{alg:B} and \Cref{eq:BP} with $\eta=\sqrt{2(\epsilon^2\|f\|_\rho^2+E^2)}$ and with the additional pruning step
\begin{equation}\nonumber
f^\sharp(\x) := \sum_{j\in\S^\sharp} \c^\sharp_j\, \phi(\x;\w_j).
\end{equation}
where $\S^\sharp$ is the support set of the $s$ largest (in magnitude) coefficients of $\c^\sharp$. 

For a given $s$, if the feature parameters $\sigma$ and $N$, the confidence $\delta$, and the accuracy $\epsilon$  are chosen so that the following conditions hold:
\begin{enumerate}
\item $\gamma$-$\sigma$ uncertainty principle
\begin{equation}
\gamma^2\sigma^2\geq \frac{1}{2}\left(\left(\frac{\sqrt{41}(2s-1)}{2}\right)^{\frac{2}{d}}-1\right),
\end{equation}
\item Number of features
\begin{equation}
N =  \frac{4}{\epsilon^2}\left(1 + 4 \gamma\sigma  d \sqrt{1+\sqrt{\frac{12}{d} \log\frac{m}{\delta}}}+ \sqrt{\frac{1}{2}\log\left(\frac{1}{\delta}\right)}\right) ^2,
\end{equation}
\item Number of measurements
\begin{equation}
\begin{aligned}
&m \geq 4(2\gamma^2\sigma^2+1)^{d} \log \frac{N^2}{\delta}.
\end{aligned}
\end{equation}
\end{enumerate}
Then, with probability at least $1-5\delta$ the following error bound holds
\begin{equation}
\begin{aligned}
\sqrt{\int_{\R^d} | f(\x) - f^{\sharp}(\x) |^2\, \d\mu } &\leq  C' \left(1+\, N^{\frac{1}{2}} \, s^{-\frac{1}{2}} \, m^{-\frac{1}{4}} \log^{1/4} \left(\frac{1}{\delta}\right)\right) \, \kappa_{s,1}(\c^\star)\\
&+ C  \left( 1+ N^{\frac{1}{2}}m^{-\frac{1}{4}}\, \log^{1/4} \left(\frac{1}{\delta}\right)\right) \, \sqrt{\epsilon^2 \, \|f\|_\rho^2+4\nu^2},
\end{aligned}
\end{equation}
where $C,C'>0$ are constants and $\c^\star$ is the vector
\begin{equation}
    \c^\star=\frac{1}{N}\left[\frac{\alpha(\w_1)}{\rho(\w_1)}, \cdots,\frac{\alpha(\w_N)}{\rho(\w_N)}\right]^T.
\end{equation}
\end{theorem}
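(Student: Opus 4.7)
The plan is to bound $\|f - f^\sharp\|_{L^2(\mu)}$ by comparing $f^\sharp$ to the reference expansion $\tilde f(\x) := \sum_{j=1}^N c^\star_j\, \phi(\x;\w_j)$ built from the drawn frequencies and the oracle coefficients $\c^\star$ in the statement. I would combine three ingredients: (a) a best-fit bound $\|f - \tilde f\|_\infty \lesssim \epsilon\|f\|_\rho$ on a ball of the proper radius; (b) a coherence-based sparse-recovery bound on $\|\c^\sharp - \c^\star\|_2$ and $\|\c^\sharp - \c^\star\|_1$; and (c) a conversion from these coefficient errors to the $L^2(\mu)$ function error. For (a), because $\w_j\sim\rho$ and $c^\star_j = \alpha(\w_j)/(N\rho(\w_j))$ with $\phi(\x;\w) = e^{i\langle\x,\w\rangle}$, each summand $Nc^\star_j\phi(\x;\w_j)$ is an unbiased estimator of $f(\x)$ bounded in magnitude by $\|f\|_\rho$. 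A Rahimi--Recht style argument (Hoeffding pointwise, plus a Lipschitz-covering argument on the Euclidean ball that contains every $\x_k$ with probability $\geq 1-\delta$) yields the uniform bound whenever $N$ satisfies condition~(2); the $\gamma\sigma d$ dependence encodes the product of the ball's radius and the feature's Lipschitz constant. Squaring, summing, and adding the noise (conditioning on $\sum_k e_k^2\leq mE^2$ in the i.i.d.\ Gaussian case, which fails with only exponentially small probability in $m$) gives $\|\A\c^\star - \y\|\leq \eta\sqrt{m}$, so $\c^\star$ is feasible for basis pursuit and $\|\A(\c^\sharp - \c^\star)\|\leq 2\eta\sqrt{m}$ by the triangle inequality.

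For (b), set $\tilde\A := \A/\sqrt{m}$; its columns have unit norm because $|\phi|=1$. For $i\neq j$ the cross-correlation $\langle\tilde a_i,\tilde a_j\rangle = m^{-1}\sum_k e^{i\langle\x_k,\w_i-\w_j\rangle}$ concentrates around $\hat\mu(\w_i-\w_j) = \exp(-\gamma^2\|\w_i-\w_j\|^2/2)$. The $\gamma$-$\sigma$ uncertainty principle in condition~(1) is calibrated so that $\E_{\w}|\hat\mu(\w_i-\w_j)| = (1+2\gamma^2\sigma^2)^{-d/2}$ lies strictly below the classical sparse-recovery threshold $1/(2s-1)$; combining Markov on $|\hat\mu|^p$ to control the tails of $|\hat\mu|$ in $\w$, Hoeffding on $\langle\tilde a_i,\tilde a_j\rangle$ conditionally on $\w$, and a union bound over the $\binom{N}{2}$ pairs using the sample-complexity condition~(3) forces $\mu(\tilde\A)\leq 1/(2s-1)$ with probability $\geq 1-\delta$. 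The standard coherence-based stability theorem for basis pursuit then delivers
\begin{equation*}
\|\c^\sharp - \c^\star\|_2 \leq C_1\,\frac{\kappa_{s,1}(\c^\star)}{\sqrt{s}} + C_2\,\eta,
\qquad
\|\c^\sharp - \c^\star\|_1 \leq C_1\,\kappa_{s,1}(\c^\star) + C_2\sqrt{s}\,\eta,
\end{equation*}
and the pruning step only doubles constants via $\|\c^\sharp_{\text{pruned}} - \c^\star\|_2 \leq 2\|\c^\sharp - \c^\star\|_2 + \kappa_{s,2}(\c^\star) \leq C\bigl(\kappa_{s,1}(\c^\star)/\sqrt{s} + \eta\bigr)$ using the Stechkin inequality.

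For (c), the crude bound $\|\tilde f - f^\sharp\|_{L^2(\mu)} \leq \|\tilde f - f^\sharp\|_\infty \leq \|\c^\sharp - \c^\star\|_1$ combined with the $\ell_1$ half of (b) already produces the constant ``$1$'' prefactors on $\kappa_{s,1}(\c^\star)$ and on $\sqrt{\epsilon^2\|f\|_\rho^2 + 4\nu^2}$ in the claim. For the subleading, $N$-dependent prefactor I would introduce the population Gram matrix $G_{ij} := \int \phi(\x;\w_i)\overline{\phi(\x;\w_j)}\, d\mu(\x)$ and its empirical counterpart $\hat G := \tilde\A^*\tilde\A$, and write
\begin{equation*}
\|\tilde f - f^\sharp\|_{L^2(\mu)}^2 = (\c^\star - \c^\sharp)^* G (\c^\star - \c^\sharp) \leq \|\tilde\A(\c^\sharp - \c^\star)\|^2 + \|G - \hat G\|_{\mathrm{op}}\,\|\c^\sharp - \c^\star\|_2^2.
\end{equation*}
The first term is at most $4\eta^2$ by feasibility; for the second, Bernstein's inequality applied entrywise to $G - \hat G$, followed by a union bound over the $N^2$ entries, gives $\|G - \hat G\|_{\mathrm{op}} \leq \|G - \hat G\|_F \lesssim N\sqrt{\log(N^2/\delta)/m}$, whose square root is exactly $N^{1/2} m^{-1/4}\log^{1/4}(1/\delta)$. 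Combining with $\|f - \tilde f\|_{L^2(\mu)} \leq \|f - \tilde f\|_\infty \leq \epsilon\|f\|_\rho$ and allocating $\delta$ among the five bad events (ball containment of the $\x_k$'s, best-fit on the ball, Gaussian-noise concentration, coherence, Gram concentration) produces the stated $1-5\delta$ bound. The principal obstacle is step (c): the entrywise-Bernstein-to-Frobenius route is lossy by a factor $\sqrt{N}$ in the operator norm, and that factor is precisely the $N^{1/2}$ in the final prefactor, so matching the stated exponents essentially forces this crude route rather than a sharper matrix-Bernstein argument.
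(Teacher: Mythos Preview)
Your proposal captures the main architecture correctly, but part~(c) takes a genuinely different route from the paper, and a few details need repair.

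\textbf{Where the paper differs.} For the second error term $\|f^\star-f^\sharp\|_{L^2(\mu)}$, the paper does not compare the population and empirical Gram matrices. Instead it introduces \emph{ghost samples} $\mathbf{z}_1,\dots,\mathbf{z}_m\sim\mu$, independent of the training data, and applies McDiarmid's inequality to the random variable $v(\mathbf{z}_1,\dots,\mathbf{z}_m)=\|f^\sharp-f^\star\|_{L^2(\mu)}^2-\tfrac1m\sum_k|f^\sharp(\mathbf{z}_k)-f^\star(\mathbf{z}_k)|^2$, whose bounded-difference constant is $\tfrac{2N}{m}\|\c^\sharp-\c^\star\|_2^2$ by Cauchy--Schwarz; this is exactly where the $N^{1/2}m^{-1/4}\log^{1/4}(1/\delta)$ factor arises. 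The residual empirical average is then controlled by splitting $\c^\sharp-\c^\star=(\c^\sharp-\c^\star_s)+(\c^\star_s-\c^\star)$, applying the $2s$-RIP of the \emph{ghost} feature matrix to the $2s$-sparse piece, and bounding the tail piece trivially by $\kappa_{s,1}(\c^\star)$ since $|\phi|\le1$. The paper also handles $\|f-f^\star\|_{L^2(\mu)}$ by a separate, direct McDiarmid argument (its Lemma~1) requiring only $N\gtrsim\epsilon^{-2}\log(1/\delta)$ with no radius dependence; your route via $\|f-\tilde f\|_\infty$ is not immediately valid because the $L^\infty$ bound from~(a) holds only on the ball containing the samples, not on all of $\R^d$.

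\textbf{Gaps in your version of~(c).} First, after pruning, $\c^\sharp$ is no longer feasible for~\eqref{eq:BP}, so the step $\|\tilde\A(\c^\sharp-\c^\star)\|\le 2\eta$ ``by feasibility'' is unjustified; the paper's $2s$-sparse splitting above is what actually produces the standalone $\kappa_{s,1}$ term with coefficient~$1$. Your proposed $\ell_1$ route instead gives $\kappa_{s,1}+\sqrt{s}\,\eta$, introducing an unwanted $\sqrt{s}$ on the noise term. Second, entrywise Bernstein plus a union bound over $N^2$ entries yields $\|G-\hat G\|_{\mathrm{op}}\lesssim N\sqrt{\log(N^2/\delta)/m}$, which carries an extra $\log N$ compared with the paper's $\log(1/\delta)$; McDiarmid on ghost samples sidesteps this because the bounded-difference constant already absorbs the $N$ via Cauchy--Schwarz rather than via a union bound. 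Third, Stechkin gives $\kappa_{s,2}(\c^\star)\le s^{-1/2}\|\c^\star\|_1$, not $s^{-1/2}\kappa_{s,1}(\c^\star)$; the paper instead uses the direct estimate $\kappa_{s,2}(\c^\star)\le N^{-1/2}\|f\|_\rho\le\epsilon\|f\|_\rho$ (from $|c^\star_j|\le\|f\|_\rho/N$) to absorb it into~$\eta$.

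With these fixes your Gram-matrix route recovers the stated bound up to a $\log^{1/4}N$ factor; the paper's ghost-sample/McDiarmid argument is slightly sharper and handles pruning more naturally via the RIP.
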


\begin{remark}
Although the bounds include a factor of $N^{\frac{1}{2}}$, the error decreases with $N$ in many settings. For example, let's consider the noise-free case $E=0$ and set $s=N$ i.e. the upper bound for the sparsity. The first term becomes zero and the remaining term simplifies to
 \begin{equation}
\begin{aligned}
&\sqrt{\int_{\R^d} | f(\x) - f^\#(\x) |^2 \, \d\mu } \\
&\leq   C  \left( N^{-\frac{1}{2}}+ m^{-\frac{1}{4}}\, \log^{1/4}\left(\frac{1}{\delta}\right)\right) \,\left(1 + 4 \gamma\sigma  d \sqrt{1+\sqrt{\frac{12}{d} \log\frac{m}{\delta}}}+ \sqrt{\frac{1}{2}\log\left(\frac{1}{\delta}\right)}\right) \, \|f\|_\rho,\\
&\leq   \tilde{C} \,  N^{-\frac{1}{2}}\, \left( 1+  \frac{\log^{1/4}\left(\frac{1}{\delta}\right)}{\log^{1/4}\left(\frac{N^2}{\delta}\right)}\right) \,\left(1 + 4 \gamma\sigma  d \sqrt{1+\sqrt{\frac{12}{d} \log\frac{m}{\delta}}}+ \sqrt{\frac{1}{2}\log\left(\frac{1}{\delta}\right)}\right) \, \|f\|_\rho.
\end{aligned}
\end{equation} 
where we used the complexity bounds on $m$: 
\[
m \geq 4(2\gamma^2\sigma^2+1)^{d} \log \frac{N^2}{\delta} \geq 41 (2s-1)^2 \log \frac{N^2}{\delta} \geq 41 N^2\log \frac{N^2}{\delta}.
\]
Therefore, up to log terms, our generalization bound is $\mathcal{O}(\gamma\sigma N^{-\frac{1}{2}})=\mathcal{O}(N^{-\frac{1}{2} + \frac{1}{d}})$. 
\end{remark}
\begin{remark}
Consider a function $f\in \mathcal{F}({\phi,\rho})$ whose Fourier transform is supported within a compact set $\Omega \subset \mathbb{R}^d$ such that $\int_\Omega \rho(\w) \, d\w=: \beta<1$. Then the vector $\c^\star$ will be sparse with high probability, as its expected sparsity scales like $s = \beta \, N$. Thus, functions with compactly clustered spectral energy are well-approximated by the SRFE method. 
\end{remark}
\begin{remark}
Interestingly, we observe  the appearance of a Heisenberg-type uncertainty principle between ``frequency-domain'' and ``space-domain'' variances, $\sigma^2$ and $\gamma^2$  in \Cref{thm:generalization} \cite{heisenberg1927}. In \Cref{thm:generalization}, the product of the variances are bounded below by an $\mathcal{O}(s^{\frac{2}{d}})$ term.
\end{remark}
\Cref{thm:generalization} shows that the generalization bound consists of several terms.  The first term depends on the quality of the best $s$-term approximation of $f$ with respect to the random feature basis.   Since $\kappa_{s,1}(\c^\star)$ is bounded by $\frac{N-s}{N}\|f\|_\rho$, the first error term is related to the complexity of the function class.  Part of the second term is controlled by the strength of the random features in representing $f$. By decreasing $\epsilon$, thereby increasing $N$, we can increase the power of our representation and thus reduce this error term.  The other component of the second term is proportional to the level of noise on the samples and, in general, cannot be reduced arbitrarily. However, in the high-noise case, the bound shows that taking larger $m$ will improve the error bounds with respect to the noise.

When more information is known about the target function, the rates and complexity bounds improve (especially with respect to the dimension). This helps mitigate issues with the approximation of functions in high-dimensions.  This results is detailed below.

\begin{theorem}[\textbf{Generalization Bounds for Order-$q$ Functions}]\label{thm:low_order}
Let $f$ be an order-$q$ function of at most $K$ terms 
 as defined in \Cref{def:function_class3}, such that each term $g_\ell$, $\ell=1,2,\ldots,K,$ belongs to $\F(\phi,\rho)$ with $\phi(\x; \w) = \phi(\langle \x, \w\rangle)=\exp(i\langle \x, \w\rangle ),$ and $\rho: \mathbb{R}^q \rightarrow \mathbb{R}$ the density for a spherical Gaussian with variance $\sigma^2$, ${\cal N}({\bf 0}, \sigma^2 {\bf I}_q)$. 
 Let $\w_1, \ldots, \w_N$ be a complete set of $q$-sparse feature weights drawn from density $\rho$. Fix $\gamma$ and draw i.i.d. sampling points $\x_1,\dots, \x_m\sim \mathcal{N}(\mathbf{0}, \gamma^2 \I_d)$. The measurement noise $e_k$ is either bounded by $E=2\nu$ or to be drawn i.i.d. from $\N(0,\nu^2)$.
Let $\A\in \C^{m\times N}$ denote the associated random feature matrix where $a_{k, j}=\phi(\langle\x_k, \w_j \rangle)$ and $f^\sharp$ be defined from \Cref{alg:B2} and \Cref{eq:BP} with the additional pruning step and with $\eta=\sqrt{2\epsilon^2 {d\choose q} \vertiii{f}^2+2 E^2}$, where $\vertiii{f} = \frac{1}{K} \sum\limits_{j=1}^K\|g_j\|_\rho$. 

For a given $s$, suppose the feature parameters $\sigma$ and $N$, the confidence $\delta$, and the accuracy $\epsilon$ with $\epsilon{d\choose q}^{\frac{1}{2}}$ sufficiently small are chosen so that the following conditions hold:
\begin{enumerate}
\item $\gamma$-$\sigma$ uncertainty principle
\begin{equation}
\gamma^2\sigma^2\geq \frac{1}{2}\left(\left(\frac{\sqrt{41}(2s-1)}{2}\right)^{\frac{2}{q}}-1\right),
\end{equation}
\item Number of features
\begin{equation}\label{eqn:Nbound}
N=n{d\choose q} =  \frac{4}{\epsilon^2}\left(1 + 4 \gamma\sigma  d \sqrt{1+\sqrt{\frac{12}{d} \log\frac{m}{\delta}}}+ \sqrt{\frac{q}{2}\log\left(\frac{d}{\delta}\right)}\right)^2,
\end{equation}
\item Number of measurements
\begin{equation}
\begin{aligned}
&m \geq 4(2\gamma^2\sigma^2+1)^{\max\{2q-d,0\}} (\gamma^2\sigma^2+1)^{\min\{2q,2d-2q\}}\log \frac{N^2}{\delta}.
\end{aligned}
\end{equation}
\end{enumerate}
Then, with probability at least $1-5\delta$ the following error bound holds
\begin{equation}
\begin{aligned}
&\sqrt{\int_{\R^d} | f(\x) - f^{\sharp}(\x) |^2\, \d\mu } \leq  C' \left(1+\, N^{\frac{1}{2}} \, s^{-\frac{1}{2}} \, m^{-\frac{1}{4}} \log^{1/4} \left(\frac{1}{\delta}\right)\right) \, \kappa_{s,1}(\tilde{\c}^\star)\\
&\hspace{8em}+ C  \left( 1+ N^{\frac{1}{2}}m^{-\frac{1}{4}}\, \log^{1/4} \left(\frac{1}{\delta}\right)\right) \, \sqrt{\epsilon^2 {d\choose q} \vertiii{f}^2+ E^2},
\end{aligned}
\end{equation}
where $C,C'>0$ are constants and the vector $\tilde{\c}^* =[\tilde{\c}_1^*,\ldots, \tilde{\c}_N^*]^T\in\C^N$ is defined as follows
\begin{equation}\label{eq:def:c_star2}
\tilde{\c}_j^{\star}:=\frac{1}{K}\sum_{\ell=1}^K \tilde{c}_{\ell,j}^{\star}, \quad \text{with} \ \ \tilde{c}_{\ell,j}^{\star} =\begin{cases}
\frac{\alpha_\ell(\w_j)}{n\, \rho(\w_j)}, &\text{if}\ \supp(\w_j)= \S_\ell\\
0, &\text{otherwise}.
\end{cases}\end{equation}
The function $\alpha_\ell(\w)$ is the transform of $g_\ell$ using \Cref{def:function_class1} and \Cref{def:function_class3}.
\end{theorem}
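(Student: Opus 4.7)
The plan is to parallel the proof of \Cref{thm:generalization} with two structural changes that exploit the order-$q$ decomposition of $f$. First, instead of a single integral representation, I handle each summand $g_\ell$ separately, assigning the candidate coefficients $\tilde{\c}^\star$ to place all mass on the $n$ features whose support exactly matches $\S_\ell$. Second, the coherence estimate for the random feature matrix is sharpened using the fact that differences of two $q$-sparse Gaussian weights are supported on a set of cardinality $|\S_j\cup\S_k|\leq\min(2q,d)$, which shrinks the effective dimension in the Gaussian integrals. As in \Cref{thm:generalization}, the global error is split into a best-fit term (controlling feasibility of $\tilde{\c}^\star$ in the BP program) and a sparse-recovery term (measuring how close $\c^\sharp$ is to $\tilde{\c}^\star$), and the two are combined through a concentration-of-empirical-norm estimate.

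\textbf{Best-fit step.} The first substep verifies that $\|\A\tilde{\c}^\star-\y\|\leq \eta\sqrt m$ with high probability. For fixed $\ell$, the integral representation $g_\ell(\x|_{\S_\ell})=\int\alpha_\ell(\w)\phi(\langle\x,\w\rangle)\d\w$ is naturally supported on weights with $\supp(\w)\subseteq\S_\ell$, so the $n$ features matching this support yield an unbiased Monte Carlo estimator of $g_\ell$ with per-term standard deviation $O(n^{-1/2}\|g_\ell\|_\rho)$. Aggregating via a Hoeffding bound and a covering-number union bound over bounded $\x$, as in \cite{rahimi2008uniform} and the proof of \Cref{thm:generalization}, would give a uniform bound $|\sum_j\tilde{\c}^\star_j\phi(\x;\w_j)-f(\x)|\leq \epsilon\binom{d}{q}^{1/2}\vertiii{f}$ with probability $\geq 1-\delta$, provided $N$ meets the stated lower bound \eqref{eqn:Nbound}. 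The $\binom{d}{q}^{1/2}$ inflation arises from $n^{-1/2}=\binom{d}{q}^{1/2}N^{-1/2}$ and dictates the form of $\eta$.

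\textbf{Sparse recovery and the coherence estimate.} The recovery step applies a standard coherence-based BP stability bound (cf.\ \cite{foucartmathematical}): if the rescaled Gram matrix $m^{-1}\A^*\A$ has off-diagonal entries at most $1/(2s-1)$, then $\|\c^\sharp-\tilde{\c}^\star\|_2\leq C\bigl(\eta+\kappa_{s,1}(\tilde{\c}^\star)/\sqrt s\bigr)$. The key new calculation is the coherence estimate. For $\x\sim\N(\mathbf0,\gamma^2\I_d)$,
\[
\E_\x\!\left[\phi(\x;\w_j)\overline{\phi(\x;\w_k)}\right]=\exp\!\bigl(-\tfrac{\gamma^2}{2}\|\w_j-\w_k\|^2\bigr),
\]
and further averaging over sparse Gaussian $\w$'s factors coordinatewise as $(1+2\gamma^2\sigma^2)^{-|\S_j\cap\S_k|/2}(1+\gamma^2\sigma^2)^{-|\S_j\triangle\S_k|/2}$. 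Since $|\S_j\cap\S_k|\in[\max(2q-d,0),\,q]$ and $|\S_j\triangle\S_k|=2q-2|\S_j\cap\S_k|$, the extremes across pairs produce precisely the exponents $\max\{2q-d,0\}$ and $\min\{2q,2d-2q\}$ in the measurement bound. A Bernstein-type union bound over the $O(N^2)$ pairs upgrades this moment estimate to a high-probability coherence bound once $m$ exceeds the stated threshold, and the $\gamma$-$\sigma$ uncertainty principle is calibrated exactly to push the worst (on-support) expectation $(1+2\gamma^2\sigma^2)^{-q/2}$ below $2/(\sqrt{41}(2s-1))$.

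\textbf{Combining and main obstacle.} Combining the best-fit bound with the BP stability bound produces an $\ell_2$ estimate on $\c^\sharp-\tilde{\c}^\star$, and invoking the same empirical-norm to $L^2(\mu)$-norm concentration used in \Cref{thm:generalization} introduces the $N^{1/2}m^{-1/4}\log^{1/4}(1/\delta)$ prefactors in the final bound. The main obstacle is the coherence analysis: unlike the dense-weight case, columns of $\A$ are no longer statistically exchangeable and naturally stratify into same-support pairs, partially overlapping pairs, and disjoint-support pairs, each contributing differently to the coherence. Isolating the mixed exponents $\max\{2q-d,0\}$ and $\min\{2q,2d-2q\}$ requires a stratified Bernstein argument that tracks which overlap regime dominates in terms of both mean and variance, and then matches the required coherence threshold via the uncertainty principle. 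A secondary technical point is ensuring that the $\binom{d}{q}^{1/2}$ inflation in $\eta$ does not swamp the sparse recovery error, which is what motivates the hypothesis that $\epsilon\binom{d}{q}^{1/2}$ be sufficiently small.
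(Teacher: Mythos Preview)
Your proposal is correct and mirrors the paper's proof: split into a best-fit error (handled term-by-term on the $g_\ell$ via a McDiarmid/Rademacher argument, with the $\binom{d}{q}^{1/2}$ inflation entering exactly as you describe through $n^{-1/2}=\binom{d}{q}^{1/2}N^{-1/2}$) and a sparse-recovery error controlled by the coherence of $\A$, which factors coordinatewise precisely as you compute. The only simplification relative to your outline is that no stratified Bernstein argument is needed---the paper bounds $\Gamma_{j\ell}$ uniformly over all overlap regimes by its extremes $\Gamma_{\min}$ and $\Gamma_{\max}$, applies Bernstein once with deviation $t=\Gamma_{j\ell}\geq\Gamma_{\min}$, and takes a single union bound over the $\binom{N}{2}$ column pairs.
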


\begin{remark}
From the proof, the bound for $N$ is
\[N =\frac{4 }{\epsilon^2}\left(1 + 4 \gamma\sigma  d \sqrt{1+\sqrt{\frac{12}{d} \log\frac{m}{\delta}}}+ \sqrt{\frac{1}{2}\log\left(\frac{K}{\delta}\right)}\right)^2\]
and we obtain \Cref{eqn:Nbound} by noting that $K\leq {d\choose q} \leq \left(\frac{ed}{q}\right)^q$ (and redefining $\epsilon$).
\end{remark}
\begin{remark}
Note that in the bound for the number of measurements, the term $(\gamma^2\sigma^2+1)^{2}$ is in the range  $$ 2\gamma^2\sigma^2+1 \leq (\gamma^2\sigma^2+1)^{2} \leq (2\gamma^2\sigma^2+1)^{2}  
$$ and thus, if we choose the variances so that uncertainty principle holds with equality, then we see that $m$ scales between $s^4$ for $q\leq \frac{d}{2}$ and $s^2$ for $q=d$. 
\end{remark}

\begin{remark}
For low-order functions, Theorem \ref{thm:low_order} indicates a significant reduction in terms of the dimension $d$. In particular, for small $q$, the term ${d\choose q} \vertiii{f}$ (which includes a dimensional scale of ${d \choose q}^{\frac{1}{2}}$) should grow slower than the norm $\|f\|_{\rho'}$ where $\rho'$ is the probability density in the ambient space of dimension $d$ (assuming all terms exist). For a simple example, let $f$ be an order-$q$ function with $K=1$ and let $\alpha(\w)=1$ be compactly supported on the square defined by $(\w_1,\ldots,\w_q) \in [-1,1]^q$. If we applied \Cref{alg:B} in the ambient dimension $d$ with $\rho'$ defined as the uniform probability distribution over the square in dimension $d$, then $\|f\|_{\rho'}= 2^d$. Using sparse features with $\rho$ defined as the uniform probability distribution over the square in dimension $q$, we have $\vertiii{f} \leq {d \choose q}^{\frac{1}{2}} 2^q \leq d^{\frac{q}{2}} 2^q$. For small $q$ relative to $d$, we see that $\vertiii{f}$ will grow slower than $\|f\|_{\rho'}$ with respect to $d$ (in this example). 
\end{remark}

\subsection{Proof of Theorem \ref{thm:generalization}}\label{sec:thm:sketch}
In this section, we discuss our main technical arguments, which lead to \Cref{thm:generalization}.  Note that the generalization error can be written as
\begin{equation}\label{eq:error_sources}
\begin{aligned}
 \sqrt{\int_{\R^d} | f(\x) - f^{\sharp}(\x) |^2 \, \d\mu } &\leq   \sqrt{\int_{\R^d} | f(\x) - f^{\star}(\x) |^2 \, \d\mu }  +  \sqrt{\int_{\R^d} | f^\star(\x) - f^{\sharp}(\x) |^2 \, \d\mu } ,
\end{aligned}
\end{equation}
where 
\begin{equation}
f^{\star}(\x) = \sum_{j =1}^N c^{\star}_j \exp( i \langle \x, \w_j \rangle), \quad c^{\star}_j := \frac{\alpha(\w_j)}{N\rho(\w_j)}.
\label{eqn:fstar}
\end{equation}
We then aim to study these two sources of error in the following lemmata.
\subsubsection{Bounding the first error term}
We first extend an argument from \cite{rahimi2008uniform,rahimi2008weighted} to derive a bound on how well a function in $\F(\phi, \rho)$ can be approximated by SRFE and characterize the approximation power of $f^\star$, the best $\phi$-based approximation to $f$.
\begin{lemma}[\textbf{Generalization Error, Term 1}]\label{lem:error_term1}
Fix the confidence parameter $\delta > 0$ and accuracy parameter $\epsilon > 0$.  Recall the setting of \Cref{alg:B} and suppose $f \in \F(\phi,\rho)$ where $\phi(\x;\w) = \exp(i \langle \x,\w \rangle)$. The data samples $\x_k$ have probability measure  $\mu(\x)$ and weights $\w_j$ are sampled using the probability density $\rho(\w)$.
Consider the random feature approximation
\begin{equation}
f^{\star}(\x) := \sum_{j =1}^N c^{\star}_j \, \exp({i \langle \x, \w_j \rangle}), \quad \text{where} \ \ c^{\star}_j := \frac{\alpha(\w_j)}{N\rho(\w_j)}.
\end{equation}
If the number of features $N$ satisfies the bound
\begin{equation}
N \geq \frac{1}{\epsilon^{2}}\, \left(1 + \sqrt{2 \log\left(\frac{1}{\delta}\right)}\right)^2,
\end{equation}
then, with probability at least $1-\delta$ with respect to the draw of the weights ${\w_j}$ the following holds
\begin{equation}
\sqrt{\int_{\R^d} | f(\x) - f^{\star}(\x) |^2\, \d\mu }\leq \epsilon \|f\|_\rho.
\end{equation}
\end{lemma}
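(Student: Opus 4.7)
The plan is to view $f^\star - f$ as a Monte Carlo average of i.i.d. mean-zero random functions (indexed by the weights $\w_j$) and then control the $L^2(\mu)$-norm of that average via a mean computation plus a bounded-differences concentration step. Concretely, setting
\[
F_j(\x) := \frac{\alpha(\w_j)}{\rho(\w_j)}\,e^{i\langle \x,\w_j\rangle} - f(\x),
\]
the hypothesis $f\in\F(\phi,\rho)$ together with $f(\x)=\int\alpha(\w)e^{i\langle\x,\w\rangle}\d\w = \E_{\w\sim\rho}[\tfrac{\alpha(\w)}{\rho(\w)}e^{i\langle\x,\w\rangle}]$ gives $\E_{\w_j}[F_j(\x)]=0$ and, pointwise, $|F_j(\x)|\le 2\|f\|_\rho$ (since $|\alpha(\w)/\rho(\w)|\le\|f\|_\rho$ and $|f(\x)|\le\int|\alpha(\w)|\d\w\le\|f\|_\rho$). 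By construction $f^\star(\x)-f(\x)=\tfrac{1}{N}\sum_{j=1}^N F_j(\x)$, so the problem reduces to estimating the fluctuation of an i.i.d. empirical mean in $L^2(\mu)$.

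First I would bound the expectation of $Z^2 := \int_{\R^d}|f-f^\star|^2\d\mu$. Using Fubini and independence/mean-zero-ness of the $F_j$,
\[
\E[Z^2] = \int_{\R^d} \E\Bigl|\tfrac{1}{N}\sum_{j=1}^N F_j(\x)\Bigr|^2\d\mu(\x) = \frac{1}{N}\int_{\R^d}\E|F_1(\x)|^2\d\mu(\x) \le \frac{\|f\|_\rho^2}{N},
\]
where the last inequality uses $\E|F_1(\x)|^2 \le \E|\alpha(\w)/\rho(\w)|^2 \le \|f\|_\rho^2$. Jensen's inequality then yields $\E[Z]\le\|f\|_\rho/\sqrt{N}$.

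Second, I would apply McDiarmid's bounded-differences inequality to $Z$ regarded as a function of the independent random variables $\w_1,\dots,\w_N$. Replacing a single $\w_j$ by $\w_j'$ changes $f^\star(\x)$ by at most $\tfrac{2\|f\|_\rho}{N}$ uniformly in $\x$, hence by the triangle inequality in $L^2(\mu)$ (and the fact that $\mu$ is a probability measure)
\[
|Z(\w_1,\dots,\w_j,\dots,\w_N) - Z(\w_1,\dots,\w_j',\dots,\w_N)| \le \frac{2\|f\|_\rho}{N}.
\]
McDiarmid then gives, for each $t>0$,
\[
\P\!\bigl(Z \ge \E Z + t\bigr) \le \exp\!\Bigl(-\tfrac{N t^2}{2\|f\|_\rho^2}\Bigr),
\]
and choosing $t=\|f\|_\rho\sqrt{2\log(1/\delta)/N}$ produces, with probability at least $1-\delta$,
\[
Z \le \frac{\|f\|_\rho}{\sqrt{N}}\Bigl(1 + \sqrt{2\log(1/\delta)}\Bigr).
\]
The stated bound on $N$ makes the right-hand side at most $\epsilon\|f\|_\rho$, concluding the proof. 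There is no real obstacle: the only mild subtlety is justifying the bounded-differences constant in the $L^2(\mu)$-norm (handled by the triangle inequality plus the uniform $2\|f\|_\rho/N$ pointwise change), and tracking that both the mean bound and the deviation bound scale as $1/\sqrt{N}$ so they combine cleanly.
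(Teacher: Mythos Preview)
Your proposal is correct and follows essentially the same route as the paper's proof: both bound $\E\|f-f^\star\|_{L^2(\mu)}^2\le \|f\|_\rho^2/N$ via the variance-of-an-i.i.d.-average computation, pass to $\E[Z]$ by Jensen, establish the bounded-differences constant $2\|f\|_\rho/N$ for the $L^2(\mu)$-norm functional, and apply McDiarmid with the identical choice of $t$. The only cosmetic difference is your explicit introduction of the centered summands $F_j$, which the paper leaves implicit.
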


The proof of \Cref{lem:error_term1} is similar to the result of \cite{rahimi2008weighted}. The result in \Cref{lem:error_term1} is not constructive since  $\c^{\star}$ depends on the unknown function $\alpha(\w)$. Nonetheless, \Cref{lem:error_term1} establishes a useful bound on the first source of error in \eqref{eq:error_sources}.

\subsubsection{Bounding the second error term}
The next lemma controls the second source of error. 
\begin{lemma}[\textbf{Generalization Error, Term 2}]\label{lem:error_term2}
Let $f \in \F(\phi,\rho)$, where the basis function is $\phi(\x; \w)=\exp(i\langle \x, \w\rangle )$. For a fixed $\gamma$ and $q$, consider a set of data samples $\x_1,\dots, \x_m\sim \mathcal{N}(\mathbf{0}, \gamma^2 \I_d)$ with $\mu(\x)$ denoting the associated probability measure and weights $\w_1,\dots, \w_N$ drawn from $\mathcal N(\mathbf{0}, \sigma^2 \I_d)$. Assume that the noise is bounded by $E=2\nu$ or that the noise terms $e_j$ are drawn i.i.d. from $\N(0,\nu^2)$.
Let $\A\in \C^{m\times N}$ denote the associated  random feature matrix where $a_{k, j}=\phi(\x_k;\w_j)$. Let $f^\sharp$ be defined from \Cref{alg:B} and \Cref{eq:BP} with $\eta=\sqrt{2(\epsilon^2\|f\|_\rho^2+E^2)}$ and with the additional pruning step
\begin{equation}\nonumber
f^\sharp(\x) := \sum_{j\in\S^\sharp} \c^\sharp_j\, \phi(\x;\w_j),
\end{equation}
where $\S^\sharp$ is the support set of the $s$ largest (in magnitude) coefficients of $\c^\sharp$. Let the random feature approximation $f^\star$ be defined as
\begin{equation}
f^{\star}(\x) := \sum_{j =1}^N \c^{\star}_j \, \exp({i \langle \x, \w_j  \rangle)},
\end{equation}
where
 \begin{equation}
    \c^\star=\left[\frac{\alpha(\w_1)}{N\, \rho(\w_1)}, \cdots,\frac{\alpha(\w_N)}{N\,\rho(\w_N)}\right]^T.
\end{equation}
For a given $s$, if the feature parameters $\sigma$ and $N$, the confidence $\delta$, and the accuracy $\epsilon$  are chosen so that the following conditions hold:
\begin{align*}
&\gamma^2\sigma^2\geq \frac{1}{2}\left(\left(\frac{\sqrt{41}(2s-1)}{2}\right)^{\frac{2}{d}}-1\right),\\
&N =  \frac{4}{\epsilon^2}\left(1 + 4 \gamma\sigma  d \sqrt{1+\sqrt{\frac{12}{d} \log\frac{m}{\delta}}}+ \sqrt{\frac{1}{2}\log\left(\frac{1}{\delta}\right)}\right)\\
&m \geq 4(2\gamma^2\sigma^2+1)^{d} \log \frac{N^2}{\delta},
\end{align*}
then, with probability at least $1-4\delta$ the following error bound holds:
\begin{equation}
\begin{aligned}
\sqrt{\int_{\R^d} | f^\#(\x) - f^{\star}(\x) |^2 \, \d\mu }& \leq C' \left(1+\, N^{\frac{1}{2}} \, s^{-\frac{1}{2}} \, m^{-\frac{1}{4}} \log^{1/4} \left(\frac{1}{\delta}\right)\right) \, \kappa_{s,1}(\c^\star)\\
&+  C  \left( 1+ N^{\frac{1}{2}}m^{-\frac{1}{4}}\, \log^{1/4} \left(\frac{1}{\delta}\right)\right) \, \sqrt{\epsilon^2 \, \|f\|_\rho^2+4\nu^2}.
\end{aligned}
\end{equation}
where $C,C'>0$ are constants.
\end{lemma}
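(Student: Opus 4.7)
The plan is to split the argument into three linked estimates: (a) the oracle vector $\c^\star$ is feasible for the BP constraint, (b) the feature matrix $\A$ has low coherence with high probability, triggering a stable sparse-recovery guarantee on $\c^\sharp - \c^\star$, and (c) the coefficient-error bound is transferred to the target $L^2(\mu)$ norm by a uniform empirical-to-population concentration. For feasibility, I decompose $\y - \A\c^\star = [f(\x_k) - f^\star(\x_k)]_k + [e_k]_k$: \Cref{lem:error_term1} bounds the $L^2(\mu)$ mean square of the approximation residual by $\epsilon^2\|f\|_\rho^2$, and the residual is pointwise bounded by $2\|f\|_\rho$, so Hoeffding on the i.i.d.\ sum $\sum_k |f(\x_k)-f^\star(\x_k)|^2$ delivers a bound of order $m\epsilon^2\|f\|_\rho^2$ with high probability over $(\x_k)$; the noise sum is at most $mE^2 = 4m\nu^2$ either deterministically (bounded case) or via a $\chi^2$ tail (Gaussian case). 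Together these give $\|\y - \A\c^\star\| \leq \eta\sqrt{m}$, so $\c^\star$ is feasible for BP and the triangle inequality yields $m^{-1/2}\|\A(\c^\sharp - \c^\star)\| \leq 2\eta$.

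Next, I control the coherence of the column-normalized matrix $\A' = \A/\sqrt{m}$. Using $|\phi|=1$ and the closed form $\E_{\x\sim\N(\mathbf{0},\gamma^2\I_d)}\phi(\x;\w) = \exp(-\tfrac{\gamma^2}{2}\|\w\|^2)$, Hoeffding union-bounded over the $\binom{N}{2}$ off-diagonal entries gives, for all $j\neq l$,
\[
|\langle a'_j, a'_l\rangle| \leq \exp\bigl(-\tfrac{\gamma^2}{2}\|\w_j - \w_l\|^2\bigr) + C\sqrt{\log(N^2/\delta)/m}.
\]
Since $\|\w_j - \w_l\|^2 \sim 2\sigma^2 \chi^2_d$, a $\chi^2_d$ small-ball estimate combined with a union bound over pairs ensures $\min_{j\neq l}\|\w_j-\w_l\|^2 \gtrsim 2\sigma^2 d$ up to a logarithmic correction, so the conditional-expectation term is at most $(2\gamma^2\sigma^2+1)^{-d/2}$. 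The hypothesized measurement bound $m \geq 4(2\gamma^2\sigma^2+1)^d \log(N^2/\delta)$ and the $\gamma$--$\sigma$ uncertainty condition then force $(2s-1)\mu(\A') < 1$ with a definite margin, and a standard coherence-based stable $\ell_1$ recovery theorem yields $\|\c^\sharp - \c^\star\|_2 \leq C_1 \kappa_{s,1}(\c^\star)/\sqrt{s} + C_2 \eta$, preserved up to constants after the optional pruning step.

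To pass from $\ell_2$ coefficient error to $L^2(\mu)$ error, I use a uniform concentration estimate for $g_{\bm\beta}(\x) = \sum_j \beta_j\phi(\x;\w_j)$. Since $|\phi|=1$, Cauchy--Schwarz gives $\|g_{\bm\beta}\|_\infty \leq \sqrt{N}\|\bm\beta\|_2$, and Hoeffding applied to the i.i.d.\ sample $g_{\bm\beta}(\x_k)^2$ delivers
\[
\Big|\|g_{\bm\beta}\|_{L^2(\mu)}^2 - m^{-1}\|\A\bm\beta\|^2\Big| \leq C\, N\|\bm\beta\|_2^2\sqrt{\log(1/\delta)/m}.
\]
Setting $\bm\beta = \c^\sharp - \c^\star$, combining with $m^{-1}\|\A(\c^\sharp - \c^\star)\|^2 \leq 4\eta^2$ and the $\ell_2$ bound from the previous paragraph, and substituting $\eta^2 = 2(\epsilon^2\|f\|_\rho^2 + 4\nu^2)$, yields the claimed two-term bound with the $N^{1/2} m^{-1/4}\log^{1/4}(1/\delta)$ prefactor.

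The main obstacle is the coherence estimate: producing the sharp uncertainty-principle constant $(\sqrt{41}(2s-1)/2)^{2/d}$ requires balancing the Hoeffding deviation over $\binom{N}{2}$ pairs against a small-ball bound on the minimum pairwise Gaussian distance, and checking that the measurement lower bound $4(2\gamma^2\sigma^2+1)^d\log(N^2/\delta)$ absorbs both contributions exactly. A secondary care point is budgeting the $\delta$ across the four concentration events (approximation residual, noise tail, pairwise Hoeffding plus weight small-ball, and the empirical-to-$L^2(\mu)$ transfer) so that a single union bound produces the advertised $1-4\delta$ failure probability without inflating any constant.
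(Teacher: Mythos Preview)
Your argument has a genuine gap in step (c), the empirical-to-$L^2(\mu)$ transfer. You apply Hoeffding to the i.i.d.\ terms $g_{\bm\beta}(\x_k)^2$ with $\bm\beta = \c^\sharp - \c^\star$, but $\c^\sharp$ is the basis-pursuit solution computed \emph{from the very same training samples} $\x_1,\dots,\x_m$ via the matrix $\A$. Hence $\bm\beta$ is a measurable function of $(\x_k)_{k\leq m}$, and the terms $g_{\bm\beta}(\x_k)^2$ are neither independent nor centered at $\|g_{\bm\beta}\|_{L^2(\mu)}^2$ once you condition on $\bm\beta$. The Hoeffding step is therefore invalid as written. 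A uniform-over-$\bm\beta$ version would require covering an $N$-dimensional ball, inflating the deviation by a factor exponential in $N$ and destroying the bound; nor is $\bm\beta$ sparse, since $\c^\star$ is generically dense.

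The paper resolves this by drawing a \emph{fresh} sample $\mathbf{z}_1,\dots,\mathbf{z}_m\sim\mu$ independent of the training data, so that $\c^\sharp$ is fixed relative to the $\mathbf{z}_k$'s, and then applying McDiarmid to $v(\mathbf{z}_1,\dots,\mathbf{z}_m)=\int|f^\sharp-f^\star|^2\,d\mu-\frac{1}{m}\sum_k|f^\sharp(\mathbf{z}_k)-f^\star(\mathbf{z}_k)|^2$. The remaining empirical term on the fresh sample is controlled by splitting through the best $s$-term truncation $f_s^\star$ of $f^\star$: the vector $\c^\sharp|_{\S^\sharp}-\c_s^\star$ is $2s$-sparse (here the pruning step is essential), so the $2s$-RIP on the fresh feature matrix $\tilde{\A}=[\phi(\mathbf{z}_k;\w_j)]$ (inherited from the same coherence estimate) converts $m^{-1/2}\|\tilde{\A}(\c^\sharp|_{\S^\sharp}-\c_s^\star)\|$ back to $\|\c^\sharp|_{\S^\sharp}-\c_s^\star\|_2$, and the $\kappa_{s,1}(\c^\star)$ tail handles the rest.

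A secondary point: in step (a) you invoke the $L^2(\mu)$ bound of \Cref{lem:error_term1} plus Hoeffding to control $\sum_k|f(\x_k)-f^\star(\x_k)|^2$. This works but introduces an extra $O(\sqrt{m}\,\|f\|_\rho^2)$ fluctuation that must be absorbed. The paper instead uses the \emph{uniform} bound $\sup_{\|\x\|\leq R}|f(\x)-f^\star(\x)|\leq\epsilon\|f\|_\rho$ (which is precisely what the stronger $N$-condition in the hypothesis delivers, with $R$ chosen via a Gaussian concentration so all $\x_k\in\B^d(R)$), giving the inequality $\sum_k|f(\x_k)-f^\star(\x_k)|^2\leq m\epsilon^2\|f\|_\rho^2$ deterministically on that event.
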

The proof of this lemma (see \ref{sec:general_proofs}) relies on demonstrating that given the assumptions on the data samples $\x_k$ and random weights $\w_j$, the corresponding random feature matrix $\A$ (see Step 4 in \Cref{alg:B}) has a small mutual coherence $\mu_\A$, which we recall below.
\begin{definition}[\textbf{Mutual Coherence \cite{foucartmathematical}}]\label{def:coherence}
Let $\A\in\C^{m\times N}$ be a matrix with columns $\a_1, \dots, \a_N$. The mutual coherence of $\A$ is defined as
\begin{equation}\label{eq:coherence}
\mu_\A = \sup_{\ell \neq j} \Bigg\{|\mu_{j\ell}|, \ \mu_{j\ell}:=\frac{ \langle \a_j,  \a_\ell \rangle}{ \| \a_j \| \| \a_{\ell} \|} \Bigg\}.
\end{equation}
\end{definition}
To establish \Cref{lem:error_term2}, we argue that a small mutual coherence $\mu_\A$ is itself a consequence of the  \textit{bounded separation} of the randomly drawn weights. That is, consider a collection of random weights $\{\w_j \}_{j=1}^N$ in $\R^d$. 
For $\gamma > 0$ and a function $\psi: \R^d \rightarrow \R$, we define the quantities
\begin{equation}
\begin{aligned}
&\Gamma_{j \ell} := \psi\left(\gamma(\w_{j} - \w_{\ell})\right), \quad
\Gamma_{min} := \min_{j \neq \ell} \Gamma_{j \ell} , \quad   \Gamma_{max} := \max_{j \neq \ell} \Gamma_{j \ell}.
\end{aligned}
\end{equation}
We can quantify its separation with respect to $\psi$ by bounding $\Gamma_{max}$ and $\Gamma_{min}$ by values depending on $N$ and other dimensional constants. In the setting of \Cref{thm:generalization} where the sampling points $\x_i$'s are i.i.d. Gaussian, the bounded separations hold for $\psi\left(\gamma(\w_{j} - \w_{\ell})\right) = \exp\left(-2\gamma^2\pi^2\| \w_{j} - \w_{\ell} \|^2\right)$. Consequently, by utilizing the fact that the weights $\w$'s are normally distributed, we show that the collection $\{\w_j \}_{j=1}^N$ has bounded separation by establishing  bounds on $\Gamma_{max}$ and $\Gamma_{min}$ depending on $N$. 

Given the bounds on $\Gamma_{max}$ and $\Gamma_{min}$, by employing the Bernstein's inequality, we then establish that $\mu_\A\leq 2 \Gamma_{max}$ with high probability, as long as $m\geq \frac{4}{\Gamma_{min}^2}\log \frac{N^2}{\delta}.$
Consequently, we utilize a result from compressive sensing regarding the stability of the BP formulation (see, e.g. \cite{foucartmathematical}) to complete the proof of \Cref{lem:error_term2}.

\section{Experimental Results}\label{sec:exp}
In the first example, we show that \Cref{alg:B} outperforms a shallow neural network on the approximation of an order-2 function:
\begin{align*}
f(x_1,\ldots,x_{10}) = \frac{1}{10} \, \sum_{\ell=1}^{9} \, \frac{\exp({- x_\ell^2})}{1+x_{\ell+1}^2}
\end{align*}
in the data-scarce regime. For \Cref{alg:B}, we set $\eta=0.01$,  $q=2$ or $q=10$, $\sigma = 1$, and the bias $p\sim \mathcal{U}[0,2\pi]$. In all of the examples we set $\phi(\x;\w) = \sin(\langle \x,\w \rangle)$, unless otherwise specified. We define the relative testing error to be:
\begin{align*}
\mathrm{Error}  = \sqrt{\frac{\sum_{k \in \mathrm{Test}} |f(\x_k)-f^\sharp(\x_k) |^2}{\sum_{k \in \mathrm{Test}} |f(\x_k) |^2}},
\end{align*}
where $f^\sharp$ denotes the solution of \Cref{alg:B} or the benchmarks.

In \Cref{FigComparison}, we compare the SRFE (with $N=5000$) to a two-layer ReLU network with $500$ and $5000$ trainable parameters. The ReLU network with $500$ trainable parameters is included so as to match the number of active parameters in the SRFE. The SRFE with $q=d=10$ is more accurate than the shallow network in this data regime. When $q=2$, the error of SRFE-S is smaller than that of the SRFE results with $q=d$ and is one order of magnitude smaller than the neural network.

\begin{figure}[t]
\begin{center}
\centerline{\includegraphics[width=3in]{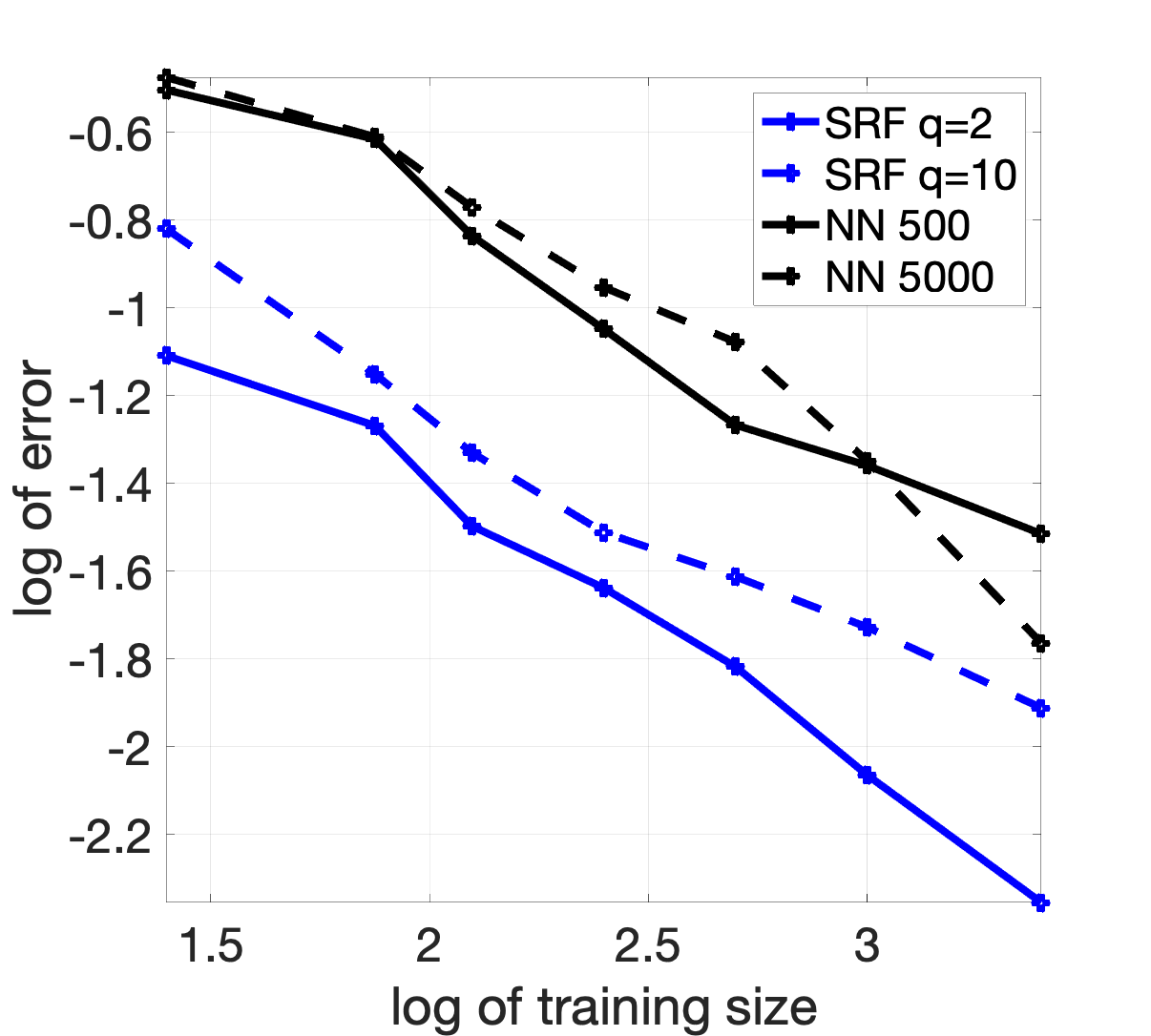}}
\caption{\small \textbf{Function Approximation:} Comparison of relative testing error versus the size of the training set for the sparse random feature model with $q=2$ and $q=10$ and for the two-layer ReLU network using 500 and 5000 trainable parameters.}
\label{FigComparison}
\end{center}
\vspace{-8mm}
\end{figure}

\subsection{Overfitting and Noise}
In this example, we provide a visual comparison of the recovery of one-dimensional functions using the SRFE algorithm and the ordinary least squares (OLS) approach. The first plot of \Cref{Fig:Comparison1d} is the target function (a sine packet), the second and third plots are the approximations using the SRFE and the OLS methods respectively with the same 200 randomly sampled points. The features are sampled using $\sigma = 2\pi$. Note the appearance of high-frequency aliasing with the OLS approximation.

\begin{figure*}[t]
\begin{center}
\includegraphics[width=1.5in]{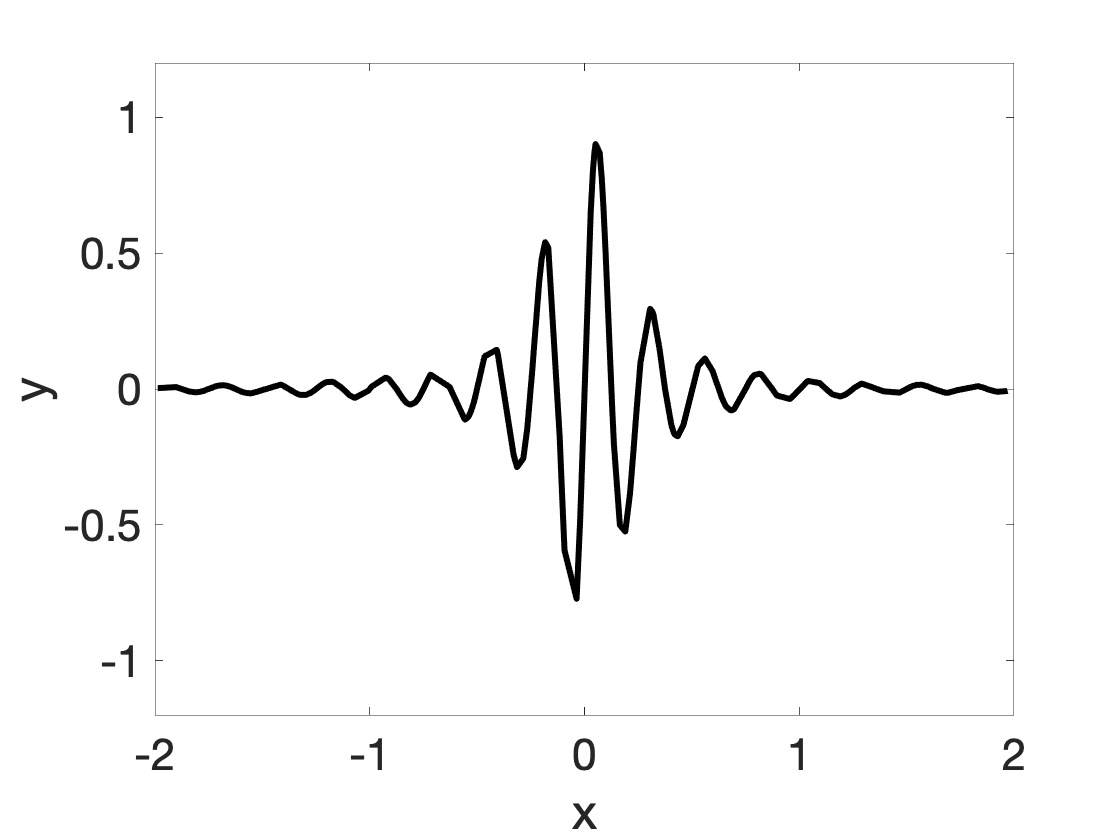} 
\includegraphics[width=1.5in]{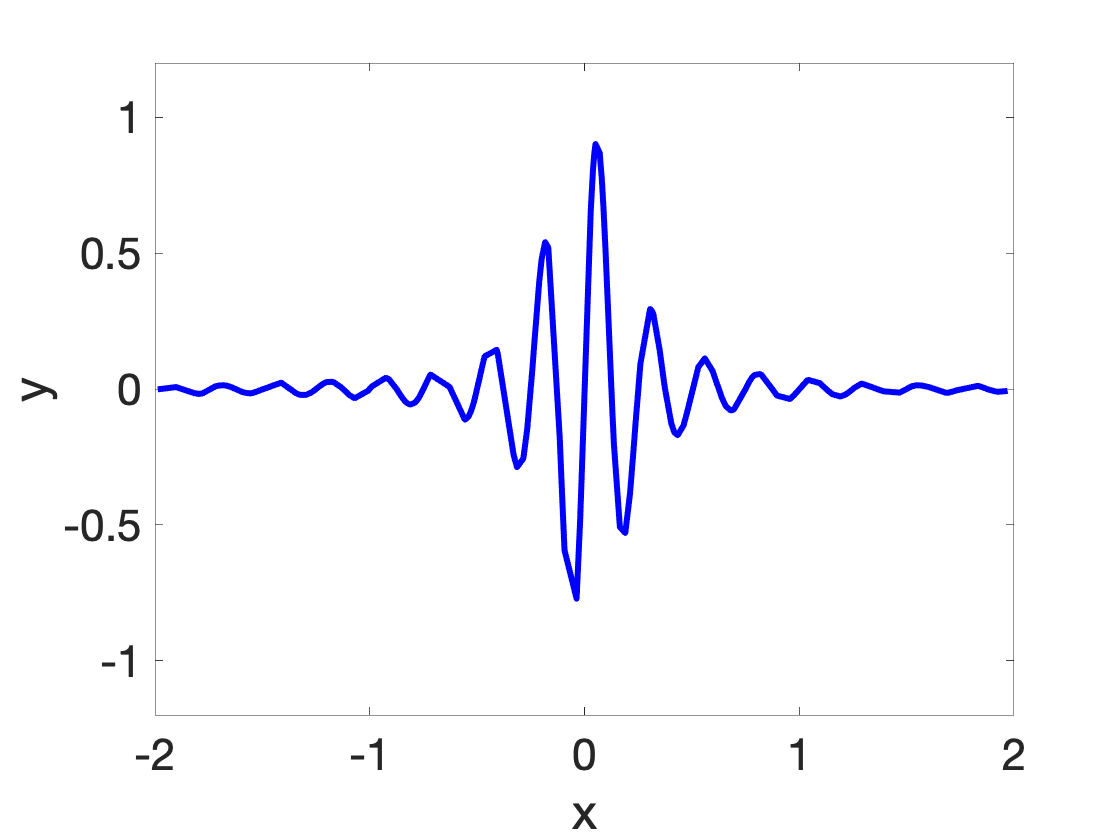} 
\includegraphics[width=1.5in]{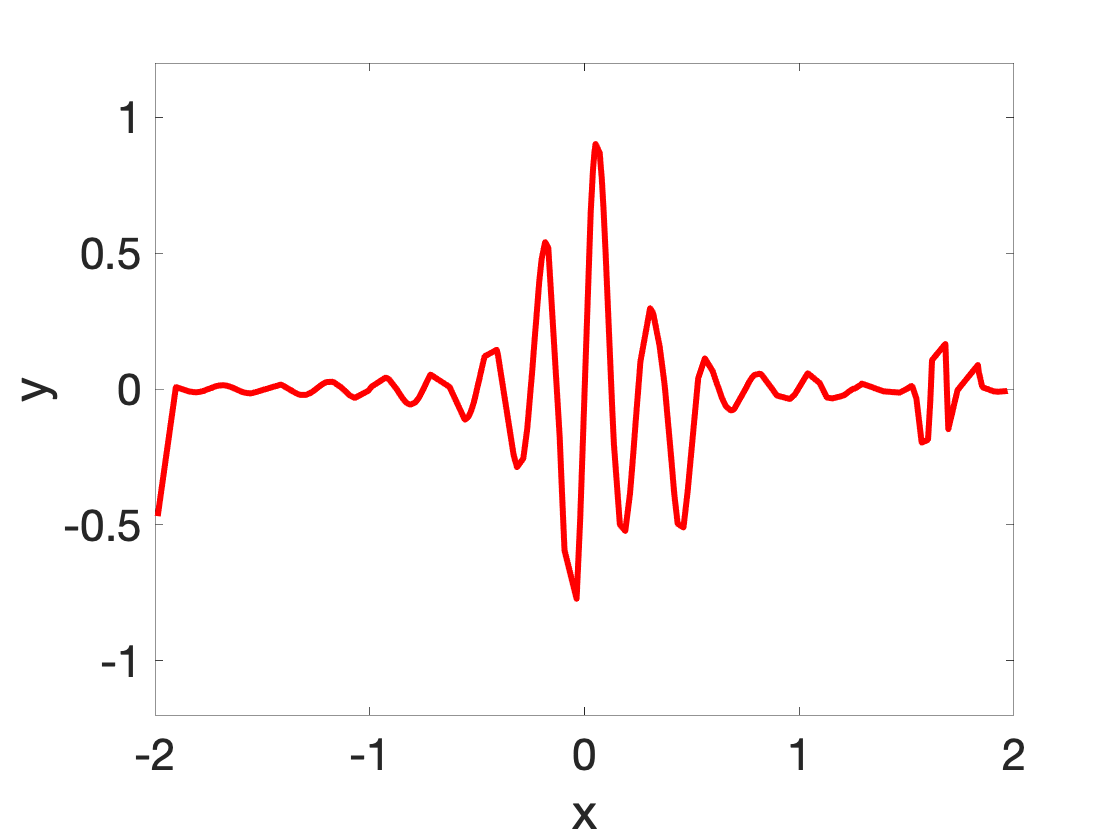} 
\caption{\small \textbf{Comparison, Overfitting}: The first figure is the target function, the second and third figures are the approximations via the SRFE and the OLS methods respectively with the same 200 randomly sampled points. 
}
\label{Fig:Comparison1d}
\end{center}
\vspace{-6mm}
\end{figure*}

In \Cref{Fig:Comparison1d2}, noisy one dimensional data is considered. The first column includes the Runge function (top) and a triangle function (bottom) each with $5\%$ relative noise. The second and third columns are the approximations using the SRFE and the OLS methods respectively with the same 200 randomly sampled points. The first row uses $\sigma = \pi$ and the second row uses $\sigma = 2\pi$. The results using the SRFE are more accurate and contain less noise artifacts. Note that since the basis is trigonometric, the approximations are smooth. The OLS results have overfit the data, even when the feature parameter $N$ is varied.

\begin{figure*}[t]
\begin{center}
\includegraphics[width=1.5in]{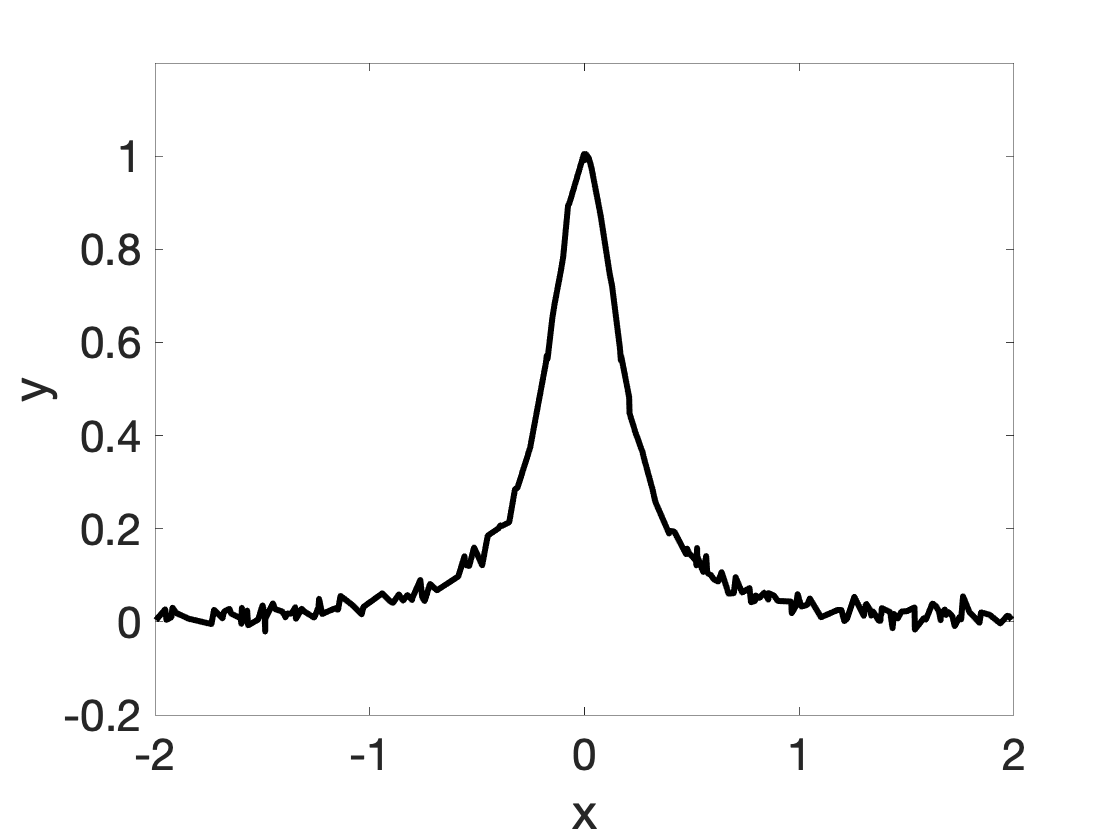} 
\includegraphics[width=1.5in]{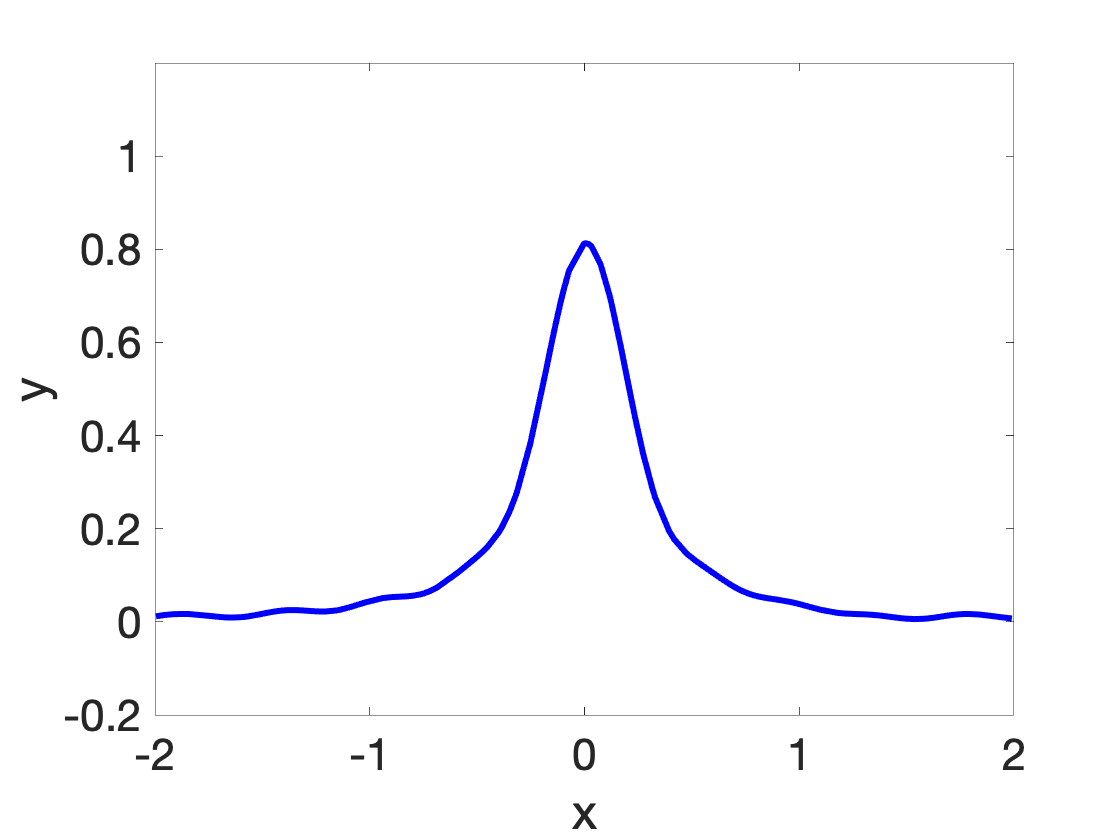} 
\includegraphics[width=1.5in]{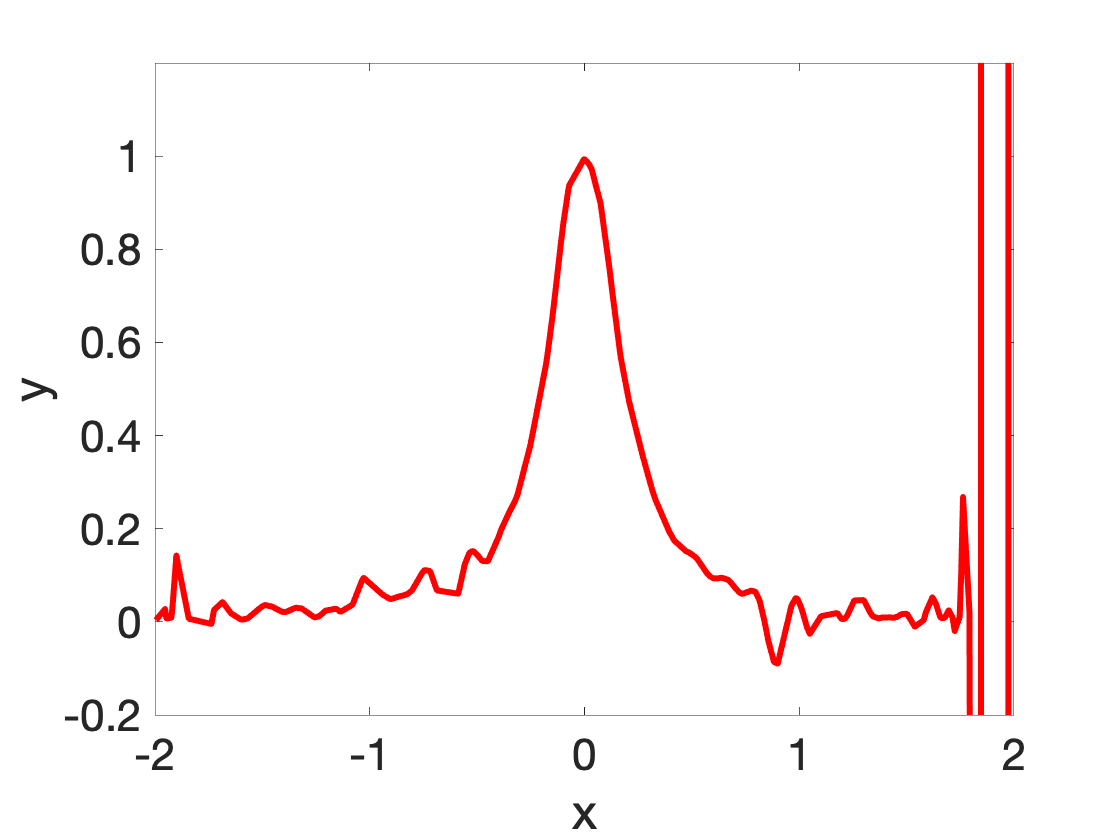} \\
\includegraphics[width=1.5in]{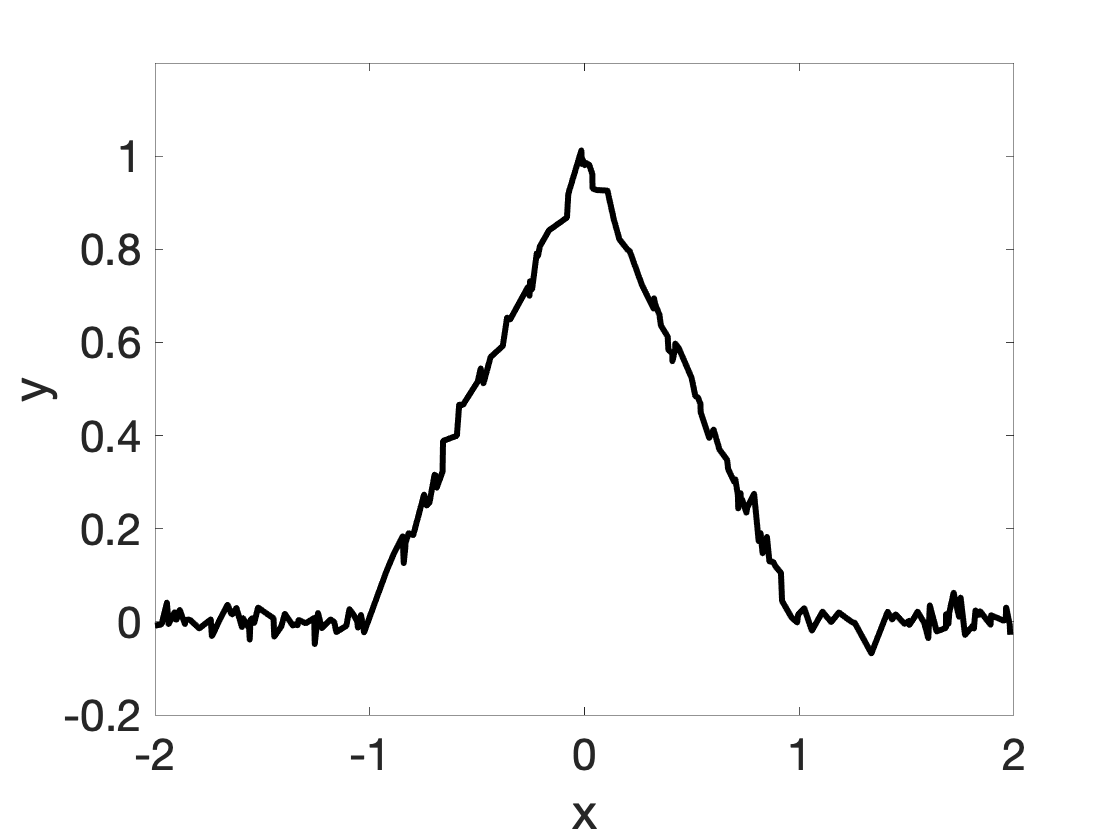} 
\includegraphics[width=1.5in]{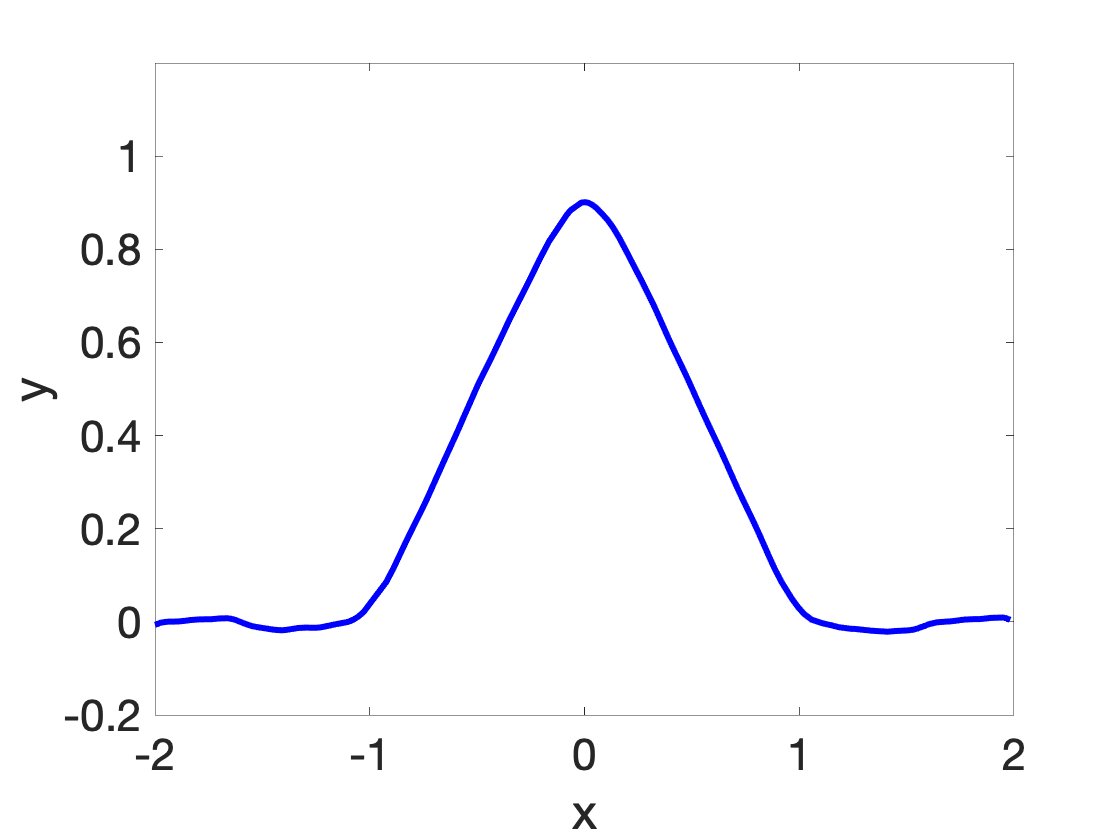} 
\includegraphics[width=1.5in]{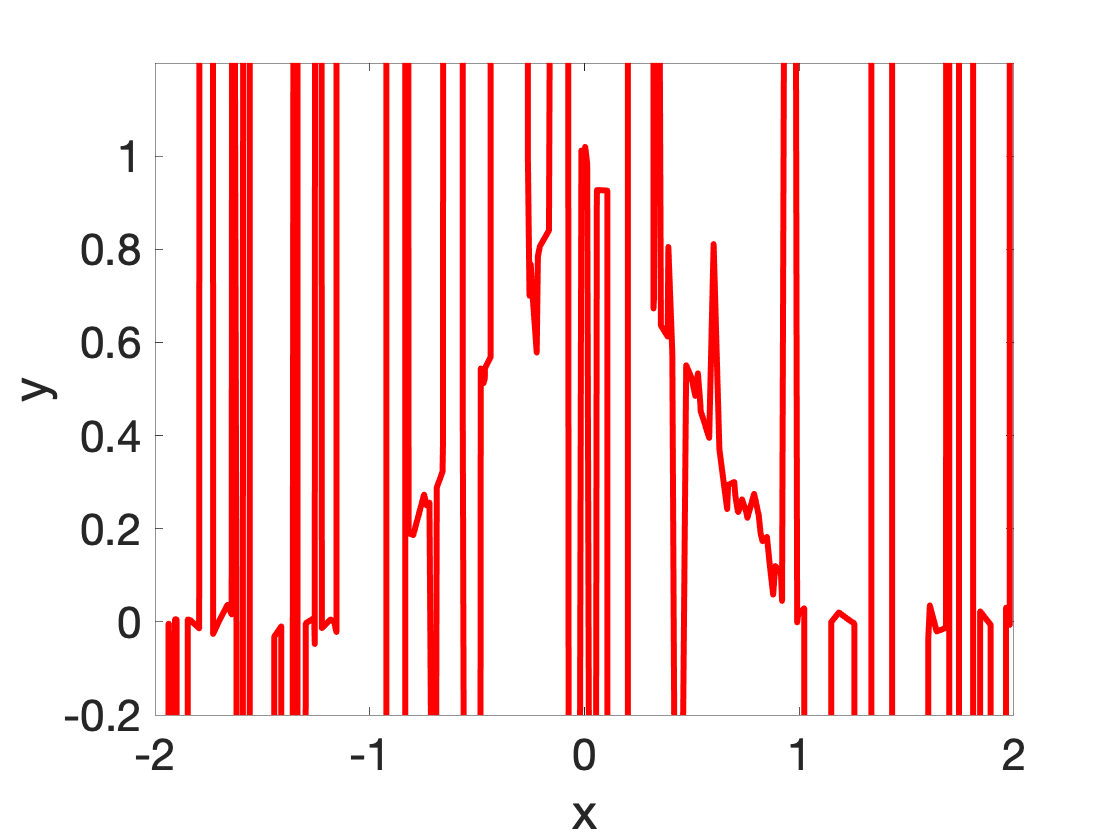} 
\caption{\small \textbf{Comparison, Noise}: The first column includes the Runge function (top) and a triangle function (bottom) each with $5\%$ relative noise. The second and third columns are the approximations via the SRFE and the OLS methods respectively with the same 200 randomly sampled points.  
}
\label{Fig:Comparison1d2}
\end{center}
\end{figure*}

\subsection{Low Order Approximations}

\begin{table*}[t]
\centering 
\footnotesize
\begin{tabular}{|c||c|c|c|c|c|c|}\hline
$f(\x)$&   $\sigma$  & $d$ & $q=1$ & $q=2$ & $q=3$ & $q=5$   \\ \hline   \hline 
$\left(\sum_{i=1}^d x_j\right)^2$ 
& $0.1$
& $1$
&{$0.82$}
&\color{purple}{$5.71\times 10^{-6}$}
&$6.92\times 10^{-5}$
&$8.3\times 10^{-4}$
\\ \hline 
$(1+\|\x\|_2^2)^{-1/2}$ 
&1
& 5
&$3.27$
&$1.60$
&$1.95$
&\color{purple}{$1.72$}
\\ \hline
$\sqrt{1+\|\x\|_2^2}$ 
&$1$
& 5
&$1.02$
&$0.73$
&$0.80$
&\color{purple}{$1.10$}
\\ \hline
$\mathrm{sinc}(x_1)\mathrm{sinc}(x_3)^3+\mathrm{sinc}(x_2)$
&$\pi$
& $5$
&$12.90$
&\color{purple}{$1.19$}
&$1.13$
&$3.51$
\\ \hline
$\frac{x_1x_2}{1+x_3^6}$
&$1$
&$5$
&$100.30$
&$21.53$
&\color{purple}{$4.95$}
&$5.06$
\\ \hline
$\sum_{i=1}^{d}\exp(-|x_i|)$
&$1$
&$100$
&\color{purple}{$0.91$}
&$1.43$
&$1.57$
&$1.96$
\\ \hline
\end{tabular}
\caption{\small \textbf{Low Order Examples} The table contains the relative test error (as a percentage) for approximating various functions using different $q$ values. The purple values represent the order of the function.  We fix $m=1000$ and $N=10000$ with random sine features.  We draw $\x\sim \mathcal{U}[-1, 1]^d$ and the nonzero values of $\w$ are drawn from $\mathcal{N}(\mathbf{0}, \sigma^2)$.}\label{tablevarious}
\end{table*}

In \Cref{tablevarious}, we test the effect of varying $q$ for different functions using \Cref{alg:B2} and recording the relative errors. The highlighted (purple) values represent the explicit order of the function. We fixed $m=1000$, $N=10000$ and used the random sine features.  The data is sampled from $\mathcal{U}[-1, 1]^d$ and the nonzero values of $\w$ are drawn from $\mathcal{N}(\mathbf{0}, \sigma^2)$, where $\sigma$ and $d$ are included in the table for each example.

In the second and third examples, while the functions are order $q=d$ functions, they enjoy better accuracy for $q=2$. This could be due to several phenomena. The first is that, with fixed $m$ and $N$, the error may increase as $q$ increase (see \Cref{thm:low_order}). However, this should partially be mitigated since we chose $N=10000$ large enough. Another reason is that, with respect to some expansion (i.e. Fourier or Taylor), the functions can be written as an order $q<d$ function within some level of accuracy. This motivates further investigations in future work. The other examples show a clear transition when the correct range for $q$ is obtained.

\subsection{HyShot 30 Data}
\begin{table*}[t]
\footnotesize
\centering
\begin{tabular}{|c||c|c|c|c|}\hline
\textbf{HyShot 30} &   $N=100$ & $N=200$ & $N=400$ & $N=800$ 
\\ \hline 
SRFE with Sine
&$6.95$
&$6.23$
&$5.76$
&$5.64$
\\ \hline
SRFE with ReLU
&$1.40$
&$1.45$
&$1.51$
&$1.59$
\\ \hline
Random Fourier Features
& $84.23$
& $89.99$
& $ 95.17$
& $97.84$
\\ \hline
Two-layer ReLU Network
& $7.29$
& $11.50$
& $ 11.19$
& $11.33$
\\ \hline
\textbf{NACA Sound} &   $N=250$ & $N=1500$ & $N=5000$ & $N=10000$ 
\\ \hline 
SRFE (Train)
&$3.22$
&$2.30$
&$2.30$
&$2.31$
\\ \hline
SRFE (Test)
&$3.22$
&$3.04$
&$2.77$
&$2.78$
\\ \hline
SRFE (Average Sparsity)
&$250$
&$364.4$
&$185.7$
&$185.7$
\\ \hline
Random Fourier Features (Train)
& $3.22$
& $0.25$
& $ 0.20$
& $0.19$
\\ \hline
Random Fourier Features (Test)
& $7.45$
& $ 2.13\times 10^{8}$
& $ 1.69\times 10^{8}$
& $1.48\times 10^{8}$
\\ \hline
\end{tabular}
\caption{\small \textbf{HyShot 30 and NACA Sound Datasets}: Average relative train and test errors over 10 random trials (as a percentage). For the shallow NN, we choose the hidden layer so that the total number of parameters match $N$.}
\label{TableComparison}
\vspace{-6mm}
\end{table*}
In \Cref{TableComparison}, we apply the SRFE on the HyShot dataset (Hypersonics Flow Data \cite{constantine2015exploiting}) and measure the relative testing error as a function of $N$ (the number of random features).  The input space is $d=7$ dimensional and the dataset includes $52$ total samples (which we split into 26-26). We set  $\eta=0.01$, $\sigma=2\pi$, $p\sim \mathcal{U}[0,1]$, and $q=7$ (no coordinate sparsity is assumed).  In this setting, we have $N\gg m$, which causes the RFF model and the two-layer ReLU network to overfit on the data (the training loss is small). When using $\phi(\x;\w)=\sin(\langle \x,\w\rangle)$, the SRFE produces consistent testing error which decreases as $N$ increases. On the other hand, when  $\phi(\x;\w)=\text{ReLU}(\langle \x,\w\rangle)$, the results using SRFE achieve a smaller overall testing error but do not improve with $N$.  \Cref{TableComparison} shows that unlike the SRFE,  no gains are made from increasing the number of trainable parameters in the shallow NN model.

\subsection{NACA Sound Dataset}
We comparing the SRFE and the RFF models without coordinate sparsity on the National Advisory Committee for Aeronautics (NACA) sound dataset \cite{Dua:2019} and measure the relative training and testing error as a function of $N$. The input space is $d=5$ dimensional, the total number of samples is $1503$, the train-test split $80-20$, $\eta=100$, $\sigma = 1$, and $p\sim \mathcal{U}[0,1]$.
The relative testing errors in \Cref{TableComparison} indicate an overall consistent result, in terms of the coefficient sparsity and the errors, when using the SRFE approach. The RFF model overfits as $N$ increases beyond the size of the training set. 
\subsection{Comparison with Sparse PCE}
In \Cref{Fig:ErrorPCE},  we compare the SRFE-S approach with the Sparse PCE approach \cite{marelli2014uqlab} using various random sampling methods on the Ishigami example $f(x_1, x_2, x_3)=\sin(x_1) + 7 \sin^2(x_2) + 0.1 x_3^4 \sin(x_1)$
which is of order 2.
\begin{figure}[!t]
\begin{center}
\includegraphics[width=1.5in]{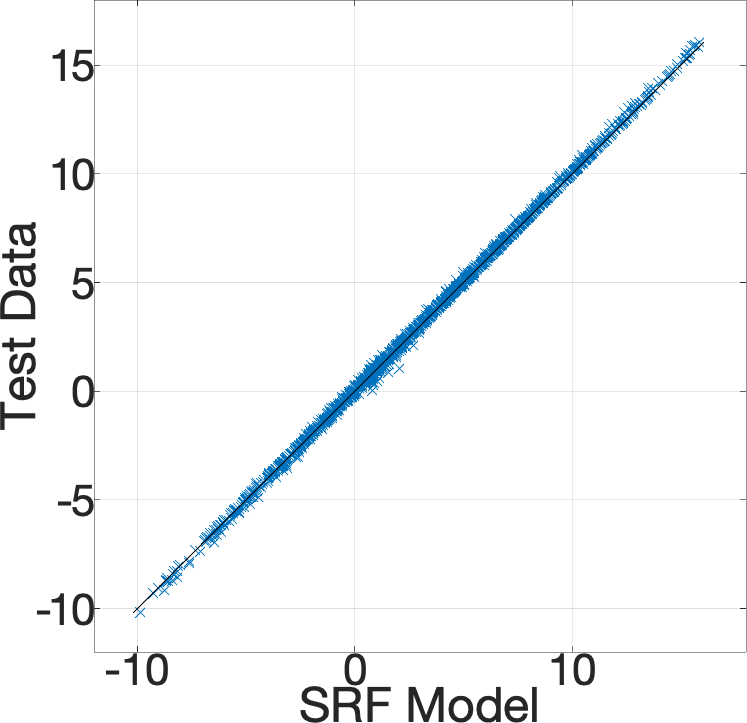} 
\includegraphics[width=1.55in]{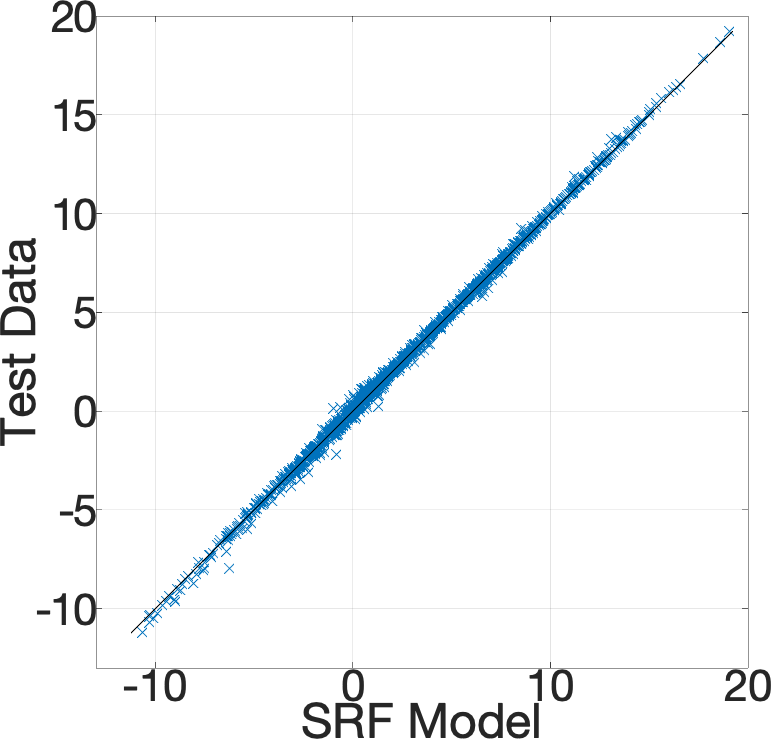}
\includegraphics[width=1.5in]{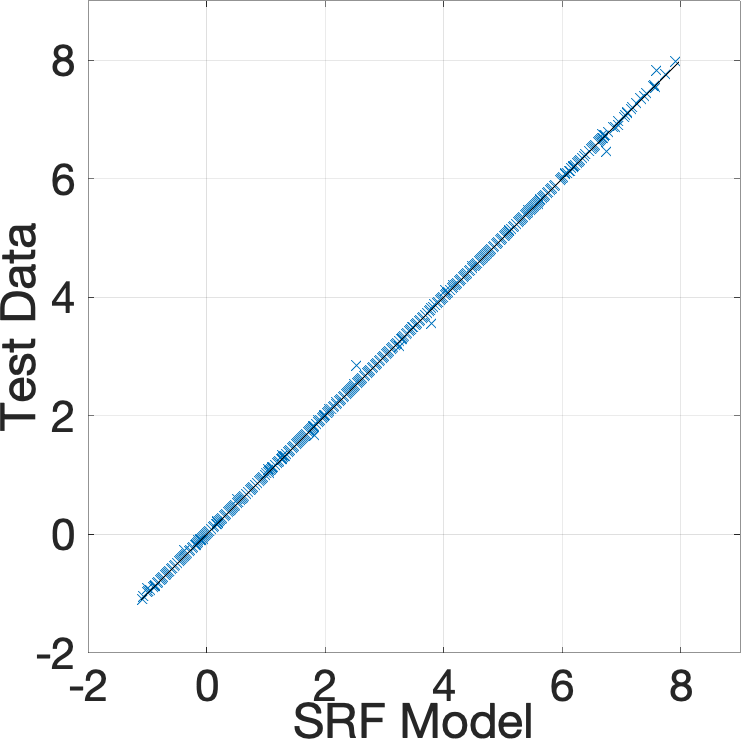}\\
\includegraphics[width=1.5in]{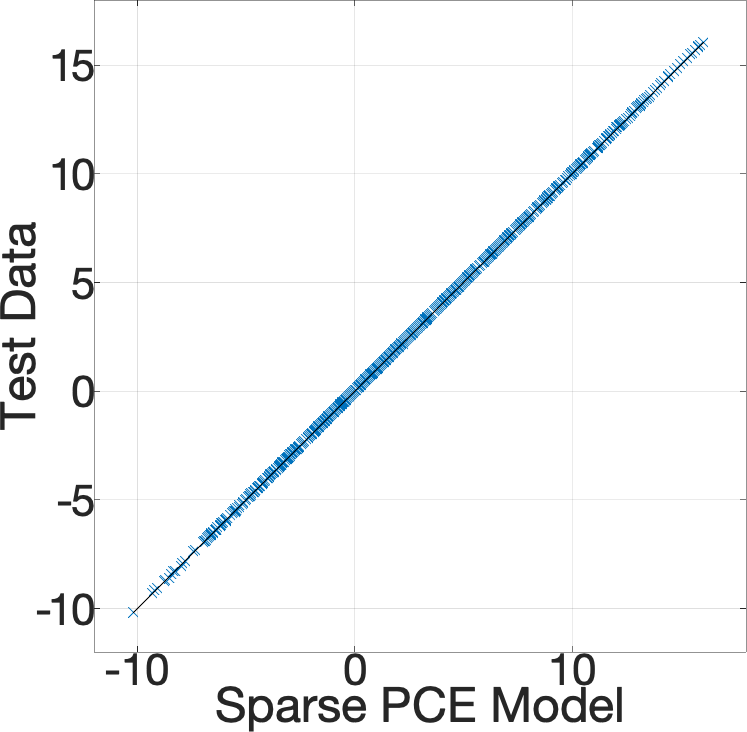}
\includegraphics[width=1.55in]{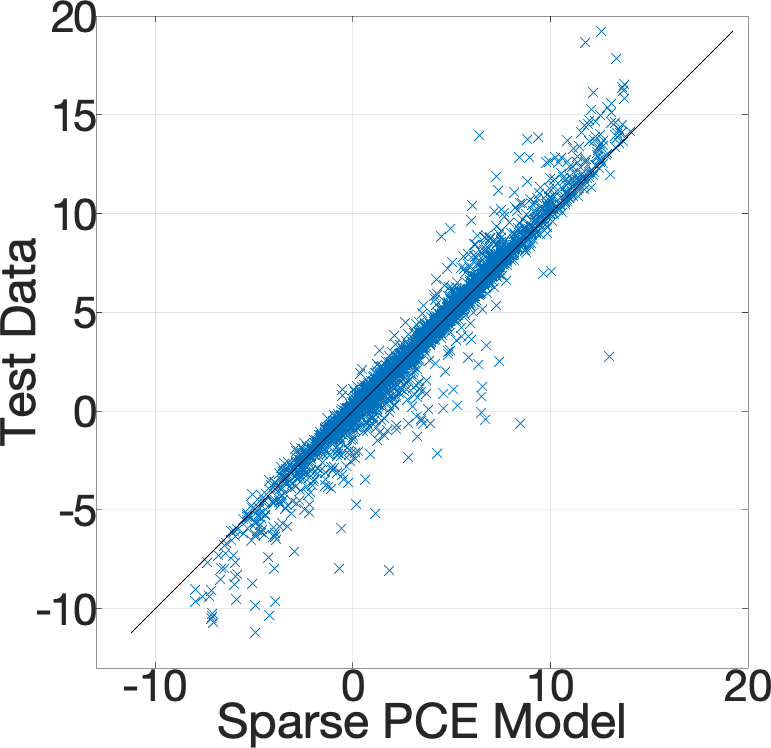}
\includegraphics[width=1.5in]{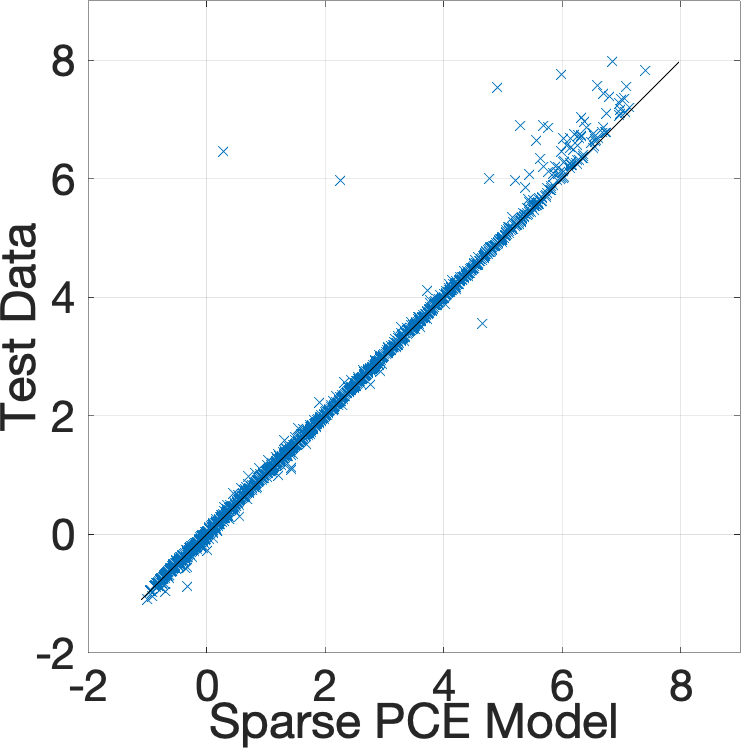}
\caption{\small \textbf{Comparison with Sparse PCE.} Each scatter plot is the model response versus the data. The first row is the SRFE and the second row is the Sparse PCE model. The first column uses i.i.d. samples from $\mathcal{U}[-\pi, \pi]^d$, the third column uses i.i.d. samples from  $\N\left(0,\frac{1}{4}\I_d\right)$, and the second column uses the sum of samples from  $\N\left(0,\frac{1}{100}\I_d\right)$ and $\mathcal{U}[-\pi, \pi]^d$. Each model uses $N=3276$ features (which is equivalent to a degree-25 polynomial system in the case of the Sparse PCE approach) and (the same) $m=200$ random samples. While the Sparse PCE performs well on the uniform distribution (first row), the SRFE produces accurate approximations in all cases.
}
\label{Fig:ErrorPCE}
\end{center}
\vspace{-6mm}
\end{figure}
The first column of \Cref{Fig:ErrorPCE} uses i.i.d. samples $\x_k\sim \mathcal{U}[-\pi, \pi]^d$, the third column uses i.i.d. samples $\x_k \sim \N\left(0,\frac{1}{4}\I_d\right)$, and the second column uses a mixed distribution $\x_k=\x_{k,1}+\x_{k,2}$ where  $\x_{k,1}\sim \N\left(0,\frac{1}{100}\I_d\right)$ and $\x_{k,1}\sim \mathcal{U}[-\pi, \pi]^d$. Each model uses $N=3276$ features (which is equivalent to a degree-25 polynomial system in the case of the Sparse PCE approach) and (the same) $200$ random samples. The hyper-parameters for the SRFE-S are set to $q=2$, $\sigma=\frac{3\pi}{2}$, and $p\sim \mathcal{U}[0,2\pi]$. When using uniformly random samples, the Sparse PCE approach produces lower testing error ($0.24\%$ versus $1.43\%$), which continues to perform well as $N$ increases. This is due in part to the fact that the orthogonal polynomial basis (in this case, the Legendre basis) has knowledge of the input distribution. When the samples are Gaussian, the SRFE produces a more accurate solution than the Sparse PCE method ($0.44\%$ versus $6.24\%$). For the mixture case, the SRFE outperforms the Sparse PCE method ($2.11\%$ versus $15.05\%$). Note that the Sparse PCE must derive the orthogonal basis from the data (or use the Legendre basis as its default), where as, at least experimentally, our approach is applicable to a larger class of input distributions. 
\section{Conclusion}\label{sec:conc}
We proposed the sparse random features method as a new approach in function approximation. For low order functions, i.e. functions that admit a decomposition to terms depending on only a few of the independent variables, we introduce low order random features.  By utilizing techniques from compressive sensing and probability, we provided generalization bounds for the proposed scheme and established sample and feature complexities. On several examples, we showed improved accuracy over other popular approximation schemes. As part of the future work, we intend to explore the avenues to incorporate additional functional structures into the proposed framework with the hope of further improving the approximation properties of the proposed scheme. In addition, by considering random features within a ridge regression approach, \cite{rudi2017generalization} showed that the computational gains of random features come at the expense of learning accuracy, $N = \O(\sqrt{m}\log m)$ features are sufficient for $\O(1/\sqrt{m})$ error, where $m$ is the number of samples. Utilizing this result in our proposed framework is an interesting direction which is left for future work.

\section*{Acknowledgments}

We thank Zhijun Chen, Jiannan Jiang, Kameron Harris, Andrea Montanari, Rene Vidal, and  Yuege Xie for their helpful feedback which led to significant improvements.

\newpage
\appendix
\vspace{6mm}
\begin{center}
    \textbf{\Large Supplementary Material} \vspace{4mm}
\end{center}

\section{Useful Tools and Definitions}
\begin{definition}[\textbf{Normal Distribution}]\label{def:normal}
The random vector $\w = (w_1, \dots, w_d)$ is called a centered normal random vector with mean zero and variance $\sigma^2$ if it has density function\footnote{Note that we only consider the homoscedastic case.}
\begin{equation}
\rho(\w) = (2\pi \sigma^2)^{-d/2} \exp\left(-\frac{\| \w \|^2}{2\sigma^2}\right)  .  
\end{equation}
Furthermore, it holds that $\mathbb{E} \| \w \|^2 = d\sigma^2$, and by a standard concentration argument for any $0<t<1$
\begin{equation}\label{eq:gauss_concentrate}
\P((1-t)d\sigma^2\leq\|\w\|^2\leq(1+t)d\sigma^2) \geq 1 - 2\exp\left(-\frac{ t^2d}{12}\right)
\end{equation}
\end{definition}
\begin{lemma}[\textbf{Khintchine Inequality \cite{foucartmathematical}}]\label{lem:Rademacher}
If $\{\xi_j\}_{j=1}^N$ is a collection of i.i.d. Rademacher random variables, for any $q_1,\dots,q_N \in \C$, and $0<p\leq2$,
\begin{equation}
\left(\E_\xi\left[\left|\sum_{j=1}^N \xi_j q_j\right|^p\right]\right)^{\frac{1}{p}} \leq \sqrt{\sum_{j=1}^N|q_j|^2}.
\end{equation}
\end{lemma}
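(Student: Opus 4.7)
The plan is to reduce to the case $p=2$, where the inequality holds with equality, and then use the fact that on a probability space the $L^p$ norms are monotone non-decreasing in $p$.

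First I would compute the $L^2$ moment of $S := \sum_{j=1}^N \xi_j q_j$ directly. Expanding the square gives $\E_\xi[|S|^2] = \sum_{j,k} q_j \overline{q_k}\, \E[\xi_j \xi_k]$, and because the $\xi_j$ are i.i.d. Rademacher we have $\E[\xi_j \xi_k] = \delta_{jk}$. Hence $\E_\xi[|S|^2] = \sum_{j=1}^N |q_j|^2$, so the stated inequality holds with equality when $p=2$.

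Next, for $0 < p \leq 2$, I would invoke Jensen's inequality applied to the concave map $t \mapsto t^{p/2}$ on $[0,\infty)$ (with respect to the probability measure induced by $\xi$):
\begin{equation}
\E_\xi[|S|^p] \;=\; \E_\xi\bigl[(|S|^2)^{p/2}\bigr] \;\leq\; \bigl(\E_\xi[|S|^2]\bigr)^{p/2}.
\end{equation}
Raising both sides to the power $1/p$ and substituting the exact $p=2$ identity from the first step yields
\begin{equation}
\bigl(\E_\xi[|S|^p]\bigr)^{1/p} \;\leq\; \bigl(\E_\xi[|S|^2]\bigr)^{1/2} \;=\; \sqrt{\sum_{j=1}^N |q_j|^2},
\end{equation}
which is exactly the claimed bound.

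There is essentially no hard step here: the first step is a direct second-moment calculation that uses only the independence and symmetry of the Rademacher variables, and the second step is a one-line application of Jensen's inequality which is valid for all $0 < p \leq 2$ (since $t \mapsto t^{p/2}$ is concave on $[0,\infty)$ for such $p$). The only point worth flagging is that the statement allows $p < 1$, for which the map $\|\cdot\|_{L^p}$ is only a quasi-norm; nonetheless the monotonicity argument above goes through unchanged because Jensen's inequality only requires concavity, not any norm structure. I would not need any of the sharper constants (e.g., for $p > 2$) provided by the full Khintchine inequality, so no symmetrization or tail-bound machinery is required.
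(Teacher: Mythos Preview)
Your proof is correct. The paper does not prove this lemma at all; it is stated as a standard result with a citation to \cite{foucartmathematical}, so there is no ``paper's own proof'' to compare against. Your argument --- computing the second moment exactly via $\E[\xi_j\xi_k]=\delta_{jk}$ and then applying Jensen's inequality with the concave map $t\mapsto t^{p/2}$ for $0<p\le 2$ --- is the standard elementary proof of this direction of Khintchine and is complete as written.
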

\begin{lemma}[\textbf{Rademacher Complexity \cite{foucartmathematical}}]\label{lem:Rademacher_complexity}
Assume that $\{\v_j\}_{j=1}^M$ is a sequence of independent random
vectors in a finite-dimensional vector space $V$ with norm $\| \cdot\|$. Let $F: V \rightarrow \R$ be a convex function. Then
\begin{equation}
\E_{\v} F\left(\sum_{j=1}^M\v_j - \E[\v_j]\right)\leq \E_{\xi,\v} F\left(2\sum_{j=1}^M\xi_j\v_j\right),
\end{equation}
where $\{\xi_j\}$ is a Rademacher sequence independent of $\v$.
\end{lemma}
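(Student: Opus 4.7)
The plan is to use the classical two-step symmetrization argument: introduce an independent copy of the sequence to replace the mean by a second empirical sum, then exploit the symmetry of the resulting difference to insert Rademacher signs. All ingredients are convexity (Jensen) and distributional identities; no quantitative estimates are needed.

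First, I would introduce an independent copy $\{\v_j'\}_{j=1}^M$ of $\{\v_j\}_{j=1}^M$, drawn on the same probability space but independent of everything else (and in particular of the Rademacher sequence $\{\xi_j\}$). Since $\E[\v_j'] = \E[\v_j]$, I can rewrite
\begin{equation*}
\sum_{j=1}^M \v_j - \E[\v_j] \;=\; \E_{\v'}\!\left[\sum_{j=1}^M (\v_j - \v_j')\right].
\end{equation*}
Applying $F$, using convexity of $F$ and Jensen's inequality to pull $F$ inside the conditional expectation over $\v'$, and then taking the outer expectation over $\v$ yields
\begin{equation*}
\E_{\v}\, F\!\left(\sum_{j=1}^M \v_j - \E[\v_j]\right) \;\leq\; \E_{\v,\v'}\, F\!\left(\sum_{j=1}^M (\v_j - \v_j')\right).
\end{equation*}

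Second, I would note that because $\v_j$ and $\v_j'$ are i.i.d., the random vector $\v_j - \v_j'$ has a symmetric distribution, and the pairs $\bigl((\v_j,\v_j')\bigr)_j$ and $\bigl((\v_j',\v_j)\bigr)_j$ are equal in law. Consequently, for any choice of signs $\varepsilon_j \in \{\pm 1\}$, the joint law of $\bigl(\varepsilon_j(\v_j - \v_j')\bigr)_j$ coincides with that of $\bigl(\v_j - \v_j'\bigr)_j$. Conditioning on the Rademacher sequence $\{\xi_j\}$ (independent of $\v,\v'$) and averaging, one obtains
\begin{equation*}
\E_{\v,\v'}\, F\!\left(\sum_{j=1}^M (\v_j - \v_j')\right) \;=\; \E_{\xi,\v,\v'}\, F\!\left(\sum_{j=1}^M \xi_j(\v_j - \v_j')\right).
\end{equation*}

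Third, I would split the argument of $F$ as $\sum_j \xi_j (\v_j - \v_j') = \tfrac{1}{2}\bigl(2\sum_j \xi_j \v_j\bigr) + \tfrac{1}{2}\bigl(-2\sum_j \xi_j \v_j'\bigr)$ and invoke convexity of $F$ once more to get
\begin{equation*}
F\!\left(\sum_{j=1}^M \xi_j(\v_j - \v_j')\right) \leq \tfrac{1}{2}\, F\!\left(2\sum_{j=1}^M \xi_j \v_j\right) + \tfrac{1}{2}\, F\!\left(-2\sum_{j=1}^M \xi_j \v_j'\right).
\end{equation*}
Taking expectations over $(\xi,\v,\v')$ and using that $\v_j$ and $\v_j'$ are identically distributed, together with the fact that $(-\xi_j)_j$ is also a Rademacher sequence with the same distribution as $(\xi_j)_j$, both summands on the right equal $\E_{\xi,\v}\, F\bigl(2\sum_j \xi_j \v_j\bigr)$. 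Chaining the three inequalities delivers the stated bound.

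The argument is essentially bookkeeping; the only subtle point, and the one I would state carefully, is the distributional identity $\bigl(\xi_j(\v_j-\v_j')\bigr)_j \stackrel{d}{=} \bigl(\v_j - \v_j'\bigr)_j$, which depends crucially on having introduced the i.i.d. copy $\{\v_j'\}$ (without it, the $\v_j$ themselves need not be symmetric). No moment or integrability assumptions beyond those implicit in the statement are required, since both sides of the target inequality may be $+\infty$ simultaneously without affecting the chain of inequalities.
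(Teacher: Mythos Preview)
Your proof is correct and is precisely the standard symmetrization argument (introduce an independent copy, use Jensen, insert Rademacher signs via the symmetry of $\v_j - \v_j'$, then split using convexity). The paper itself does not give a proof of this lemma: it is stated in the appendix as a known tool, with a citation to Foucart--Rauhut, so there is nothing in the paper to compare against. Your write-up would serve perfectly well as the omitted proof.
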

\begin{lemma}[\textbf{Contraction of Rademacher Complexity \cite{ledoux2013probability,bartlett2002rademacher}}]\label{lem:rademacher_lip}
Let $\{\phi_j\}_{i=1}^N$ be a collection of real-valued functions defined on $\R$ that are $L$-Lipschitz and satisfy $\phi_j(0) = 0$. Then,
\begin{equation}
\E_\xi \sup_{x \in \X} \left|\sum_{j=1}^N \xi_j \phi_j(x_j)\right| \leq 2L \E_\xi\sup_{x \in \X} \left|\sum_{j=1}^N \xi_j x_j\right|
\end{equation}
for any bounded subset $\X$ in $\R^N$. Here, $\{\xi_j\}$ is a Rademacher sequence independent of $\{x_j\}$.
\end{lemma}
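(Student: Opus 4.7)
The plan is to prove the Rademacher contraction inequality by reducing the absolute-value version to a signed version and then iterating a single-coordinate peeling step to replace each $\phi_j$ by the identity, following the classical approach of Ledoux and Talagrand.

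First, by homogeneity I would replace $\phi_j$ with $\phi_j/L$ and reduce to the case $L=1$: the rescaled functions are $1$-Lipschitz with $\phi_j(0)=0$, and both sides of the claimed inequality scale by $L$. Next, I would remove the absolute values on the left-hand side. Since $\phi_j(0)=0$, adjoining the origin to $\X$ does not decrease either side while ensuring $0 \in \X$, so with $g(x) := \sum_j \xi_j \phi_j(x_j)$ we have $\sup_x g(x) \geq 0$ and $\sup_x (-g(x)) \geq 0$, whence
\begin{equation*}
\sup_x |g(x)| = \max\bigl(\sup_x g(x),\; \sup_x (-g(x))\bigr) \leq \sup_x g(x) + \sup_x (-g(x)).
\end{equation*}
Taking expectations and using that $-g$ has the same distribution as $g$ gives $\E \sup_x |g(x)| \leq 2\, \E \sup_x g(x)$. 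It therefore suffices to prove the signed contraction
\begin{equation*}
\E_\xi \sup_{x \in \X} \sum_{j=1}^N \xi_j \phi_j(x_j) \leq \E_\xi \sup_{x \in \X} \sum_{j=1}^N \xi_j x_j,
\end{equation*}
since its right-hand side is bounded by $\E_\xi \sup_{x \in \X} |\sum_j \xi_j x_j|$.

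Second, I would establish the signed inequality by iterating a single-coordinate peeling step, once per $j$, each pass replacing $\phi_j$ by the identity while not decreasing the expected supremum. Fix a coordinate, say $j = N$, condition on $\xi_1, \ldots, \xi_{N-1}$, write $A(x) := \sum_{j < N} \xi_j \phi_j(x_j)$, and average over $\xi_N \in \{\pm 1\}$:
\begin{equation*}
\E_{\xi_N} \sup_{x} \bigl( A(x) + \xi_N \phi_N(x_N) \bigr) = \tfrac{1}{2} \sup_{x,y} \bigl( A(x) + A(y) + \phi_N(x_N) - \phi_N(y_N) \bigr).
\end{equation*}
Using $1$-Lipschitz continuity, $\phi_N(x_N) - \phi_N(y_N) \leq |x_N - y_N|$, and exploiting that the joint supremum is invariant under the swap $(x,y) \leftrightarrow (y,x)$, I would restrict to the half-space where $x_N \geq y_N$, where $|x_N - y_N|$ equals $x_N - y_N$. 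Reversing the averaging identity then produces the upper bound $\E_{\xi_N} \sup_x (A(x) + \xi_N x_N)$, i.e.\ $\phi_N$ has been replaced by the identity in position $N$. Iterating for $j = N-1, N-2, \ldots, 1$ yields the signed contraction, and combining with the reductions of the first paragraph produces the stated $2L$ bound.

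The main obstacle is the peeling inequality itself: justifying that the 1-Lipschitz bound $\phi_N(x_N) - \phi_N(y_N) \leq |x_N - y_N|$ can be inserted inside the joint supremum and then converted to $x_N - y_N$ by the $(x,y)$-swap symmetry is the heart of the Ledoux--Talagrand argument, and requires careful bookkeeping of the signs. Once this single maneuver is justified, the iteration, the symmetrization removing the absolute values, and the $L \to 1$ rescaling are all routine.
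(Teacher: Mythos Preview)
The paper does not give its own proof of this lemma; it simply states it as a known tool and cites Ledoux--Talagrand and Bartlett--Mendelson. Your argument is precisely the classical Ledoux--Talagrand contraction proof that those references contain: rescale to $L=1$, adjoin $0\in\X$ and use sign symmetry to trade the absolute value for a factor of $2$, and then peel one coordinate at a time via the identity
\[
\E_{\xi_N}\sup_x\bigl(A(x)+\xi_N\phi_N(x_N)\bigr)=\tfrac12\sup_{x,y}\bigl(A(x)+A(y)+\phi_N(x_N)-\phi_N(y_N)\bigr),
\]
using the $(x,y)\leftrightarrow(y,x)$ swap to convert $\phi_N(x_N)-\phi_N(y_N)$ into $|\phi_N(x_N)-\phi_N(y_N)|\le |x_N-y_N|$. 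The peeling step and the iteration are correctly described, and the reduction steps (scaling, adjoining the origin, symmetrization of the absolute value) are all valid. There is nothing to compare against in the paper itself; your proposal matches the standard proof the paper is invoking.
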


\begin{lemma}[\textbf{Stability of BP-based Sparse Reconstruction \cite{foucartmathematical}}]\label{lem:l1_book}
Let $\A \in \C^{m\times N}$ be a matrix with coherence $\mu_\A$. If the coherence of $A$ satisfies
\begin{equation}
\mu_\A    \leq \frac{4}{\sqrt{41}(2s-1)},
\end{equation}
then,  for any  vector $\c^\star \in \C^N$ satisfying ${\bf y} = \A\c^\star+\e$ with $\|\e\| \leq \eta\sqrt{m}$, a minimizer $\c^\sharp$ of the BP method \eqref{eq:BP} approximates the vector $\c^\star$ with the error bounds
\begin{equation}
\begin{aligned}
\|\c^\star-\c^\sharp\|_2&\leq C' \, \frac{\kappa_{s,1}(\c^\star)}{\sqrt{s}}+C\eta\\
\|\c^\star-\c^\sharp\|_1&\leq C'\kappa_{s,1}(\c^\star)+C\sqrt{s}\eta, 
\end{aligned}
\end{equation}
where $C,C'>0$ are constants.
\end{lemma}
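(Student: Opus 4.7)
Since this lemma is a restatement of a standard result from compressive sensing (attributed to \cite{foucartmathematical}), the plan is not to prove it from first principles but to indicate the standard route that leads to the stated bounds, and then to rely on the textbook argument for the precise constants.

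First I would pass from the coherence bound to the restricted isometry property. For any index set $T\subset[N]$ with $|T|=t$, Gershgorin's disk theorem applied to the normalized Gram submatrix $\widetilde{\A}_T^* \widetilde{\A}_T$ (where $\widetilde{\A}$ has the unit-normalized columns $\a_j/\|\a_j\|$) shows that all of its eigenvalues lie in the interval $[1-(t-1)\mu_\A,\, 1+(t-1)\mu_\A]$; the off-diagonal entries are bounded in modulus by $\mu_\A$ by \Cref{def:coherence}, and there are at most $t-1$ such entries per row. Taking $t=2s$ yields the RIP estimate $\delta_{2s}(\widetilde{\A})\leq (2s-1)\mu_\A\leq 4/\sqrt{41}$ under the hypothesis, which sits below the known sufficient threshold for stable and robust $\ell_1$ recovery.

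Second, I would invoke the standard basis-pursuit recovery theorem under RIP from the compressive sensing literature: whenever $\y=\A\c^\star+\e$ with $\|\e\|\leq\eta\sqrt{m}$ and $\delta_{2s}$ is below this threshold, the minimizer $\c^\sharp$ of \eqref{eq:BP} obeys the mixed $\ell_1/\ell_2$ stability bounds
\begin{equation}
\|\c^\star-\c^\sharp\|_2 \leq \frac{C'}{\sqrt{s}}\kappa_{s,1}(\c^\star)+C\eta, \qquad \|\c^\star-\c^\sharp\|_1 \leq C'\kappa_{s,1}(\c^\star)+C\sqrt{s}\eta,
\end{equation}
with absolute constants $C,C'>0$ depending only on the RIP constant. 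This is exactly the content of the claimed bounds.

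The only subtle point, and the place I would be most careful, is normalization: the coherence in \Cref{def:coherence} is defined with normalized inner products, whereas the BP program in \eqref{eq:BP} uses the original matrix $\A$ together with the tolerance $\eta\sqrt{m}$. This is handled by the diagonal rescaling $\A=\widetilde{\A}D$ with $D=\mathrm{diag}(\|\a_j\|)$: writing $\A\c=\widetilde{\A}(D\c)$, one applies the normalized recovery result to the variable $D\c$ and transfers the conclusion back to $\c$. In the random feature setting of the paper, the column norms $\|\a_j\|$ concentrate tightly around $\sqrt{m}$ (since $|\phi(\x_k;\w_j)|=1$ for the trigonometric features), so this rescaling contributes only a bounded multiplicative factor which can be absorbed into $C$ and $C'$. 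Once this bookkeeping is settled, the statement follows directly from the standard RIP-based analysis of basis pursuit.
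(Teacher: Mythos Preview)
The paper does not prove this lemma; it is simply quoted from \cite{foucartmathematical} as a background tool, so there is no ``paper's own proof'' to compare against. Your sketch---coherence $\Rightarrow$ RIP via Gershgorin, then the standard RIP-based BP stability theorem---is exactly the textbook route and is correct. One small remark on your normalization discussion: for the complex exponential features $\phi(\x;\w)=\exp(i\langle\x,\w\rangle)$ used here, $|\phi|\equiv 1$ and hence $\|\a_j\|=\sqrt m$ \emph{exactly}, not merely in concentration, so the diagonal rescaling $D=\sqrt m\,\I_N$ is deterministic and the transfer back to $\c$ is clean.
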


\begin{lemma}[\textbf{Stability of Threshold BP-based Sparse Reconstruction}]\label{lem:thres_l1}
Let $\A \in \C^{m\times N}$ be a matrix with coherence $\mu_\A$. If the coherence of $A$ satisfies
\begin{equation}
\mu_\A    \leq \frac{4}{\sqrt{41}(2s-1)},
\end{equation}
then,  for any  vector $\c^\star \in \C^N$ satisfying ${\bf y} = \A\c^\star+\e$ with $\|\e\| \leq \eta\sqrt{m}$, a minimizer $\c^\sharp$ of the BP method \eqref{eq:BP} approximates the vector $\c^\star$ with the error bounds
\begin{equation}
\begin{aligned}
\|\c^{\sharp}|_{\S^\sharp}-\c^\star\|_2&\leq C' \, \frac{\kappa_{s,1}(\c^\star)}{\sqrt{s}}+C\eta+4\kappa_{s,2}(\c^\star)
\end{aligned}
\end{equation}
where $C,C'>0$ are constants, and $\S^\sharp$ is defined as the support set of the $s$ largest (in magnitude) coefficients of $\c^\sharp$. Note that $\kappa_{s,2}(\c^\star) \leq \frac{\|\c^\star\|_1}{2\sqrt{s}}$.
\end{lemma}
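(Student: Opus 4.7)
The plan is to deduce this threshold-pruning stability bound from the un-pruned BP stability statement in Lemma~\ref{lem:l1_book} by controlling how much the hard-thresholding step can inflate the reconstruction error. The coherence assumption is identical to the hypothesis of Lemma~\ref{lem:l1_book}, so that lemma immediately delivers, under the same noise bound $\|\e\|\leq \eta\sqrt{m}$, the $\ell_2$ estimate
\begin{equation*}
\|\c^\sharp-\c^\star\|_2 \;\leq\; C'\,\frac{\kappa_{s,1}(\c^\star)}{\sqrt{s}}+C\eta.
\end{equation*}
The remainder of the argument is purely deterministic: translate the error on $\c^\sharp$ into an error on its hard-thresholded version $\c^\sharp|_{\S^\sharp}$.

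First I would apply the triangle inequality in the form
\begin{equation*}
\|\c^\sharp|_{\S^\sharp}-\c^\star\|_2 \;\leq\; \|\c^\sharp|_{\S^\sharp}-\c^\sharp\|_2 + \|\c^\sharp-\c^\star\|_2,
\end{equation*}
and observe that the first term equals $\kappa_{s,2}(\c^\sharp)$, since by construction $\S^\sharp$ is the support of the $s$ largest-magnitude entries of $\c^\sharp$ and thus $\c^\sharp|_{\S^\sharp}$ realizes the best $s$-term $\ell_2$ approximation of $\c^\sharp$. Next I would estimate $\kappa_{s,2}(\c^\sharp)$ by comparing against a convenient $s$-sparse competitor: if $\S^\star$ denotes the support of the $s$ largest-magnitude entries of $\c^\star$, then $\c^\star|_{\S^\star}$ is $s$-sparse and hence
\begin{equation*}
\kappa_{s,2}(\c^\sharp) \;\leq\; \|\c^\sharp-\c^\star|_{\S^\star}\|_2 \;\leq\; \|\c^\sharp-\c^\star\|_2 + \|\c^\star-\c^\star|_{\S^\star}\|_2 \;=\; \|\c^\sharp-\c^\star\|_2 + \kappa_{s,2}(\c^\star).
\end{equation*}

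Substituting back yields $\|\c^\sharp|_{\S^\sharp}-\c^\star\|_2 \leq 2\|\c^\sharp-\c^\star\|_2 + \kappa_{s,2}(\c^\star)$, and combining with the estimate from Step~1 (absorbing the factor $2$ into the constants $C, C'$) produces a bound of precisely the stated form. The coefficient $4$ appearing in front of $\kappa_{s,2}(\c^\star)$ in the statement is a cosmetic overestimate that also covers a slightly different, Needell--Vershynin-style symmetric set-swap between $\S^\sharp\setminus \S^\star$ and $\S^\star\setminus \S^\sharp$, should one prefer that route. The closing inequality $\kappa_{s,2}(\c^\star)\leq \|\c^\star\|_1/(2\sqrt{s})$ is the standard Stechkin bound on the tail of the sorted coefficient sequence and requires no new ingredients. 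There is no substantial obstacle here beyond the BP stability input: the only care is in choosing the comparison vector $\c^\star|_{\S^\star}$ so that the pruning step contributes $\kappa_{s,2}(\c^\star)$ rather than the much larger $\|\c^\star\|_2$.
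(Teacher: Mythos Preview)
Your argument is correct and is in fact more elementary than the paper's. The paper splits differently, writing
\[
\|\c^{\sharp}|_{\S^\sharp}-\c^\star\|_2 \leq \|\c^{\sharp}|_{\S^\sharp}-\c^\star|_{\S^\star}\|_2 + \kappa_{s,2}(\c^\star),
\]
and then invokes the Needell--Vershynin swap bound (Corollary~3.2 of \cite{needell2010signal}) to obtain $\|\c^{\sharp}|_{\S^\sharp}-\c^\star|_{\S^\star}\|_2 \leq 3\|\c^{\sharp}-\c^\star|_{\S^\star}\|_2$, arriving at $3\|\c^{\sharp}-\c^\star\|_2 + 4\kappa_{s,2}(\c^\star)$ before applying Lemma~\ref{lem:l1_book}. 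Your route instead pivots through $\c^\sharp$ itself, exploiting the defining minimality of $\kappa_{s,2}(\c^\sharp)$ against the competitor $\c^\star|_{\S^\star}$; this sidesteps the Needell--Vershynin lemma entirely and yields the sharper inequality $2\|\c^\sharp-\c^\star\|_2 + \kappa_{s,2}(\c^\star)$. Your remark that the coefficient $4$ ``covers a slightly different, Needell--Vershynin-style symmetric set-swap'' is precisely on target: that is exactly the route the paper takes, and it accounts for the looser constant in the statement.
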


\begin{proof}
Let $\S^\star$ be the support set of the $s$ largest (in magnitude) coefficients of $\c^\star$ and $\S^\sharp$ be the support set of the $s$ largest (in magnitude) coefficients of $\c^\sharp$. Note that $|\S^\star|=s=|\S^\sharp|$ and $|\S^\star\setminus \S^\sharp|=|\S^\sharp\setminus\S^\star|$. 

First, we can bound the difference between the threshold solution to the basis pursuit problem and $c^\star$ by 
\begin{equation}
\begin{aligned}
\|\c^{\sharp}|_{\S^\sharp}-\c^\star\|_2&\leq \|\c^{\sharp}|_{\S^\sharp}-\c^\star|_{\S^\star}\|_2+\|\c^\star|_{[N]\setminus\S^\star}\|_2\\
&\leq \|\c^{\sharp}|_{\S^\sharp}-\c^\star|_{\S^\star}\|_2+\kappa_{s,2}(\c^\star).
\end{aligned}
\end{equation}
Then using a similar argument as in the proof of Corollary 3.2 \cite{needell2010signal}, we have
\begin{equation}
    \|\c^{\sharp}|_{\S^\sharp}-\c^\star|_{\S^\star}\|_2 \leq 3\|\c^{\sharp}-\c^\star|_{\S^\star}\|_2.
\end{equation}
Therefore, by \Cref{lem:l1_book}
\begin{equation}
\begin{aligned}
\|\c^{\sharp}|_{\S^\sharp}-\c^\star\|_2
&\leq 3\|\c^{\sharp}-\c^\star|_{\S^\star}\|_2+\kappa_{s,2}(\c^\star)\\
&\leq 3\|\c^{\sharp}-\c^\star\|_2+4\kappa_{s,2}(\c^\star)\\
&\leq 3C' \, \frac{\kappa_{s,1}(\c^\star)}{\sqrt{s}}+3C\eta+4\kappa_{s,2}(\c^\star).
\end{aligned}
\end{equation}
\end{proof}

\begin{lemma}[\textbf{Samples Lie in the Domain}]\label{lem:gaussian_domain}
Suppose that $\x_1, \dots, \x_m \sim {\cal N}(0, \gamma^2 \I_d)$ are i.i.d. Gaussian points and thus $\mathbb{E} \| \x \|^2 = d\gamma^2$.  Let $R>0$ be a fixed radius. Then, for any $0<\delta<1$, 
the probability of all $m$ samples $\x_1, \dots, \x_m \in  \B^d(R)$ is at least $1-\delta$ provided that:
\begin{equation}
R\geq \gamma\ \sqrt{d+\sqrt{12d\log\left(\frac{m}{\delta}\right)}}.
\end{equation}
\end{lemma}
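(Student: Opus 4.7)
The plan is to apply the Gaussian norm concentration bound from \Cref{def:normal} to each sample individually and then take a union bound over $k \in [m]$. The form of the target radius, namely $R \geq \gamma\sqrt{d + \sqrt{12d\log(m/\delta)}}$, is already set up as a $(1+t)$-type dilation of the mean squared norm $d\gamma^2$ with $t = \sqrt{12\log(m/\delta)/d}$, which matches exactly the exponent appearing in \eqref{eq:gauss_concentrate}. So the structure of the proof essentially ``reads off'' from the concentration inequality itself.

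First, I would set $\sigma = \gamma$ in \eqref{eq:gauss_concentrate} and extract its upper-tail consequence: for any $0 < t < 1$ and any fixed $k \in [m]$,
\[
\P\bigl(\|\x_k\|^2 > (1+t)\,d\gamma^2\bigr) \leq \exp\!\left(-\tfrac{t^2 d}{12}\right),
\]
keeping only the one-sided chi-square tail (the factor of $2$ in \eqref{eq:gauss_concentrate} comes from symmetrizing to a two-sided deviation event and is not needed here). A union bound over $k = 1, \dots, m$ then yields
\[
\P\bigl(\exists\, k \in [m]:\; \|\x_k\|^2 > (1+t)\,d\gamma^2\bigr) \leq m\exp\!\left(-\tfrac{t^2 d}{12}\right).
\]

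To finish, I would force the right-hand side to be at most $\delta$, solve for $t$, and translate the resulting inequality from $\|\x_k\|^2$ to $\|\x_k\|$. Solving $m\exp(-t^2 d/12) = \delta$ gives $t = \sqrt{12\log(m/\delta)/d}$, so with probability at least $1-\delta$ every sample satisfies $\|\x_k\|^2 \leq d\gamma^2 + \gamma^2\sqrt{12 d \log(m/\delta)}$; taking square roots recovers the stated condition on $R$, and the complementary event $\{\x_k \in \B^d(R)\ \text{for all } k\}$ then has probability at least $1-\delta$.

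No serious obstacle is expected; the only points that merit care are (i) using the one-sided Gaussian tail rather than the symmetric version in \eqref{eq:gauss_concentrate}, so that the factor of $2$ does not appear inside the logarithm and the constants match the lemma exactly, and (ii) ensuring the parameter $t$ lies in $(0,1)$ so that \eqref{eq:gauss_concentrate} applies. The latter holds whenever $d > 12\log(m/\delta)$; in the complementary high-probability-per-sample regime, a standard sub-Gaussian deviation bound gives a comparable statement, so the conclusion persists with the same qualitative scaling.
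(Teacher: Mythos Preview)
Your proposal is correct and takes essentially the same approach as the paper, which simply states that the result is ``straightforward using the concentration result in \Cref{def:normal} and the union bound.'' Your additional care in extracting the one-sided upper tail (to avoid the factor of $2$ and match the constants exactly) and in flagging the $t\in(0,1)$ constraint goes beyond what the paper records, but the underlying argument is identical.
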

\begin{proof}
Straightforward using the concentration result in \Cref{def:normal} and the union bound.
\end{proof}

\section{Important Lemmata}


The proofs of the generalization bounds utilize the following lemmata. The first establishes a bound between the target function and the best $\phi$-approximation, which is needed to control the stability parameter in the basis pursuit problem. 

\begin{lemma} \label{errorboundlemma1}

Fix confidence parameter $\delta > 0$ and accuracy parameter $\epsilon > 0$.  Recall the setting of \Cref{alg:B} and suppose $f \in \F(\phi,\rho)$ where $\phi(\x;\w) = \exp(i \langle \x,\w \rangle)$ and $\rho$ is the probability density function (with finite second moment) used for sampling the random weights $\w$.
Consider a set $\X \subset \R^d$ with diameter $R = \sup\limits_{\x \in \X} \| \x \|$.
Suppose 
\begin{equation}
N \geq  \frac{4}{\epsilon^2}\left(1 + 4 R \sqrt{\E \| \w \|^2} + \sqrt{\frac{1}{2}\log\left(\frac{1}{\delta}\right)}\right)^2.
\end{equation}
Consider the random feature approximation
\begin{equation}
f^{\star}(\x) := \sum_{j =1}^N c^{\star}_j \, \exp({i \langle \x, \w_j \rangle}), \quad \text{where} \ \ c^{\star}_j := \frac{\alpha(\w_j)}{N\rho(\w_j)}.
\end{equation}
Then, with probability at least $1-\delta$ with respect to the draw of the $\w$'s, the following holds
\begin{equation}
\sup_{\x \in \X} | f(\x) - f^{\star}(\x) | \leq \epsilon \|f\|_\rho.
\end{equation}
\end{lemma}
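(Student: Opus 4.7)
The plan is to view $f^\star$ as an i.i.d.\ Monte Carlo estimate of $f$ and bound the uniform deviation $Z := \sup_{\x \in \X} |f(\x) - f^\star(\x)|$ by combining McDiarmid's bounded-differences inequality with a symmetrization-plus-contraction argument. Writing $\beta(\w) := \alpha(\w)/\rho(\w)$, the hypothesis $f \in \F(\phi,\rho)$ gives $|\beta(\w)| \leq \|f\|_\rho$ together with the representation $f(\x) = \E_{\w\sim\rho}[\beta(\w)\,e^{i\langle\x,\w\rangle}]$, while $f^\star(\x) = \frac{1}{N}\sum_{j=1}^N \beta(\w_j)\, e^{i\langle\x,\w_j\rangle}$ is the empirical average.

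First I would apply McDiarmid to $Z$ viewed as a function of the i.i.d.\ weights $\w_1,\dots,\w_N$: swapping a single $\w_j$ perturbs each summand by at most $2\|f\|_\rho/N$ in modulus, so with probability at least $1-\delta$,
$$Z \;\leq\; \E Z \;+\; \|f\|_\rho \sqrt{\frac{\log(1/\delta)}{2N}}.$$
Next I would bound $\E Z$ by Rademacher symmetrization. Applying \Cref{lem:Rademacher_complexity} with the convex functional $\Phi(u) = \sup_{\x \in \X}|u(\x)|$ acting on the summand functions $\frac{1}{N}\beta(\w_j)e^{i\langle\cdot,\w_j\rangle}$ gives
$$\E Z \;\leq\; \frac{2}{N}\,\E_{\w,\xi}\sup_{\x \in \X}\left|\sum_{j=1}^N \xi_j\,\beta(\w_j)\,e^{i\langle\x,\w_j\rangle}\right|.$$
To extract the linear arguments of the trigonometric features, I would split at the origin via $e^{i\langle\x,\w_j\rangle} = 1 + (e^{i\langle\x,\w_j\rangle}-1)$. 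The constant piece contributes $\bigl|\sum_j \xi_j \beta(\w_j)\bigr|$, which is independent of $\x$; the complex form of \Cref{lem:Rademacher} (applied to real and imaginary parts, or directly to the second moment) bounds its expectation by $\sqrt{N}\,\|f\|_\rho$.

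For the centered piece I would decompose $\beta(\w_j)(e^{i\langle\x,\w_j\rangle}-1)$ into real and imaginary parts, noting that $t \mapsto \cos(t)-1$ and $t\mapsto \sin(t)$ are both $1$-Lipschitz and vanish at $0$. After absorbing the signs of $\mathrm{Re}\,\beta(\w_j)$ and $\mathrm{Im}\,\beta(\w_j)$ into fresh Rademacher variables (which preserves the joint distribution), \Cref{lem:rademacher_lip} contracts each of the four resulting scalar sums down to linear features $\langle\x,\w_j\rangle$ at the cost of a uniform factor of $\|f\|_\rho$. Cauchy--Schwarz then gives $|\sum_j \xi_j \langle\x,\w_j\rangle| \leq R\,\|\sum_j \xi_j \w_j\|$, and Jensen combined with $\E_\xi\|\sum_j \xi_j \w_j\|^2 = \sum_j \E\|\w_j\|^2$ produces an expected bound of $R\sqrt{N\,\E\|\w\|^2}$. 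Collecting the pieces yields $\E Z \lesssim \|f\|_\rho\bigl(1 + R\sqrt{\E\|\w\|^2}\bigr)/\sqrt{N}$, and combining with the McDiarmid tail then forces $Z \leq \epsilon \|f\|_\rho$ under the stated lower bound on $N$.

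The main obstacle is bookkeeping for the complex-valued coefficients $\beta(\w)$: \Cref{lem:rademacher_lip} is stated for real scalar Lipschitz maps, so I must split $\beta$ into real and imaginary parts, produce four sub-sums of the form $\sum_j \xi'_j |\mathrm{Re}\,\beta(\w_j)|\,\psi(\langle\x,\w_j\rangle)$ and its analogues, and confirm that using $\|f\|_\rho$ as the common Lipschitz constant (after the sign absorption) allows the contraction inequality to be applied uniformly. Matching the specific constants in the hypothesis on $N$ (in particular the factor $4$ multiplying $R\sqrt{\E\|\w\|^2}$) comes down to carefully tracking the factor $2$ in symmetrization, the factor $2$ in contraction, and the $1$-Lipschitz constants of $\cos -1$ and $\sin$.
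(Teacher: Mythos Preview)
Your proposal follows the paper's proof essentially step for step: McDiarmid for concentration, Rademacher symmetrization via \Cref{lem:Rademacher_complexity}, the split $e^{i\langle\x,\w\rangle}=1+(e^{i\langle\x,\w\rangle}-1)$, Khintchine on the constant piece, contraction on the centered piece, then Cauchy--Schwarz and Jensen to reach $R\sqrt{N\,\E\|\w\|^2}$. The only meaningful difference is in how the complex coefficients are unpacked for contraction: the paper writes $c_j^\star=|c_j^\star|e^{i\theta_j}$ in polar form, so that $c_j^\star e^{i\langle\x,\w_j\rangle}=|c_j^\star|\cos(\langle\x,\w_j\rangle+\theta_j)+i|c_j^\star|\sin(\langle\x,\w_j\rangle+\theta_j)$, producing only two real $(\|f\|_\rho/N)$-Lipschitz terms to contract rather than your four Cartesian terms---this is precisely what yields the factor $4$ in front of $R\sqrt{\E\|\w\|^2}$, whereas your decomposition would give $8$. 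Also note the McDiarmid tail with bounded-difference constant $2\|f\|_\rho/N$ is $\|f\|_\rho\sqrt{(2/N)\log(1/\delta)}$, not $\|f\|_\rho\sqrt{\log(1/\delta)/(2N)}$.
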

\begin{proof}
First, by construction we have $|c_j^\star|\leq \dfrac{\|f\|_\rho}{N}$ and, for fixed $\x$, $\E_{\w}[f^\star(\x)] = f(\x)$. Define the random variable \[v(\w_1,\dots,\w_N) := \left\| f - f^{\star} \right\|_{L^\infty(\X)} = \sup_{\x \in \X} |f(\x) - f^{\star}(\x)| = \sup_{\x \in \X } \left|  \mathbb{E}_\omega \left[f^{\star}(\x)\right]-f^{\star}(\x)  \right|.\] 
Following \cite{rahimi2008uniform},  we prove the lemma using  McDiarmid's inequality.

First, observe that $v$ is stable under perturbation of any one of its coordinates. Specifically, we have that for the $k\ts{th}$ coordinate (others kept fixed)
\begin{equation}
\begin{aligned}
|v({\w_{k}})-v(\tilde{\w}_k)| &\leq \frac{1}{N} \sup_{\x\in X}  \left|\frac{\alpha({\w_{k}})}{\rho({\w_{k}})}\phi(\x;{\w_{k}})-\frac{\alpha(\tilde{\w}_k)}{\rho(\tilde{\w}_k)}\phi(\x;\tilde{\w}_k)\right| \\&\leq \frac{2\|f\|_\rho}{N}=:\Delta_v,
\end{aligned}
\end{equation}
where we used the triangle inequality for the $\| \cdot \|_{L^{\infty}}$ norm, and uniform bounded on $\phi$. 

We would like to apply McDiarmid's concentration inequality:
$\P(v\geq \E[v]+t)\leq \exp(-\frac{2t^2}{N \Delta_v^2}),$ which requires us to estimate the expectation of $v$. To do this, following \cite{rahimi2008uniform,rahimi2008weighted} we exploit properties of Rademacher random variables \cite{bartlett2002rademacher}. Using the triangle inequality and Lemma \ref{lem:Rademacher_complexity} yields 
\begin{equation}
\begin{aligned}
  \E_{\w}[v] &\leq 2 \E_{\w,\xi} \sup_{\x \in \X} \left|\sum_{j=1}^N \xi_j c_j^\star \phi(  \x;\w_j )\right| 
   \\&= 2 \E_{\w,\xi} \sup_{\x \in \X} \left|\sum_{j=1}^N \xi_j c_j^\star \left( \phi( \x;\w_j ) - \phi(0)+ 1 \right)\right| \\
  &\leq 2 \E_{\w,\xi} \sup_{\x \in \X} \left|\sum_{j=1}^N \xi_j c_j^\star \left( \phi( \x;\w_j ) - \phi(0) \right) \right|+ 2\E_{\xi}\left|\sum_{j=1}^N \xi_j c_j^\star\right|,
\end{aligned}
\end{equation}
noting that $\phi(0) = 1$. The second term above can be bounded using \Cref{lem:Rademacher} and recalling that $|c_j^\star|\leq \|f\|_\rho/N$:
\begin{equation}
\begin{aligned}
   2\E_{\xi}\left|\sum_{j=1}^N \xi_j c_j^\star\right| &\leq 2\sqrt{\sum_{j=1}^N |c_j^\star|^2 }  \leq \frac{2\|f\|_\rho}{\sqrt{N}}.
\end{aligned}
\end{equation}
We now bound the first term. Let $c_j^\star := |c_j^\star|  \exp({ i\theta_j})$ and note that by Euler's formula 
\[c_j^\star \phi(  \x;\w_j ) = |c_j^\star|\cos(  \langle \x,\w_j \rangle + \theta_j) +i |c_j^\star| \sin(  \langle \x,\w_j \rangle+ \theta_j).\]
Therefore,
\begin{equation}
\begin{aligned}
&\E_{\w,\xi} \sup_{\x \in \X} \left|\sum_{j=1}^N \xi_j c_j^\star \left( \phi( \x;\w_j ) - \phi(0) \right) \right|\\&\leq  \E_{\w,\xi} \sup_{\x \in \X}\bigg| \sum_{j=1}^N \xi_j  |c_j^\star| (\cos(  \langle \x,\w_j \rangle + \theta_j) 
-\cos( \theta_j) +i \sin(  \langle \x,\w_j \rangle+ \theta_j)-i\sin( \theta_j))\bigg|\\
&\leq  \E_{\w,\xi} \sup_{\x \in \X}\left| \sum_{j=1}^N \xi_j  |c_j^\star| \left(\cos(  \langle \x,\w_j \rangle + \theta_j)-\cos( \theta_j)\right) \right|\\&\qquad+ \E_{\w,\xi} \sup_{\x \in \X}\left| \sum_{j=1}^N \xi_j  |c_j^\star| \left(\sin(  \langle \x,\w_j \rangle + \theta_j)-\sin( \theta_j)\right)\right|
\end{aligned}
\end{equation}
The functions $|c_j^{\star}| \left(\cos( \cdot+\theta_j)-\cos( \theta_j)\right)$ and $|c_j^{\star}| \left(\sin( \cdot+\theta_j)-\sin( \theta_j)\right)$ are $\|f \|_{\rho}/N$-Lipschitz 
and are zero at zero. Thus, using \Cref{lem:rademacher_lip} we may write,
\begin{equation}
\begin{aligned}
2 \E_{\w,\xi} \sup_{\x \in \X}\left|\sum_{j=1}^N \xi_j c_j^{\star} (\phi(  \x;\w_j ) - \phi(0))\right| 
&\leq \frac{8\|f\|_{\rho}}{N} \E_{\w,\xi} \sup_{\x \in \X} \left|\sum_{j=1}^N \xi_j  \langle \x,\w_j \rangle\right|\\
&\leq \frac{8\|f\|_{\rho}}{N} \sup_{\x \in \X} \|\x\| \text{ }\E_{\w,\xi}\left\|\sum_{j=1}^N\xi_j\w_j\right\| \\&\leq \frac{8\|f\|_{\rho}R}{N}  \text{ }\E_{\w,\xi}\left\|\sum_{j=1}^N\xi_j\w_j\right\|
\end{aligned}
\end{equation}
where we used the Cauchy-Schwartz inequality to establish the second inequality. Next, note that by Jensen's inequality and \Cref{lem:Rademacher}
\begin{equation}
\begin{aligned}
\E_{\w,\xi}\left\|\sum_{j=1}^N\xi_j\w_j\right\| &\leq \sqrt{\E_{\w,\xi}\left\|\sum_{j=1}^N\xi_j\w_j\right\|^2}
\leq \sqrt{N \E_{\w}\|\w\|^2}.
\end{aligned}
\end{equation}
Altogether, we have the bound
\begin{equation}
\mathbb{E}[v] \leq \frac{2 \| f \|_{\rho} (1 + 4 R \sqrt{ \E\|\w\|^2})}{\sqrt{N}} =:M_v.
\end{equation}
We are now in a position to apply McDiarmid's concentration inequality to obtain
\begin{equation}
\P(v\geq M_v+t)\leq\P(v\geq \E[v]+t)\leq \exp\left(-\frac{2t^2}{N \Delta_v^2}\right).
\end{equation}
By setting $M_v+t \leq \epsilon \|f\|_\rho$ and the probability bound on the right hand side of the equation above to $\delta$, we solve for $N$ and $t$ to obtain
\begin{equation}
t = \|f\|_\rho\sqrt{\frac{2}{N}\log\left(\frac{1}{\delta}\right)},
\end{equation}
and 
\begin{equation}
N \geq \frac{4 }{\epsilon^2}\left(1 + 4 R \sqrt{ \E\|\w\|^2} + \sqrt{\frac{1}{2}\log(1/\delta)}\right)^2
\end{equation}
which completes the proof.
\end{proof}

The next result provides a coherence estimate on the random feature matrix.

\begin{lemma}[\textbf{Coherence Analysis}]\label{lem:coherence_analysis}
Consider a complete set of $q$-sparse feature weights in $\R^d$, $\w_1,\dots, \w_N$ drawn from $\mathcal N(\mathbf{0}, \sigma^2 \I_q)$ and a set of data samples $\x_1,\dots, \x_m\sim \mathcal{N}(\mathbf{0}, \gamma^2 \I_d)$.  Define the random features $\phi(\x; \w)=\exp(i\langle \x, \w\rangle )$ and let $\A\in \C^{m\times N}$ denote the associated  random feature matrix where $a_{k, j}=\phi(\x_k;\w_j)$.  For a fixed $0<\delta<1$ and for some integer $s\geq 1$, suppose
\begin{align}
m& \geq 4(2\gamma^2\sigma^2+1)^{\max\{2q-d,0\}} (\gamma^2\sigma^2+1)^{\min\{2q,2d-2q\}}\log \frac{N^2}{\delta}  \\
\gamma^2\sigma^2&\geq \frac{1}{2}\left(\left(\frac{\sqrt{41}(2s-1)}{2}\right)^{\frac{2}{q}}-1\right),
\end{align}
then we have with probability at least $1-\delta$, that the coherence of $\A$ is bounded by
\begin{equation}
\mu_{\A}\leq \frac{4}{\sqrt{41}(2s-1)}.
\end{equation}
\end{lemma}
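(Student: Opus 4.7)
The plan is to decompose $|\mu_{j\ell}|\leq|\Gamma_{j\ell}|+|\mu_{j\ell}-\Gamma_{j\ell}|$ and control each piece separately via concentration. Since $|\phi(\x;\w)|=1$, the columns of $\A$ all have norm exactly $\sqrt{m}$, so $\mu_{j\ell}=\frac{1}{m}\sum_{k=1}^m e^{i\langle\x_k,\w_j-\w_\ell\rangle}$. The Gaussian characteristic function identifies the conditional expectation (over $\x$) of each summand as $\Gamma_{j\ell}=\psi(\gamma(\w_j-\w_\ell))$, a rapidly-decaying function of $\|\w_j-\w_\ell\|$, matching the ``separation'' quantity introduced in the proof sketch of \Cref{thm:generalization}. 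The goal is to show each of $|\Gamma_{j\ell}|$ and $|\mu_{j\ell}-\Gamma_{j\ell}|$ is at most $\frac{2}{\sqrt{41}(2s-1)}$ uniformly in the pair $(j,\ell)$.

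For the deviation term, I would condition on $\{\w_j\}_{j=1}^N$ and apply Bernstein's (or Hoeffding's) inequality to the i.i.d.\ bounded variables $Y_k:=e^{i\langle\x_k,\w_j-\w_\ell\rangle}-\Gamma_{j\ell}$ (splitting into real and imaginary parts, each with $|Y_k|\leq 2$). A union bound over the at most $N^2$ off-diagonal pairs yields
\[
\max_{j\neq\ell}\;|\mu_{j\ell}-\Gamma_{j\ell}|\;\leq\; C\sqrt{\frac{\log(N^2/\delta)}{m}}
\]
with probability at least $1-\delta/2$ over the draw of $\x_1,\dots,\x_m$. The sample-size condition, together with the uncertainty principle, then forces this upper bound to be at most $\frac{2}{\sqrt{41}(2s-1)}$.

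For the mean term, I would bound $\Gamma_{\max}:=\max_{j\neq\ell}|\Gamma_{j\ell}|$ with high probability over the random weights. For a pair with support overlap $q_1:=|\S_j\cap\S_\ell|$, the independence of coordinates lets me decompose $\|\w_j-\w_\ell\|^2 = 2\sigma^2\chi^2_{q_1}+\sigma^2\chi^2_{2(q-q_1)}$, and the chi-squared MGF gives
\[
\E_\w[\Gamma_{j\ell}]=(1+2\gamma^2\sigma^2)^{-q_1/2}(1+\gamma^2\sigma^2)^{-(q-q_1)}.
\]
This is monotone increasing in $q_1$ (since $(1+\gamma^2\sigma^2)^2\geq 1+2\gamma^2\sigma^2$), so the worst case is $q_1=q$ (same support), where the expectation equals $(1+2\gamma^2\sigma^2)^{-q/2}$. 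A Markov tail bound combined with a union bound, stratified across overlap levels using the counts $\binom{q}{q_1}\binom{d-q}{q-q_1}\cdot n$ for distinct-support pairs and $n-1$ for same-support pairs, produces a uniform high-probability bound on $\Gamma_{\max}$. The uncertainty-principle assumption $(2\gamma^2\sigma^2+1)^{q/2}\geq\frac{\sqrt{41}(2s-1)}{2}$ then gives $\Gamma_{\max}\leq\frac{2}{\sqrt{41}(2s-1)}$ with probability $\geq 1-\delta/2$, and the triangle inequality completes the proof.

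The main obstacle is matching the precise form $(1+2\gamma^2\sigma^2)^{\max\{2q-d,0\}}(1+\gamma^2\sigma^2)^{\min\{2q,2d-2q\}}$ in the sample complexity. The exponent $\max\{2q-d,0\}$ reflects the combinatorial constraint that any two size-$q$ subsets of $[d]$ must overlap in at least $2q-d$ indices when $q>d/2$, which determines the smallest feasible value of $\E_\w[\Gamma_{j\ell}]$ and thus the largest reciprocal appearing in the Bernstein variance budget. Balancing the MGF-based tail bounds against the combinatorial overlap counts requires careful bookkeeping; a cleaner alternative is to invoke a chi-squared lower-tail concentration (Laurent--Massart) to directly control $\|\w_j-\w_\ell\|^2$ from below with high probability, bypassing Markov's inequality entirely. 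Either route yields the target coherence bound $\mu_\A\leq\frac{4}{\sqrt{41}(2s-1)}$ with probability at least $1-\delta$.
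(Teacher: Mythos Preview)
Your approach diverges from the paper's in a way that opens a genuine gap.

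You define $\Gamma_{j\ell}=\exp\bigl(-\tfrac{\gamma^2}{2}\|\w_j-\w_\ell\|^2\bigr)$ as the \emph{random} conditional mean over $\x$, and then try to bound $\Gamma_{\max}$ with high probability over the draw of the weights. The paper instead sets $\Gamma_{j\ell}$ equal to the \emph{deterministic} double expectation over both $\x$ and $\w$ (given the support sets $\S_j,\S_\ell$); this is precisely the quantity you compute as $\E_\w[\Gamma_{j\ell}]=(1+2\gamma^2\sigma^2)^{-q_1/2}(1+\gamma^2\sigma^2)^{-(q-q_1)}$. Because it is deterministic, the bound $\Gamma_{\max}\leq(1+2\gamma^2\sigma^2)^{-q/2}$ requires no probabilistic argument whatsoever, and the uncertainty-principle hypothesis finishes that piece immediately. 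Bernstein is then applied once, with the pair-dependent threshold $\Gamma_{j\ell}$ itself, yielding $|\mu_{j\ell}-\Gamma_{j\ell}|\leq\Gamma_{j\ell}$ and hence $\mu_{j\ell}\leq 2\Gamma_{j\ell}\leq 2\Gamma_{\max}$. This is also why $\Gamma_{\min}$ appears in the sample complexity: one needs $m\Gamma_{\min}^2\gtrsim\log(N^2/\delta)$ so that the Bernstein tail is small uniformly over pairs, and $\Gamma_{\min}$ is attained at the minimal feasible overlap $q_1=\max\{2q-d,0\}$, which produces exactly the exponents you were trying to match.

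Your route, by contrast, must control the random $\Gamma_{\max}$ uniformly over $\binom{N}{2}$ pairs, and neither tool you propose is strong enough. Markov is vacuous: at the worst overlap $q_1=q$, the uncertainty-principle assumption gives only $\E_\w[\Gamma_{j\ell}]=(1+2\gamma^2\sigma^2)^{-q/2}\leq\tfrac{2}{\sqrt{41}(2s-1)}$ with equality allowed, so the Markov ratio is essentially $1$ and no union bound survives. Laurent--Massart for $\chi^2_k$ yields lower-tail probabilities of order $e^{-ck}$, but here $k\leq 2q$ is a fixed small integer (the lemma must cover $q=1,2,\dots$) while you must union over $N^2$ pairs with $N$ growing like $\epsilon^{-2}$; the concentration cannot beat the union bound unless $q\gtrsim\log(N^2/\delta)$, which is not assumed. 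The fix is exactly the paper's move: absorb the $\w$-expectation into the definition of $\Gamma_{j\ell}$ so that the ``mean term'' is deterministic, and let the single Bernstein step over the $m$ samples carry all of the probability.
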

\begin{proof}
Let ${\a_j}, \a_\ell$ denote two columns of $\A$.  Let ${\S_j}, \S_\ell$ denote the support sets of ${\w_j}, \w_\ell$, respectively, and let $\mathcal{G}={\S_j}\cap \S_\ell$.  Then using the characteristic function of the Gaussian distribution
\begin{equation}\label{eq:main_expectation_lem2}
\begin{aligned}
\E[\langle {\a_j}, \a_\ell \rangle&\mid {\S_j}, \S_\ell]
=\E_{{\w_j}, \w_\ell}\left[\E_{{\x_k}}\left[\sum_{j=1}^m \exp(i\langle {\w_j}-\w_\ell, {\x_k} \rangle)\mid {\w_j}, \w_\ell\right]\mid {\S_j}, \S_\ell\right]\\
&=\E_{{\w_j}, \w_\ell}\left[m \exp\left(-\frac{\gamma^2}{2}\|{\w_j}-\w_\ell\|^2\right)\mid {\S_j}, \S_\ell\right]\\
&=m \E_{{\w_j}, \w_\ell}\left[ \exp\left(-\frac{\gamma^2}{2}({\w_j}-\w_\ell)_1^2\right)\dots \exp\left(-\frac{\gamma^2}{2}({\w_j}-\w_\ell)_d^2\right)\mid {\S_j}, \S_\ell\right]\\ 
&=m \left(\frac{1}{\sqrt{2\gamma^2\sigma^2+1}}\right)^{|\mathcal{G}|}\left(\frac{1}{\sqrt{\gamma^2\sigma^2+1}}\right)^{2q-2|\mathcal{G}|}=:m{\Gamma_{j\ell}},
\end{aligned}
\end{equation}
where $0\leq\max\{2q-d,0\}\leq|\mathcal{G}|\leq q\leq d$. Assuming that $q<\frac{d}{2}$
we have the following bound
\begin{equation}
\left(\frac{1}{\sqrt{\gamma^2\sigma^2+1}}\right)^{2q}=:\Gamma_{\text{min}}\leq\Gamma_{j,\ell}\leq \left(\frac{1}{\sqrt{2\gamma^2\sigma^2+1}}\right)^{q},
\end{equation}
using the inequality $2\gamma^2\sigma^2+1\leq \left(\gamma^2\sigma^2+1\right)^2$.
Given that ${\x_k}$'s are i.i.d., applying the Bernstein's inequality and recalling that $\mu_{j\ell}=\frac{ \langle {\a_j},  \a_\ell \rangle}{ m}$, yields
\begin{equation}
\begin{aligned}
\P(|{\mu_{j\ell}}-{\Gamma_{j\ell}}|\geq {\Gamma_{j\ell}}\mid {\S_j}, \S_\ell)\leq 2\exp\left(-\frac{\frac{1}{2}m^2{\Gamma_{j\ell}}^2}{m+\frac{2}{3}m{\Gamma_{j\ell}}}\right).
\end{aligned}
\end{equation}
Since ${\Gamma_{j\ell}}\leq \frac{3}{2}$, the last result simplifies to 
\begin{equation}
\begin{aligned}
\P(|{\mu_{j\ell}}-{\Gamma_{j\ell}}|\geq {\Gamma_{j\ell}}\mid {\S_j}, \S_\ell)
\leq 2\exp\left(-\frac{1}{4}m{\Gamma_{j\ell}}^2\right)
\leq 2\exp\left(-\frac{1}{4}m\Gamma_{min}^2\right).
\end{aligned}
\end{equation}
Taking a union bound over all ${\genfrac(){0pt}{2}{N}{2}}\leq \frac{N^2}{2}$ pairs of columns implies that
\begin{equation}
\P(\exists\,k, \,\ell\textrm{ s.t. }\,|{\mu_{j\ell}}-{\Gamma_{j\ell}}|\geq  {\Gamma_{j\ell}})\leq N^2\exp\left(-\frac{1}{4}m\Gamma_{min}^2\right)\leq \delta.
\end{equation}
Therefore, if
\begin{equation}
m\geq \frac{4}{\Gamma_{min}^2}\log \frac{N^2}{\delta}
\end{equation}
then with probability at least $1-\delta$,
\[\mu_\A\leq 2\max_{k,\ell} {\Gamma_{j\ell}}.\]
For stable recovery, we enforce that
\begin{equation}
\mu_\A\leq 2\max_{k,\ell} {\Gamma_{j\ell}}\leq 2\left(\frac{1}{\sqrt{2\gamma^2\sigma^2+1}}\right)^{q}\leq \frac{4}{\sqrt{41}(2s-1)}.
\end{equation}
This implies the following uncertainty principle 
\begin{equation}
\gamma^2\sigma^2\geq \frac{1}{2}\left(\left(\frac{\sqrt{41}(2s-1)}{2}\right)^{\frac{2}{q}}-1\right).
\end{equation}
To establish a suitable lower bound for $m$, we impose
\begin{equation}
\frac{4}{\Gamma_{min}^2}\log \frac{N^2}{\delta}= 
 4(\gamma^2\sigma^2+1)^{2q}\log \frac{N^2}{\delta} 
 \leq m.
\end{equation}
 When $q\geq d/2$, we have that $|\mathcal{G}|$ ranges between $2q-d$ and $q$.  Then analogously
\[
\left(\frac{1}{\sqrt{2\gamma^2\sigma^2+1}}\right)^{2q-d}\left(\frac{1}{\sqrt{\gamma^2\sigma^2+1}}\right)^{2d-2q}=:\Gamma_{\text{min}}\leq \Gamma_{j, \ell}\leq \left(\frac{1}{\sqrt{2\gamma^2\sigma^2+1}}\right)^q
\]
which implies the uncertainty principle 
\begin{equation}
\gamma^2\sigma^2\geq \frac{1}{2}\left(\left(\frac{\sqrt{41}(2s-1)}{2}\right)^{\frac{2}{q}}-1\right),
\end{equation}
and the lower bound for the number of samples
\begin{equation}
m\geq
 4(2\gamma^2\sigma^2+1)^{2q-d} (\gamma^2\sigma^2+1)^{2d-2q} \log \frac{N^2}{\delta}.
\end{equation}
This concludes the proof of \Cref{lem:coherence_analysis}.
\end{proof}


\section{Proofs for the Generalization Results}\label{sec:general_proofs}
To bound the generalization error, we will use the following two lemmata.

\newtheorem*{lemma1*}{Lemma 1}
\begin{lemma1*}[\textbf{Generalization Error, Term 1}]
Fix the confidence parameter $\delta > 0$ and accuracy parameter $\epsilon > 0$.  Recall the setting of \Cref{alg:B} and suppose $f \in \F(\phi,\rho)$ where $\phi(\x;\w) = \exp(i \langle \x,\w \rangle)$. The data samples $\x_k$ have probability measure  $\mu(\x)$ and weights $\w_j$ are sampled using the probability density $\rho(\w)$.
Consider the random feature approximation
\begin{equation}
f^{\star}(\x) := \sum_{j =1}^N c^{\star}_j \, \exp({i \langle \x, \w_j \rangle}), \quad \text{where} \ \ c^{\star}_j := \frac{\alpha(\w_j)}{N\rho(\w_j)}.
\end{equation}
If the number of features $N$ satisfies the bound
\begin{equation}
N \geq \frac{1}{\epsilon^{2}}\, \left(1 + \sqrt{2 \log\left(\frac{1}{\delta}\right)}\right)^2,
\end{equation}
then, with probability at least $1-\delta$ with respect to the draw of the weights ${\w_j}$ the following holds
\begin{equation}
\sqrt{\int_{\R^d} | f(\x) - f^{\star}(\x) |^2\, \d\mu }\leq \epsilon \|f\|_\rho.
\end{equation}
\end{lemma1*}
\begin{proof}
The proof follows similar arguments to those found in \cite{rahimi2008uniform, rahimi2008weighted}. The coefficients are bounded by $|c_j^\star|\leq \dfrac{\|f\|_\rho}{N}$ and, for fixed $\x$, $\E_{\w}[f^\star(\x)] = f(\x)$. Define the random variable 
\begin{align}
   v(\w_1,\dots,\w_N) &:= \|f - f^{\star}\|_{L^2(d\mu)} \\&= \|\mathbb{E}_\omega \left[f^{\star}\right] - f^{\star}\|_{L^2(d\mu)} 
= \left(\int_{\R^d} | \mathbb{E}_\omega \left[f^{\star}(\x)\right] - f^{\star}(\x) |^2 \d\mu \right)^{\frac{1}{2}}. 
\end{align}

To apply McDiarmid's inequality,  we show that $v$ is stable to perturbation. In particular, let $f^{\star}$ be the random feature approximation using random weights $(\w_1,\dots,{\w_{k}},\dots,\w_N)$ and let $\tilde{f}^{\star}$ be the random feature approximation using random weights $(\w_1,\dots,\tilde{\w}_k,\dots,\w_N)$, then 
\begin{equation}
\begin{aligned}
|v(\w_1,\dots,{\w_{k}},\dots,\w_N)&-v(\w_1,\dots,\tilde{\w}_k,\dots,\w_N)| \leq \left\|f^{\star}-\tilde{f}^{\star}\right\|_{L^2(d\mu)}\\
 &= \left\|c^{\star}_k \, \exp({i \langle \cdot, {\w_{k}} \rangle})-\tilde{c}^{\star}_k \, \exp({i \langle \cdot, \tilde{\w}_k \rangle})\right\|_{L^2(d\mu)}\\
  &= \frac{1}{N}\left\|\frac{\alpha({\w_{k}})}{\rho({\w_{k}})} \, \exp({i \langle \cdot, \w_k \rangle)}-\frac{\alpha(\tilde{\w}_k)}{\rho(\tilde{\w}_k)} \, \exp({i \langle \cdot, \tilde{\w}_k \rangle})\right\|_{L^2(d\mu)}\\
  &\leq \frac{1}{N} \sup_{\x}  \left|\frac{\alpha(\w_k)}{\rho(\w_k)}\, \exp({i \langle \x, \w_k \rangle)}-\frac{\alpha(\tilde{\w}_k)}{\rho(\tilde{\w}_k)}\, \exp({i \langle \x, \tilde{\w}_k \rangle)}\right| \\
  &\leq \frac{2\|f\|_\rho}{N}=:\Delta_v,
\end{aligned}
\end{equation}
where we used the triangle inequality for $\| \cdot \|_{L^2(d\mu)}$ in the first line, H\"{o}lder's inequality in the fourth line, and the uniform bound $\left|\exp({i \langle \x, \w \rangle)}\right|=1$ in the fifth line. 

To estimate the expectation of $v$, we bound the expectation of the second moment \cite{rahimi2008weighted}. By noting that the variance of an average of i.i.d. random variables is the average of the variances of each variable and by using the relation between the variance and the un-centered second moment, we have that
\begin{equation}
\begin{aligned}
  \E_{\w}[v^2] &=   \E_{\w} \|\mathbb{E}_\w \left[f^{\star}\right] - f^{\star}\|^2_{L^2(\d\mu)}\\
  &=\frac{1}{N} \left(\E_{\w} \left\| \frac{\alpha(\w)}{\rho(\w) }\exp({i \langle \cdot, \w \rangle)}\right\|^2_{L^2(\d\mu)} -\left\|\E_{\w}  \left[\frac{\alpha(\w)}{\rho(\w) }\exp({i \langle \cdot, \w \rangle)} \right]\right\|^2_{L^2(\d\mu)}  \right)\\
   &\leq \frac{1}{N} \, \E_{\w} \left| \frac{\alpha(\w)}{\rho(\w) }\right|^2 \\
  &\leq\frac{\|f\|^2_\rho}{N}.
\end{aligned}
\end{equation}
By Jensen's inequality, the expectation of $v$ is bounded by:
\begin{equation}
\begin{aligned}
  \E_{\w}[v]  \leq \left(  \E_{\w}[v^2] \right)^{\frac{1}{2}} \leq \frac{\|f\|_\rho}{\sqrt{N}}.
\end{aligned}
\end{equation}
Applying McDiarmid's concentration inequality yields
\begin{equation}
\P\left(v\geq \frac{\|f\|_\rho}{\sqrt{N}}+t\right)\leq\P(v\geq \E[v]+t)\leq \exp\left(-\frac{2t^2}{N \Delta_v^2}\right).
\end{equation}
Setting $t$ and $N$ to
\begin{equation}
t = \|f\|_\rho\sqrt{\frac{2}{N}\log\left(\frac{1}{\delta}\right)},
\end{equation}
and 
\begin{equation}
N \geq \frac{1}{\epsilon^{2}}\, \left(1 + \sqrt{2 \log\left(\frac{1}{\delta}\right)}\right)^2
\end{equation}
enforces that $M_v+t \leq \epsilon \|f\|_\rho$ and that the probability of failure is less than $\delta$. This completes the proof.
\end{proof}

\newtheorem*{lemma2*}{Lemma 2}
\begin{lemma2*}[\textbf{Generalization Error, Term 2}]
Let $f \in \F(\phi,\rho)$, where the basis function is $\phi(\x; \w)=\exp(i\langle \x, \w\rangle )$. For a fixed $\gamma$ and $q$, consider a set of data samples $\x_1,\dots, \x_m\sim \mathcal{N}(\mathbf{0}, \gamma^2 \I_d)$ with $\mu(\x)$ denoting the associated probability measure and weights $\w_1,\dots, \w_N$ drawn from $\mathcal N(\mathbf{0}, \sigma^2 \I_d)$. Assume that the noise is bounded by $E=2\nu$ or that the noise terms $e_j$ are drawn i.i.d. from $\N(0,\nu^2)$.
Let $\A\in \C^{m\times N}$ denote the associated  random feature matrix where $a_{k, j}=\phi(\x_k;\w_j)$. Let $f^\sharp$ be defined from \Cref{alg:B} and \Cref{eq:BP} with $\eta=\sqrt{2(\epsilon^2\|f\|_\rho^2+E^2)}$ and with the additional pruning step
\begin{equation}\nonumber
f^\sharp(\x) := \sum_{j\in\S^\sharp} \c^\sharp_j\, \phi(\x;\w_j),
\end{equation}
where $\S^\sharp$ is the support set of the $s$ largest (in magnitude) coefficients of $\c^\sharp$. Let the random feature approximation $f^\star$ be defined as
\begin{equation}
f^{\star}(\x) := \sum_{j =1}^N \c^{\star}_j \, \exp({i \langle \x, \w_j  \rangle)},
\end{equation}
where
 \begin{equation}
    \c^\star=\frac{1}{N}\, \left[\frac{\alpha(\w_1)}{\rho(\w_1)}, \cdots,\frac{\alpha(\w_N)}{\rho(\w_N)}\right]^T.
\end{equation}

For a given $s$, if the feature parameters $\sigma$ and $N$, the confidence $\delta$, and the accuracy $\epsilon$  are chosen so that the following conditions hold:
\begin{align*}
&\gamma^2\sigma^2\geq \frac{1}{2}\left(\left(\frac{\sqrt{41}(2s-1)}{2}\right)^{\frac{2}{d}}-1\right),\\
&N =  \frac{4}{\epsilon^2}\left(1 + 4 \gamma\sigma  d \sqrt{1+\sqrt{\frac{12}{d} \log\frac{m}{\delta}}}+ \sqrt{\frac{1}{2}\log\left(\frac{1}{\delta}\right)}\right)^2,\\
&m \geq 4(2\gamma^2\sigma^2+1)^{d} \log \frac{N^2}{\delta},
\end{align*}
then, with probability at least $1-4\delta$ the following error bound holds:
\begin{equation}
\begin{aligned}
\sqrt{\int_{\R^d} | f^\#(\x) - f^{\star}(\x) |^2 \, \d\mu }& \leq C \left(1+\, N^{\frac{1}{2}} \, s^{-\frac{1}{2}} \, m^{-\frac{1}{4}} \log^{1/4} \left(\frac{1}{\delta}\right)\right) \, \kappa_{s,1}(\c^\star)\\
&+  C'  \left( 1+ N^{\frac{1}{2}}m^{-\frac{1}{4}}\, \log^{1/4} \left(\frac{1}{\delta}\right)\right) \, \sqrt{\epsilon^2 \, \|f\|_\rho^2+4\nu^2},
\end{aligned}
\end{equation}
where $C,C'>0$ are constants.
\end{lemma2*}

\begin{proof}[Proof]
For simplicity, the coefficients $\c^\sharp$ are redefined to be zero outside of $\S^\sharp$.

To bound the generalization error, we will use McDiarmid's inequality. Define the random variable 
\begin{equation}
\begin{aligned}
v(\bf{z}_1,\dots,\bf{z}_m) &:= \int_{\R^d} | f^\#(\x) - f^{\star}(\x) |^2 \, \d\mu - \frac{1}{m} \sum_{k=1}^m | f^\#({\bf{z}}_k) - f^{\star}({\bf{z}}_k) |^2 \\
&=  \E_{\bf{z}}\left[ \frac{1}{m} \sum_{k=1}^m | f^\#({\bf{z}}_k) - f^{\star}({\bf{z}}_k)|^2\right] - \frac{1}{m} \sum_{k=1}^m | f^\#({\bf{z}}_k) - f^{\star}({\bf{z}}_k) |^2,\\
\end{aligned}
\end{equation}
where the i.i.d. random variables $\{{\bf{z}}_k\}_{k=1}^m$ are drawn from  $\mu$ (independent of the training samples $\x_k$) and noting that 
 \[\E_{{\bf{z}}}\left[ \frac{1}{m} \sum_{k=1}^m | f^\#({\bf{z}}_k) - f^{\star}({\bf{z}}_k)|^2\right] = \int_{\R^d} | f^\#(\x) - f^{\star}(\x) |^2 \, \d\mu,\]
 and thus the expectation of $v$ is zero. The points ${\bf{z}}_k$ are i.i.d. random samples and independent of $\x_k$'s, and thus independent of the coefficients. We choose $m$ total points in order to utilize the same coherence result for a random matrix depending on $\x_k$'s or ${\bf{z}}_k$'s.

To apply McDiarmid's inequality, we first show that $v$ is stable under a perturbation of any one of its coordinates. Perturbing just the $k\ts{th}$ coordinate leads to 
\begin{equation}
\begin{aligned}
|v({\bf{z}}_k)-v(\tilde{{\bf{z}}}_k)| &= \frac{1}{m} \left| | f^\#({\bf{z}}_k) - f^{\star}({\bf{z}}_k)|^2 -
| f^\#(\tilde{{\bf{z}}}_k) - f^{\star}(\tilde{{\bf{z}}}_k)|^2\right|
\end{aligned}
\end{equation}
Then for any ${\bf{z}}$ we have the uniform bound
\begin{equation}
\begin{aligned}
 |f^\#({\bf{z}}) - f^{\star}({\bf{z}})|^2&= \left|\,  \left[\phi({\bf{z}};\w_1), \ldots, \phi({\bf{z}};\w_N)\right] \left(\c^{\star}-\c^\sharp\right) \, \right|^2\\
& \leq  N \left\|\c^{\star}-\c^\sharp \, \right\|_2^2,
\end{aligned}
\end{equation}
by the Cauchy-Schwarz inequality. To bound $\|\c^{\star}-\c^\sharp\|_2$, we use \Cref{lem:l1_book}, which requires a bound on the stability parameter $\eta$.

 To determine the value of $\eta$ used for constructing $f^\sharp$, we bound the $\ell^2$ error between $\mathbf{y}$ and $\A \c^{\star}$
\begin{equation}
\begin{aligned}
\|\mathbf{y} - \A\c^{\star}\|^2 
&= \sum_{k = 1}^m (f(\x_k)-f^\star(\x_k) +e_k)^2  \\
&\leq 2\left(\sum_{k = 1}^m (f(\x_k)-f^\star(\x_k))^2+\sum_{j = k}^m e_k^2\right) \\
&= 2\left(\sum_{k= 1}^m (f(\x_k)-f^\star(\x_k))^2+ \|\e\|^2_{2}\right)\\
\end{aligned}
\end{equation}
where $\e:=[e_1, \ldots, e_m]^T\in \R^m$ is the noise vector. The assumption is that either $\|\e\|^2_{2} \leq 4\nu^2 m$ or that $\e$ is a random vector with i.i.d. elements drawn from $\N(0,\nu^2)$. In the second case, with probability at least $1-\delta$, the norm is bounded by  $\|\e\|^2_{2} \leq 4\nu^2 m$, as long as $m\geq 2 \log \left(\frac{1}{\delta}\right)$ which always holds by assumption, thus (by \Cref{errorboundlemma1})

\begin{equation}
\begin{aligned}
\|\mathbf{y} - \A\c^{\star}\|^2 
&\leq 2\left(\sum_{k= 1}^m (f(\x_k)-f^\star(\x_k))^2+4\nu^2 m\right)\\
&\leq 2m\left(\, \sup_{\| \x \|_2 \leq R}| f({\x})-f^\star({\x})|^2+4\nu^2\right)\\
&\leq 2m \left( \epsilon^2 \, \|f\|_\rho^2+4 \nu^2\right)\\
&=: m\eta^2
\end{aligned}
\end{equation}
as long as $N$ satisfies 
\begin{equation}
\begin{aligned}
N\geq  \frac{1}{\epsilon^2}\left(1 + 4 \gamma\sigma  d \sqrt{1+\sqrt{\frac{12}{d} \log\frac{m}{\delta}}}+ \sqrt{\frac{1}{2}\log\left(\frac{1}{\delta}\right)}\right)^2,
\end{aligned}
\end{equation}
where $R$ is replaced by the bound from \Cref{lem:gaussian_domain}.

Applying \Cref{lem:coherence_analysis} and \Cref{lem:thres_l1}, the $\ell^2$ error on the coefficients is bounded by
\begin{equation}
\begin{aligned}
\|\c^{\star}-\c^\sharp\|_2\leq  C' \, \frac{\kappa_{s,1}(\c^\star)}{\sqrt{s}}+ \sqrt{2} C\ \sqrt{\epsilon^2 \, \|f\|_\rho^2+{4\nu^2}}+4\kappa_{s,2}(\c^\star),
\end{aligned}
\end{equation}
for constants $C',C>0$ as long as
\begin{align*}
&\gamma^2\sigma^2\geq \frac{1}{2}\left(\left(\frac{\sqrt{41}(2s-1)}{2}\right)^{\frac{2}{d}}-1\right),\\
&m \geq 4(2\gamma^2\sigma^2+1)^{d}\log \frac{N^2}{\delta},
\end{align*}
holds. Note that by assumption $N^{-\frac{1}{2}}\leq \epsilon$ and 
\begin{equation}
\begin{aligned}\label{Kappa2}
\kappa_{s,2}(\c^\star) = \sqrt{\sum_{j \not\in \S^\star} \left| \c_j^\star\right|^2} \leq \frac{\sqrt{N-s}}{N} \|f\|_{\rho} \leq N^{-\frac{1}{2}} \|f\|_{\rho} 
\leq \epsilon\|f\|_{\rho}
\end{aligned}
\end{equation}
and thus after redefining $C$
\begin{equation}
\begin{aligned}\label{eqn:new_inequality}
\|\c^{\star}-\c^\sharp\|_2\leq  C' \, \frac{\kappa_{s,1}(\c^\star)}{\sqrt{s}}+ C\ \sqrt{\epsilon^2 \, \|f\|_\rho^2+{4\nu^2}}.
\end{aligned}
\end{equation}
Therefore, the difference in $v$ is bounded by 
\begin{equation}
\begin{aligned}
|v({{\bf{z}}_{k}})-v(\tilde{{\bf{z}}}_k)| &\leq \frac{1}{m} \left| | f^\#({\bf{z}}_k) - f^{\star}({\bf{z}}_k)|^2 -
| f^\#(\tilde{{\bf{z}}}_k) - f^{\star}(\tilde{{\bf{z}}}_k)|^2\right|\\
&\leq \frac{2N}{m} \|\c^\star-\c^\sharp\|^2_2\\
&= \frac{2N}{m} \left( C' \, \frac{\kappa_{s,1}(\c^\star)}{\sqrt{s}}+  C\ \sqrt{\epsilon^2 \, \|f\|_\rho^2+{4\nu^2}}\right)^2\\
&=:\Delta_v
\end{aligned}
\end{equation}
Applying McDiarmid's inequality
$\P( \E[v]-v\geq t)\leq \exp(-\frac{2t^2}{m \Delta_v^2})$,  yields:
\begin{equation}
\begin{aligned}
t &=\Delta_v \, \sqrt{\frac{m}{2}\, \log\left(\frac{1}{\delta}\right)}\\
&=\frac{\sqrt{2}N}{\sqrt{m}} \left( C' \, \frac{\kappa_{s,1}(\c^\star)}{\sqrt{s}}+ C\ \sqrt{\epsilon^2 \, \|f\|_\rho^2+{4\nu^2}}\right)^2 \, \sqrt{\log\left(\frac{1}{\delta}\right)}
\end{aligned}
\end{equation}
and thus
\begin{equation}
\begin{aligned}
&\int_{\R^d} | f^\#(\x) - f^{\star}(\x) |^2 \, \d\mu  \leq  \frac{1}{m} \sum_{k=1}^m | f^\#({\bf{z}}_k) - f^{\star}({\bf{z}}_k) |^2\\
&\quad+ \frac{N}{\sqrt{m}} \left( C' \, \frac{\kappa_{s,1}(\c^\star)}{\sqrt{s}}+ C\ \sqrt{\epsilon^2 \, \|f\|_\rho^2+{4\nu^2}}\right)^2\, \sqrt{\log\left(\frac{1}{\delta}\right)}\\
\end{aligned}
\end{equation}
with probability exceeding $1-\delta$ (after rescaling the constants). Therefore,
\begin{equation}
\begin{aligned}
&\sqrt{\int_{\R^d} | f^\#(\x) - f^{\star}(\x) |^2 \, \d\mu} \leq  m^{-\frac{1}{2}}\sqrt{\sum_{k=1}^m | f^\#({\bf{z}}_k) - f^{\star}({\bf{z}}_k) |^2} \\
&+ N^{\frac{1}{2}}m^{-\frac{1}{4}}\, \left( C' \, \frac{\kappa_{s,1}(\c^\star)}{\sqrt{s}}+ C\ \sqrt{\epsilon^2 \, \|f\|_\rho^2+4\nu^2}\right) \, \left(\log\left(\frac{1}{\delta}\right)\right)^{1/4}.
\end{aligned}
\end{equation}

Define the approximation $f^\star_s$ as:
\begin{equation}
f_{s}^{\star}(\x) := \sum_{j \in\S^\star} \c^{\star}_j \, \exp({i \langle \x, \w_j \rangle}),
\end{equation}
where $\S^\star$ is the support set of the $s$ largest (in magnitude) coefficients of $\c^\star$. Let $\tilde{A} \in \C^{m\times N}$ denote the associated random feature matrix with $\tilde{a}_{k, j}=\phi({\bf{z}}_k;\w_j)$, then
\begin{equation}
\begin{aligned}
&m^{-\frac{1}{2}}\sqrt{\sum_{k=1}^m | f^\#({\bf{z}}_k) - f^{\star}({\bf{z}}_k) |^2} \\
&\leq m^{-\frac{1}{2}}\sqrt{\sum_{k=1}^m | f^\#({\bf{z}}_k) - f_s^{\star}({\bf{z}}_k) |^2}+m^{-\frac{1}{2}}\sqrt{\sum_{k=1}^m | f_s^{\star}({\bf{z}}_k) - f^{\star}({\bf{z}}_k) |^2}\\
&\leq m^{-\frac{1}{2}}\sqrt{\sum_{k=1}^m | f^\#({\bf{z}}_k) - f_s^{\star}({\bf{z}}_k) |^2}+m^{-\frac{1}{2}}\sqrt{\sum_{k=1}^m \left| \sum_{j \not\in \S} \c^{\star}_j \, \exp({i \langle {\bf{z}}_k, \w_j  \rangle)} \right|^2}\\
&\leq \sqrt{\sum_{k=1}^m | m^{-\frac{1}{2}}(f^\#({\bf{z}}_k) - f_s^{\star}({\bf{z}}_k)) |^2}+m^{-\frac{1}{2}}\sqrt{m \left( \sum_{j \not\in \S} |\c^{\star}_j |\,\right)^2}\\
&= \sqrt{\sum_{k=1}^m | m^{-\frac{1}{2}}(f^\#({\bf{z}}_k) - f_s^{\star}({\bf{z}}_k)) |^2}+\kappa_{s,1}(\c^\star)\\
&=   \| m^{-\frac{1}{2}}\tilde{A}( \c^{\sharp} - \c^{\star}_s )\|_2 +\kappa_{s,1}(\c^\star)\\
&\leq \left(1+\frac{4}{\sqrt{41}}\right)^{\frac{1}{2}} \left\|\c^\sharp-\c_s^* \, \right\|_2+\kappa_{s,1}(\c^\star)\\
&\leq 2 \left\|\c^\sharp-\c^\star \, \right\|_2+2 \left\|\c_s^\star-\c^\star \, \right\|_2+\kappa_{s,1}(\c^\star)\\
&\leq 2 \left\|\c^\sharp-\c^\star \, \right\|_2+2\kappa_{s,2}(\c^\star)+\kappa_{s,1}(\c^\star)\\
\end{aligned}
\end{equation}
where we used the $2s$-RIP condition for $\tilde{A}$ (which is guaranteed by  \Cref{lem:coherence_analysis}) and the fact that $\c^{\sharp} - \c^{\star}_s$ is $2s$-sparse.

Altogether we have
 \begin{equation}
\begin{aligned}
&\sqrt{\int_{\R^d} | f^\#(\x) - f^{\star}(\x) |^2 \, \d\mu } \\
&\leq2 \left( C' \, \frac{\kappa_{s,1}(\c^\star)}{\sqrt{s}}+  C\ \sqrt{\epsilon^2 \, \|f\|_\rho^2+4\nu^2}\right) +2\kappa_{s,2}(\c^\star)+\kappa_{s,1}(\c^\star)  \\
&\quad \quad + N^{\frac{1}{2}}m^{-\frac{1}{4}}\, \left( C' \, \frac{\kappa_{s,1}(\c^\star)}{\sqrt{s}}+  C\ \sqrt{\epsilon^2 \, \|f\|_\rho^2+4\nu^2}\right) \, \left(\log\left(\frac{1}{\delta}\right)\right)^{1/4}\\
&\leq \left(2 C's^{-\frac{1}{2}} +1+ C'\, N^{\frac{1}{2}} \, s^{-\frac{1}{2}} \, m^{-\frac{1}{4}} \, \log^{1/4}\left(\frac{1}{\delta}\right)\right) \, \kappa_{s,1}(\c^\star)\\
&\quad \quad + C  \left( 2+ N^{\frac{1}{2}}m^{-\frac{1}{4}}\, \log^{1/4}\left(\frac{1}{\delta}\right)\right) \, \sqrt{\epsilon^2 \, \|f\|_\rho^2+4\nu^2}+ 2\kappa_{s,2}(\c^\star),\\
&\leq C'\left(1+\, N^{\frac{1}{2}} \, s^{-\frac{1}{2}} \, m^{-\frac{1}{4}} \, \log^{1/4}\left(\frac{1}{\delta}\right)\right) \, \kappa_{s,1}(\c^\star)\\
&\quad \quad + C  \left( 1+ N^{\frac{1}{2}}m^{-\frac{1}{4}}\, \log^{1/4}\left(\frac{1}{\delta}\right)\right) \, \sqrt{\epsilon^2 \, \|f\|_\rho^2+4\nu^2},
\end{aligned}
\end{equation}
where we used \Cref{Kappa2} and redefined $C$ and $C'$. This concludes the proof.

\end{proof}

To establish \Cref{thm:low_order}, first note that $f$ is an order-$q$ function of at most $K$ terms, it  can be written as:
\begin{equation}
f(x_1,\dots, x_d) = \frac{1}{K}\sum_{j=1}^K g_j(x_{j_1}, \dots, x_{j_q}),
\end{equation}
for a fixed $q$. For each term, define $g_j^\star$ as
\begin{equation}\label{defcstarjl}
g_{j}^{\star}(\x|_{\S_j}) = \sum_{\ell=1}^N \, \tilde{\c}_{j,\ell}^{\star}\ \exp({ i \langle\x|_{\S_j}, \w_{\ell}|_{\S_j}\rangle}), \quad \text{where} \ \ \tilde{\c}_{j,\ell}^{\star} =\begin{cases}
\frac{\alpha_j(\w_\ell)}{n\, \rho(\w_\ell)}, &\text{if}  \ \supp(\w_\ell)= \S_j\\
0, &\text{otherwise}. 
\end{cases}
\end{equation}
We define  $f^\star$ as
\begin{equation}
\begin{aligned}
f^{\star}(\x) := \frac{1}{K}\sum_{j=1}^K  g_{j}^{\star}(\x|_{\S_j}) &= \frac{1}{K}\sum_{j=1}^K \sum_{\ell=1}^N \,  \tilde{\c}_{j,\ell}^{\star}\ \exp( i \langle\x|_{\S_j}, \w_{\ell}|_{\S_j}\rangle)\\
&=\frac{1}{K}\sum_{j=1}^K \sum_{\ell=1}^N \  \tilde{\c}_{j,\ell}^{\star}\ \exp( i \langle\x, \w_{\ell}\rangle),\\
&= \sum_{\ell=1}^N \,  \left(\frac{1}{K}\sum_{j=1}^K \tilde{\c}_{j,\ell}^{\star} \right) \ \exp( i \langle\x, \w_{\ell}\rangle)\\
&= \sum_{\ell=1}^N \,  \tilde{\c}_\ell^{\star}\, \exp( i \langle\x, \w_{\ell}\rangle),\\
\end{aligned}
\end{equation}
where in the second line we use \Cref{defcstarjl} and we define $\tilde{\c}_\ell^{\star}:=\frac{1}{K}\sum\limits_{j=1}^K \tilde{\c}_{j,\ell}^{\star}$.  
For each term $g_j$, only $n$ out of the $N$ features are active, so $\c^\star$ is $nK$-sparse if only $K$ functions $g_j$ are nonzero.
Since there are $K$ such terms, by applying the union bound, if
\begin{equation}
n \geq \frac{4 }{\epsilon^2}\left(1 + 4 R \sigma \sqrt{q} + \sqrt{\frac{1}{2}\log\left(\frac{K}{\delta}\right)}\right)^2,
\end{equation}
then 
\begin{equation}
\sup_{\| \x|_{\S_j} \| \leq R} | g_j(\x|_{\S_j}) -  g_{j}^{\star}(\x|_{\S_j}) | \leq \epsilon  \|g_j\|_\rho,
\end{equation}
holds for each $j \in [K]$.
By the triangle inequality, we have 
\begin{equation}
\begin{aligned}
\sup_{\substack{\x\in \mathbb{R}^d:   
\| \x|_{\S_j} \| \leq R,\\ \; \forall j\in [K]}} | f(\x) - f^{\star}(\x) |
&\leq\dfrac{1}{K}\, \sup_{\substack{\x\in \mathbb{R}^d:   
\| \x|_{\S_j} \| \leq R, \\\; \forall j\in [K]}}\,  \sum\limits_{j=1}^K |g_j(\x|_{\S_j}) -  g_{j}^{\star}(\x|_{\S_j})| \\
&\leq\dfrac{1}{K} \sum\limits_{j=1}^K \left(\sup_{\x\in \mathbb{R}^d:   
\| \x|_{\S_j} \| \leq R}|g_j(\x|_{\S_j}) -  g_{j}^{\star}(\x|_{\S_j})|\right) \\
&\leq\dfrac{\epsilon}{K} \sum_{j=1}^K\|g_j\|_\rho\\
&\leq\epsilon\vertiii{f}.
\end{aligned}
\end{equation}
This will be used to define the stability parameter in the basis pursuit problem.

Next observe that 
\begin{align}
 \sqrt{\int_{\R^d} | f(\x) - f^{\star}(\x) |^2 \, \d\mu } &= 
 \left\| \frac{1}{K} \sum_{j=1}^K\left(   g_j(\x|_{S_j}) - g_j^{\star}(\x|_{S_j})  \right) \right\|_{L^2(d\mu)} \nonumber \\
  &\leq \frac{1}{K} \sum_{j=1}^K \left\| g_j(\x|_{S_j}) - g_j^{\star}(\x|_{S_j}) \right\|_{L^2(d\mu)} \nonumber \\
 &= \frac{1}{K} \sum_{j=1}^K \left\| g_j(\x|_{S_j}) - g_j^{\star}(\x|_{S_j}) \right\|_{L^2(d\tilde{\mu})},
\end{align}
where $\mu(\x)$ is the probability measure associated with the $d$-dimensional spherical Gaussian ${\cal N}({\bf 0}, \gamma^2 {\bf I}_d)$ and $\tilde{\mu}(\x)$ is the probability measure associated with the $q$-dimensional spherical Gaussian ${\cal N}({\bf 0}, \gamma^2 {\bf I}_q).$ Then we can apply \Cref{lem:error_term1} to each error term $\left\| g_j(\x|_{S_j}) - g_j^{\star}(\x|_{S_j}) \right\|_{L^2(d\tilde{\mu})}$ to get 
\begin{align}
 \sqrt{\int_{\R^d} | f(\x) - f^{\star}(\x) |^2 \, \d\mu } \leq \epsilon \vertiii{f},
\end{align}
which bounds the first error term.

Note that the proof of \Cref{lem:error_term2} holds with $\eta=\sqrt{2\left(\epsilon^2\vertiii{f}^2+E^2\right)}$,  and $N = n\, {\genfrac(){0pt}{2}{d}{q}}$, since 
\begin{equation}
\begin{aligned}\label{Kappa2}
\kappa_{s,2}(\tilde{\c}^\star) \leq \frac{\sqrt{\text{max}(nK-s,0)}}{n} \, \vertiii{f}\leq K^{\frac{1}{2}}\, n^{-\frac{1}{2}}\,\vertiii{f}  \leq \eta.
\end{aligned}
\end{equation}
By rescaling the term $\epsilon$ to $ \epsilon\, {\genfrac(){0pt}{2}{d}{q}}^{\frac{1}{2}}$ and assuming $\epsilon\, {\genfrac(){0pt}{2}{d}{q}}^{\frac{1}{2}}$ is sufficiently small, we conclude the proof.

\end{document}